\newcommand{\Cov}{\mathrm{Cov}}
\newcommand{\err}{\mathrm{err}}
\newcommand{\polylog}{\mathrm{polylog}}
\newcommand{\leqlog}{\leq_{\log}}
\newcommand{\MMSE}{\mathrm{MMSE}}
\newcommand{\KL}{\mathrm{KL}}
\newtheorem{thm}{Theorem}[section]
\newtheorem{prop}[thm]{Proposition}
\newtheorem{cor}[thm]{Corollary}
\newcommand{\E}{\operatorname{\mathbb{E}}}
\renewcommand{\P}{\operatorname{\mathbb{P}}}
\newcommand{\pa}[1]{\left(#1\right)}
\newcommand{\ac}[1]{\left\{#1\right\}}
\newcommand{\cro}[1]{\left[#1\right]}
\renewcommand{\>}{\rangle}
\newcommand{\1}{\mathbf{1}}
\newcommand{\N}{\mathbb{N}}
\newcommand{\cumul}{\mathop{cumul}}
\begin{document}

\twocolumn[

\aistatstitle{Statistical-computational gap in multiple Gaussian graph alignment}

\aistatsauthor{ Bertrand Even \And Luca Ganassali}

\aistatsaddress{Université Paris-Saclay \\ Laboratoire de mathématiques d’Orsay \\ Orsay, France \And  Université Paris-Saclay, CNRS, Inria \\ Laboratoire de mathématiques d’Orsay \\ Orsay, France}]

\begin{abstract}
    We investigate the existence of a statistical-computational gap in multiple Gaussian graph alignment. We first generalize a previously established informational threshold from \cite{vassaux2025} to regimes where the number of observed graphs $p$ may also grow with the number of nodes $n$: when $p \leq O(n/\log(n))$, we recover the results from \cite{vassaux2025}, and $p \geq \Omega(n/\log(n))$ corresponds to a regime where the problem is as difficult as aligning one single graph with some unknown "signal" graph. Moreover, when \new{$\log p = \omega(\log n)$}, the informational thresholds for partial and exact recovery no longer coincide, in contrast to the all-or-nothing phenomenon observed when \new{$\log p=O(\log n)$}. Then, we provide the first computational barrier in the low-degree framework for (multiple) Gaussian graph alignment.  We prove that when the correlation $\rho$ is less than $1$, up to logarithmic terms, low degree non-trivial estimation fails. Our results suggest that the task of aligning $p$ graphs in polynomial time is as hard as the problem of aligning two graphs in polynomial time, up to logarithmic factors. These results characterize the existence of a statistical-computational gap and provide another example in which polynomial-time algorithms cannot handle complex combinatorial bi-dimensional structures. 
\end{abstract}

\section{Introduction}

% We investigate the problem of Multiple Gaussian Graph Alignment. 
\paragraph{Graph alignment.} Graph alignment (also referred to as graph matching) is a fundamental statistical task which consists in finding an underlying correspondence between graphs, namely node relabelings which preserve most of the edges.
This problem has emerged in the 2000's across diverse application domains, including network privacy (\cite{Narayanan09}), computational biology (\cite{Singh08}), computer vision (\cite{CFVS04}), and natural language processing (\cite{haghighi05}). A common feature across these applications is that the
data naturally admits a graph-based representation: social networks, protein–protein interaction networks, three-dimensional images or meshes, and word or token embeddings.

Graph alignment has since been extensively investigated within the statistics and computer science communities, particularly in the two-graph setting, both for correlated Gaussian graphs \cite{GLM22spectral,ganassali22a,wu22settling,ding2022polynomialtimeiterativealgorithm} and for correlated Erdős–Rényi graphs \cite{fan20a,mao21a,GLM24,mao23}.

More recently, the problem of aligning multiple correlated graphs has attracted growing interest in the literature
(\cite{vassaux2025, ameen2025detectingcorrelationmultipleunlabeled}). 

\paragraph{Set-up.}
Multiple Gaussian graph alignment can be formalized as follows. We observe $p\geq 2$ undirected weighted graphs $G_1,\ldots, G_p$ on the set of edges $E=\binom{[n]}{2}$. We suppose that there exists $\pi^\star=\pa{\pi^\star_1, \ldots, \pi^\star_p}$ drawn uniformly on $\pa{\mathcal{S}_n}^p$, where $\mathcal{S}_n$ stands for the set of permutations of $[n]$, such that the following holds. Conditionally on $\pi^\star$, the edge weights $(G_{je})_{j \in [p], e \in E}$ have $\cN(0,1)$ marginal distribution, with covariance matrix defined for all $(j,e)\neq (j',e')$  by
\begin{equation}\label{eq:def_model}
    \Cov(G_{je}, G_{j'e'})=\rho \1_{\Pi^\star_j(e)=\Pi^\star_{j'}(e')},
\end{equation}
where we write, for $e=\ac{u,u'}$, $$\Pi^\star_j(e)=\ac{\pi^\star_j(u), \pi^\star_{j}(u')}\, ,$$ and where $\rho \in [0,1]$ is the correlation parameter, \new{that is the correlation between the Gaussian signal on aligned edges in \eqref{eq:def_model}. If $\rho=0$, the graphs are independent. If $\rho=1$, the graphs are identical, up to relabeling of the nodes.}

Our goal is to recover the $p$-tuple of permutations $\pi^\star=\pa{\pi^\star_1, \ldots, \pi^\star_p}$. Since the model is identifiable only up to a global permutation of $\pi^\star_1, \ldots,\pi^\star_p$, we are interested in controlling the proportion of misclassified points:
\begin{equation}\label{eq:errorperm}
    \err(\pi,\pi^{\star}) := \min_{\psi \in\mathcal{S}_{n}}\frac{1}{np}\sum_{u=1}^{n}\sum_{j=1}^{p}\1\ac{\psi(\pi_j(u))\neq \pi^{\star}_j(u)}\enspace .
\end{equation} 

Alternatively, the weighted graphs $G_1,\ldots, G_p$ can be sampled as follows. First draw $H_0$ a complete weighted graph with independent $\cN\pa{0,1}$ edges. Then, for all $j \in [p]$ and $e \in E$, set 
\begin{equation}\label{eq:def_model}
    G_{je}=\sqrt{\rho} \, H_{0 \Pi^{\star}_j(e)}+\sqrt{1-\rho}\,  Z_{j e},
\end{equation}
\new{where $H_{0 \Pi^{\star}_j(e)}$ denotes the weight of edge $\Pi^{\star}_j(e)$ in the graph $H_0$}, and where $(Z_{je})_{e \in E}$, $j \in [p]$ are i.i.d. copies of $H_0$.

We say that an estimator $\widehat{\pi}:\mathbb{R}^{E\times p}\to \pa{\mathcal{S}_n}^p$ of $\pi^\star$ achieves:
\begin{itemize}
    \item \emph{perfect recovery} if $\err\pa{\widehat{\pi}, \pi^\star}=0$ with high probability\new{\footnote{\new{an event $\cE$ holds \emph{with high probability} if $\dP(\cE) \to 1$ when $n \to \infty$.}}} when $n \to \infty$, 
    \item \emph{partial recovery} if $\E\cro{\err(\widehat{\pi}, \pi^\star)} \to 0$ when $n \to \infty$,
    \item \emph{non-trivial  recovery} if $\dE[\err\pa{\widehat{\pi}, \pi^\star}]$ does not converge to $1$ when $n \to \infty$. 
\end{itemize}
Note that, since partial recovery does not require any specific rate of convergence of $\E\cro{\err(\widehat{\pi}, \pi^\star)}$, it is a weaker notion than exact recovery. Also note that if $\pi\in (\cS_n)^p$ is taken uniformly at random, then $\dE[\err\pa{\pi, \pi^\star}] \to 1$ when $n \to \infty$. This motivates the 'non-trivial recovery' terminology.

\paragraph{Related work.} 
The problem of aligning two Gaussian graphs has been studied both for characterizing the information-theoretic (IT) threshold of the problem (\cite{ganassali22a,wu22settling}) and for finding polynomial-time algorithms able to recover the single hidden permutation $(\pi^\star_1)^{-1}\circ \pi^*_2$ (\cite{GLM22spectral, fan20a,ding2022polynomialtimeiterativealgorithm}). 
For this problem, it is well known that the IT threshold for non-trivial recovery is exactly 
\begin{equation}\label{eq:IT_cas_p=2}
    \rho=\sqrt{\frac{4\log(n)}{n}}
\end{equation}and that, above this threshold, perfect recovery of $\pi^\star$ is feasible via maximum likelihood estimation.

However, this maximum likelihood estimator is not computable in polynomial time (see \cite{Pardalos94} and \cite{Makarychev14}). The state-of-the-art polynomial time procedure for aligning two correlated Gaussian graphs, developed by \cite{ding2022polynomialtimeiterativealgorithm} requires a non-vanishing correlation $\rho$. This suggests the existence of a statistical-computational gap for the problem of aligning two correlated Gaussian graphs. 

More recently, alignment of multiple correlated Gaussian graphs has been studied by \cite{vassaux2025} and  \cite{ameen2025detectingcorrelationmultipleunlabeled}. In a regime where the number of graphs $p$ is a fixed constant and where $n$ tends to infinity, \new{
\cite{ameen2025detectingcorrelationmultipleunlabeled} studied the related detection problem, namely distinguishing the null hypothesis under which the $p$ graphs are $p$ independent Gaussian graphs, from the alternative where they are correlated according to the model~\eqref{eq:def_model}. The authors derived a sufficient condition
$$\rho \geq \sqrt{\frac{8\log(n)}{(n-1)p}}$$
for reliable detection, as well as a sufficient condition
$$\rho \leq \sqrt{\left(\frac{4}{p-1}-\eps\right)\frac{\log(n)}{n}}$$
for impossibility of detection.}

Then, the IT threshold for non-trivial recovery \new{is proven by \cite{vassaux2025}} to be exactly $$\rho=\sqrt{\frac{8\log(n)}{np}},$$ and here again, above this threshold, perfect recovery of $\pi^\star$ is possible with high probability. In this regime with fixed $p$ and $n\to \infty$, an \emph{all-or-nothing phenomenon} occurs: either non-trivial recovery is impossible, or perfect recovery is achievable.  

\section{Preliminaries}
The case where the number of observed graphs $p$ can also grow with the number of nodes $n$ remains open, and raises the following question: 

\begin{itemize}
    \item[(Q1).] \emph{When the number of observed graphs 
$p$ grows with the number of nodes $n$, what is the information-theoretic threshold for perfect/exact recovery, and does an all-or-nothing phenomenon always occur?} 
\end{itemize}

Evidence for a statistical–computational gap for \ER graph alignment when $p=2$ has been established by recent results (\cite{ding2023lowdegreehardnessdetectioncorrelated, GMStree24, li25}). However, to the best of our knowledge, the existence of a statistical–computational gap for (multiple) Gaussian graph alignment has not been studied. We formalize this in the following question:  

\begin{itemize}
    \item[(Q2).] \emph{Can we provide evidence for the existence of a statistical–computational gap for multiple Gaussian graph alignment?} 
\end{itemize}

\paragraph{The low-degree model of computation.}
In many high dimensional statistical problems, such as sparse PCA \cite{HopkinsFOCS17}, planted clique \cite{Barak19}, clustering \cite{lesieur2016phase, even24, even2025a} or Gaussian graph alignment which is the focus of this paper, state-of-the-art polynomial time algorithms fail to reach the statistical performance provably achievable by algorithms free of computational constraints. These observations have led the community to conjecture the existence of \emph{statistical-computational gaps}, that is gaps between the best polynomial-time performance and the best performance with no computational constraints, for given problems. In light of this conjecture, a cornerstone of research in high-dimensional statistics is to determine rigorous lower bounds for specific classes of algorithms. 

In order to address average-case complexity, lower bounds are obtained for different models of computation, through the S-o-S hierarchy (\cite{HopkinsFOCS17}), the overlap gap property \cite{gam21}, statistical query (\cite{brennan20a}) and low-degree polynomials (\cite{WeinSchramm, hopkins2018statistical, wein2025insight}). 

In this paper, we will be interested in the low-degree model of computation: we consider the class of algorithms which can be written as multivariate polynomials in the observation with degree $D$. The low-degree model of computation has recently attracted considerable attention due to its ability to provide lower bounds matching state-of-the-art upper-bounds for polynomial time algorithms in different models: community detection (\cite{HopkinsFOCS17, sohnwein, carpentier2025lowdegreelowerboundsorthonormal,carpentier2025phasetransitionstochasticblock}), clustering (\cite{even24, even2025a}), and among others, \ER graph matching (\cite{ding2023lowdegreehardnessdetectioncorrelated,li25}). We refer to \cite{wein2025insight} for a recent survey. In this framework, the failure for a given task of polynomials of degree $D = O\pa{\log(T)^{1+\eta}}$, where $T$ is the dimension of the observation, is taken as an evidence for the failure of all polynomial-time algorithms. 

For estimation problems, the seminal work by \cite{WeinSchramm} offers a starting point for analyzing the performance of the best low-degree polynomial of degree $D$. Our analysis heavily builds upon their Theorem 2.2, which provides a formula for the case of Gaussian additive models.

\paragraph{Our contribution.} In this paper, we provide answers to questions (Q1) and (Q2) previously defined. 

For (Q1), we generalize the IT threshold $\rho=\sqrt{\frac{8\log(n)}{np}}$ \cite{vassaux2025} to regimes where $p$ is not necessarily constant. We characterize the informational thresholds for partial and perfect recovery, which are respectively $$\rho \gtrsim \frac{\log(n)}{n}\vee \sqrt{\frac{\log(n)}{np}}\enspace,$$
and $$\rho \gtrsim \frac{\log(np)}{n}\vee \sqrt{\frac{\log(np)}{np}}\enspace,$$
where $\gtrsim$ hides multiplicative constants. Moreover, between these two thresholds, we obtain an exponential decay of the error in terms of $n,p$ and $\rho$.

In particular, when $\log(p) = \omega(\log(n))$, these two thresholds differ significantly (by more than a multiplicative constant), and the all-or-nothing phenomenon established in \cite{vassaux2025} vanishes; there is now a different IT threshold for partial and exact recovery. 

For (Q2), we provide a computational threshold in the low-degree (LD) framework. We prove that low-degree polynomials fail at non-trivial recovery whenever
$$ \rho\leqlog 1,$$ where $\leqlog$ hides logarithmic factors. 

\paragraph{Main message.} These results suggest that, while observing a large number $p \geq 2$ of correlated Gaussian graphs helps information-theoretically in estimating $\pi^\star$, the computational difficulty of the problem remains equivalent up to logarithmic factors to the case $p=2$, that is to that of aligning two correlated Gaussian graphs.

Moreover, if the low degree conjecture is true, performing polynomial time alignment of $p$ Gaussian graphs requires the correlation $\rho$ to be at least of order $\frac{1}{\polylog(n,p)}$.  

Therefore, an optimal method up to some logarithmic factors is to perform pairwise alignment of the $p$ graphs using the algorithm of \cite{ding2022polynomialtimeiterativealgorithm}. In other words, there is no significant gain at a computational level in having access to multiple graphs.
We summarize our results in the phase diagram on \cref{fig:phase_diagram}. 

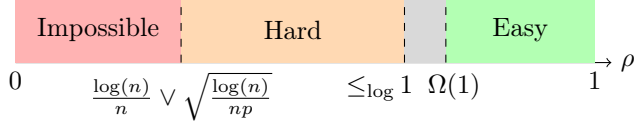
\begin{figure}
\centering
\begin{tikzpicture}[scale=1.1]

% Axis (shorter y-axis)
\draw[->] (0,0) -- (7.2,0) node[right] {$\rho$};
\draw (0,0) -- (0,0.8);

% Axis labels
\node[below] at (0,0) {$0$};
\node[below] at (7,0) {$1$};

% Regions (height = 0.8, gray zone thin again)
\fill[red!30]    (0,0) rectangle (2,0.8);   % Impossible
\fill[orange!30] (2,0) rectangle (4.7,0.8); % Hard
\fill[gray!30]   (4.7,0) rectangle (5.2,0.8); % Transition (thin gray)
\fill[green!30]  (5.2,0) rectangle (7,0.8); % Easy

% Vertical separators
\draw[dashed] (2,0) -- (2,0.8);
\draw[dashed] (4.7,0) -- (4.7,0.8);
\draw[dashed] (5.2,0) -- (5.2,0.8);

% Labels inside bars
\node at (1,0.4) {Impossible};
\node at (3.35,0.4) {Hard};
\node at (6.1,0.4) {Easy};

% Threshold annotations
\node[below] at (2,0) {$\tfrac{\log(n)}{n}\vee \sqrt{\tfrac{\log(n)}{np}}$};
\node[below] at (4.4,-0.05) {$\leqlog\,$1};
\node[below] at (5.3,-0.02) {$\Omega(1)$};

\end{tikzpicture}

\caption{\label{fig:phase_diagram}Statistical-computational landscape of partial recovery in multiple Gaussian graph alignment. The grey region is not fully covered by our results, since our lower bound involve logarithmic factors in $n$ and $p$.}
\end{figure}

\paragraph{Outline of the paper.}
We derive the maximum likelihood estimator and provide upper and lower information-theoretic bounds in Section \ref{sec:informational}, present the computational lower bound in Section \ref{sec:computational}, and discuss these results along with their connections to the literature in Section 4. All proofs are deferred to the appendices, while the main text provides an interpretation of the results. 

\paragraph{Notations.}
Throughout this paper, we adopt the following notations. 
We write $f(n,p) \lesssim g(n,p)$ if there exists a positive constant $c>0$ such that $f(n,p) \leq c g(n,p) $ \new{for all $n,p \geq n_0$ for some constant $n_0 \geq 1$}. We write $f(n,p) \asymp g(n,p)$ if both $f(n,p) \lesssim g(n,p)$ and $f(n,p) \gtrsim g(n,p)$. We write $f(n,p) \leqlog g(n,p)$ \new{if there exist polynomials $P,Q$ independent of everything else such that $f(n,p) \leq \frac{P(\log(n),\log(p))}{Q(\log(n),\log(p))} g(n,p) $} \new{for all $n,p \geq n_0$ for some constant $n_0 \geq 1$}. 
For $n \geq 1$, We write $[n]$ for the set $\set{1, \ldots, n}$, $\mathcal{S}_n$ for the set of permutations of $[n]$, and $N = \binom{n}{2}$ be the number of edges per graph. $E$ denotes the edge set $\binom{[n]}{2}$. 

We denote by $u,v,w,u',v',w' \in [n]$ the nodes of a graph, and by $j,k,\ell \in [p]$ the indices of graph copies. The undirected edges of a graph $G_j$ are denoted by $e,e',f,f' \in E$, and we write $G_{je}$ to refer to a particular edge $e$ of graph $G_j$. 

For each graph $G_j$, we consider node permutations $\pi_j \in \mathcal{S}_n$, and their corresponding edge permutations $\Pi_j \in \mathcal{S}_N$, defined by
$$
\Pi_j(e) = \{\pi_j(u),\pi_j(v)\} \in E \quad \text{for } e = \{u,v\} \in E.
$$
It is important to note that $\Pi_j$ and $\pi_j$ are distinct objects: the former acts on edges in $E$, while the latter acts on nodes in $[n]$. 

We denote by $\P$ the joint law of $(\pi^\star,G_1,\ldots, G_p)$ and, for $\pi\in (\cS_n)^p$, we denote by $\P_{\pi}$ the law of $(\pi^\star,G_1,\ldots, G_p)$ conditionally on $\pi^\star=\pi$. We write $\E$ (resp. $\E_\pi$) for expectation over $\P$ (resp. $\P_\pi$). 

\section{Information barriers for exact and partial recovery}\label{sec:informational}

In this section, we provide the IT thresholds for partial and perfect recovery. In Section \ref{sec:MLE}, we derive the maximum likelihood estimator \new{and introduce useful objects for our analysis.} \new{We analyze its performance in Section \ref{sec:MLE_small_p},} in regimes where $p$ is moderately small (namely $p\lesssim \rho^{-1}$). In Section \ref{sec:generalp}, we propose a different estimator for larger $p$ along with different upper bounds. In Section \ref{sec:inflowerbound}, we provide lower bounds which match the previously obtained upper bounds, for partial and perfect recovery. \new{The combination of those upper bounds and lower bounds fully characterize the information barriers for exact and partial recovery.}

\subsection{Derivation of the maximum likelihood estimator}\label{sec:MLE}

In line with model \eqref{eq:def_model}, define 
\begin{itemize}
    \item $G =(G_{je})_{j \in [p], e \in E}\in \R^{pN \times 1}$, viewed as a column vector.
    \item $H \in \R^{pN \times 1}$ the vector with $p$ copies of $H_0$, that is, for all $j \in [p]$, $e\in E$, ${H}_{je} = H_{e}$.
    \item ${\Pi}^{\star} \in \R^{pN \times pN}$ the aggregated block matrix of the edge permutations, that is, for all $j,j' \in [p]$, $e,e' \in E$, ${\Pi}^{\star}_{je,j'e'} = \1\ac{j=j'}\1\ac{\Pi^{\star}_j(e)= e'}$. 
    \item $G^\star \in \R^{pN \times 1}$ defined as $G^\star = \Pi^\star H$. For all $j \in [p]$, $e\in E$, $G^\star_{je} = H_{ \Pi^\star_{j}(e)}$.
    \item $Z \in \R^{pN \times 1}$ the vector of aggregated noise variables, $Z = (Z_{je})_{j \in [p], e \in E} \sim \cN(0,I_{pN})$.
\end{itemize}
A compact formulation of model \eqref{eq:def_model} is given by
\begin{equation}\label{eq:def_model_matrices}
    G = \sqrt{\rho} \, \Pi^{\star} H + \sqrt{1-\rho} \, Z = \sqrt{\rho} \, G^{\star} + \sqrt{1-\rho} \, Z \, .
\end{equation} Under $\P_{\pi}$, $G$ is a centered Gaussian vector with covariance matrix $\Sigma(\pi) \in \R^{pN \times pN}$ given by
\begin{align*}
    \Sigma(\pi) & = \dE[GG^T] = \rho \Pi \dE[HH^T] (\Pi)^T  + (1-\rho)\dE[ZZ^T]  \\
    & = \Pi [ (\rho J_p + (1-\rho)I_p) \otimes I_N] (\Pi)^T,
% \begin{pmatrix}
% \Pi^{\star (1)} (\Pi^{\star (1)})^T & \cdots & \Pi^{\star (1)} (\Pi^{\star (p)})^T\\
% \vdots & \ddots & \vdots \\
% \Pi^{\star (p)} (\Pi^{\star (1)})^T  & \cdots & \Pi^{\star (p)} (\Pi^{\star (p)})^T 
% \end{pmatrix} 
\end{align*} where $\otimes$ denotes the Kronecker product, using that $\Pi\Pi^T=I_{pN}$. Unsurprisingly, $\Sigma(\pi)$ is invariant under global permutation, just as the law of $G$. 
Note that for $0 \leq \rho <1$, $\Sigma(\pi)$ is invertible with inverse
{\small
\begin{align}\label{eq:inverse_cov}
    & \Sigma^{-1}(\pi)  = \Pi [ (\rho J_p + (1-\rho)I_p)^{-1} \otimes I_N] \Pi^T \nonumber \\ 
    & = \Pi \left[ \left( - \frac{\rho}{(1-\rho)(1+(p-1)\rho)} J_p + \frac{1}{1-\rho}I_p \right) \otimes I_N \right] \Pi^T \nonumber \\ 
    & = - \frac{\rho}{(1-\rho)(1+(p-1)\rho)} \Pi (J_p \otimes I_N) \Pi^T + \frac{1}{1-\rho} I_{pN} \, .
% \begin{pmatrix}
% \Pi^{\star (1)} (\Pi^{\star (1)})^T & \cdots & \Pi^{\star (1)} (\Pi^{\star (p)})^T\\
% \vdots & \ddots & \vdots \\
% \Pi^{\star (p)} (\Pi^{\star (1)})^T  & \cdots & \Pi^{\star (p)} (\Pi^{\star (p)})^T 
% \end{pmatrix} 
\end{align}}
\normalsize

Given the shape of the Gaussian likelihood and since $\det \Sigma(\pi)$ does not depend on $\pi$, the maximum likelihood estimator (MLE) for this problem is given by 
\begin{align}
    \label{eq:MLE}
    \widehat{\pi} & \in \argmin_{\pi \in (\cS_n)^p} \, G^T \Sigma^{-1}(\pi) G \nonumber \\ 
    & = \argmax_{\pi \in (\cS_n)^p} \, G^T\Pi (J_p \otimes I_N) \Pi^T G \, .
\end{align}

For any $\pi \in (\cS_n)^p$, define its \emph{normalized edge alignment matrix} $B(\pi) \in \R^{pN \times pN}$  as 
\begin{equation}\label{eq:def_B}
     [B(\pi)]_{je,j'e'} = \frac{1}{p}\1\ac{\Pi_j(e)=\Pi_{j'}(e')} \, .
\end{equation}
Define $\cB := B((\cS_n)^p)$. Note that $\pi \mapsto B(\pi)$ is a bijection from $(\cS_n)^p / \equiv$ onto $\cB$, where $\pi \equiv \pi'$ in $(\cS_n)^p$ if and only there exists $\phi \in \cS_n$ such that $\pi'_j = \phi \circ \pi_j$ for all $j \in [p]$.  

Note also that any $B \in \cB$ satisfies
\begin{equation*}
    B^T=B, B^2=B, B \1 = \1, \Tr(B)=N\, .
\end{equation*}
and that in \eqref{eq:inverse_cov}, 
\begin{equation}\label{eq:from_B_to_cov}
    \Pi^{\star} (J_p \otimes I_N) (\Pi^{\star})^T = p B^\star, 
\end{equation} where $B^{\star} :=  B(\pi^{\star})$. Finding the MLE in \eqref{eq:MLE} is thus equivalent to solving
\begin{equation}\label{eq:MLE_on_B}
    \widehat{B} = \new{\argmax_{B \in \cB}} \langle GG^T, B \rangle,
\end{equation} and we find back the MLE $\widehat{\pi}$ taking any $\widehat{\pi} \in B^{-1}(\widehat{B})$. 
% \luca{nb : toute cette partie s'adapterait sans souci au cas où le signal $G^{\star}$ n'est plus gaussien, mais où on connaît $\Var(G_0)$, on aurait $\Var(G^{\star}) = J_p \otimes \Var(G_0)$.}

\subsection{\new{Moderately small $p$: exact recovery via the MLE}}\label{sec:MLE_small_p}
In order to prove an IT upper bound, we show that the maximum likelihood estimator in its matrix version \eqref{eq:MLE_on_B} enables to recover $B^\star$, and thereby $\pi^{\star}$, with the desired error guarantees. 
\new{We are able to analyze directly the MLE in the case where $p \lesssim {\rho}^{-1}$, and will address the case $ p \gtrsim  {\rho}^{-1}$ slightly differently in the next section.}

\begin{thm}\label{thm:upperboundIT}
There exists numerical constants $C,c_1,c_2$ and $n_0$ such that the following holds. Whenever $\rho\geq c_1\sqrt{\frac{\log(n)}{np}}$, $p\leq C \rho^{-1}$ and $n\geq n_0$, we have

$$\P\cro{\err(\widehat{\pi}, \pi^{\star})\neq 0}\leq \exp\pa{-c_2np\rho^2},$$ where $\widehat{\pi}$ is the MLE defined in \eqref{eq:MLE}.
\end{thm}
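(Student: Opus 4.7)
The plan is to analyze the matrix MLE $\widehat{B}$ via a likelihood-comparison union bound. Since $B$ encodes $\pi$ up to global permutation, the event $\ac{\err(\widehat\pi,\pi^\star)\neq 0}$ coincides with $\ac{\widehat B \neq B^\star}$, and because $\widehat B$ maximizes $B\mapsto \langle GG^T, B\rangle$ over $\cB$, I would start from
$$\P\pa{\err(\widehat\pi, \pi^\star) \neq 0} \leq \sum_{B \in \cB \setminus \ac{B^\star}} \P\pa{\Delta_B \geq 0}, \qquad \Delta_B := G^T(B-B^\star)G,$$
and control each term using the Gaussian structure of $G$ together with a combinatorial enumeration of $\cB$.

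For the per-alternative estimate, the key observation is that $B$ and $B^\star$ are orthogonal projections with $\tr(B)=\tr(B^\star)=\|B\|_F^2 = \|B^\star\|_F^2 = N$, and $\Sigma(\pi^\star) = p\rho B^\star + (1-\rho)I_{pN}$. Taking expectations yields
$$\E[\Delta_B] = \tr\pa{(B-B^\star)\Sigma(\pi^\star)} = p\rho\pa{\langle B, B^\star\rangle - N} = -\tfrac{p\rho}{2}\|B-B^\star\|_F^2,$$
so that the log-likelihood gap has a strictly negative drift proportional to the Frobenius distance. The regime $p\leq C\rho^{-1}$ enters crucially here: it forces $\|\Sigma(\pi^\star)\|_{\mathrm{op}} = 1+(p-1)\rho \leq 1+C$, so that the Hanson--Wright inequality for the Gaussian quadratic form $\Delta_B$ yields, in the Gaussian tail regime,
$$\P\pa{\Delta_B \geq 0} \leq 2\exp\pa{-c\,p^2\rho^2\,\|B-B^\star\|_F^2},$$
with a constant $c$ uniform over $B$.

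The remaining step is combinatorial. Modulo the global equivalence $\equiv$, one may fix $\pi_1 = \pi^\star_1 = \id$ and stratify alternatives by the discrepancy profile $(k_2,\dots,k_p)$ with $k_j := |\ac{u : \pi_j(u) \neq \pi^\star_j(u)}|$. The number of alternatives with a prescribed profile is at most $\prod_{j=2}^p \binom{n}{k_j}\,k_j! \leq \prod_{j=2}^p n^{k_j}$. A direct calculation relating the edge overlap to node disagreements (each misplaced node $u$ in $\pi_j$ breaks of order $n$ edge alignments incident to $u$) yields $\|B-B^\star\|_F^2 \gtrsim n\sum_j k_j$, at least as long as the $k_j$'s are not too close to $n$. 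Injecting this into the union bound gives
$$\P\pa{\widehat B \neq B^\star} \lesssim \sum_{(k_j)} \exp\pa{\log(n)\,\sum_j k_j \,-\, c'\, p^2\rho^2\, n \sum_j k_j},$$
which is a convergent geometric series as soon as $p^2\rho^2 n \gg \log n$, a consequence of $\rho \gtrsim \sqrt{\log n/(np)}$. The sum is then dominated by the minimal configuration $\sum_j k_j = 2$ (a single transposition in one $\pi_j$), which produces the target rate $\exp(-c_2 np\rho^2)$ after absorbing constants.

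The main obstacle is the combinatorial bookkeeping when some $k_j$'s are large. In that regime $\binom{n}{k_j}k_j!$ can be as large as $n!$, while $\|B-B^\star\|_F^2$ saturates at order $n^2$; one must split the ranges of the $k_j$'s dyadically and check in each regime that the exponent $p^2\rho^2 \|B-B^\star\|_F^2$ dominates the entropy $\sum_j \log(\binom{n}{k_j}k_j!)$. A secondary subtlety is that $\Pi_j(e) = \Pi_{j'}(e')$ need only hold as unordered pairs of nodes, so the lower bound on $\|B-B^\star\|_F^2$ cannot be read off vertex-wise disagreements directly; one has to exploit the stability of the natural action of $\cS_n$ on edges, which is where the bulk of the technical work lies.
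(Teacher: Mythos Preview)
Your per-alternative Hanson--Wright bound
\[
\P(\Delta_B\ge 0)\le 2\exp\bigl(-c\,p^2\rho^2\,\|B-B^\star\|_F^2\bigr)
\]
is correct, and the condition $p\le C\rho^{-1}$ indeed keeps $\|\Sigma(\pi^\star)\|_{\mathrm{op}}=1+(p-1)\rho$ bounded. The gap is in the combinatorial step: the claimed lower bound $\|B-B^\star\|_F^2\gtrsim n\sum_j k_j$ is false by a factor of order $p^2$. Take $\pi^\star=\id$, $\pi_1=\id$, and $\pi_2=\cdots=\pi_p=\sigma$ for some permutation $\sigma$ with $k$ non-fixed points. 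Then $\sum_j k_j=(p-1)k$, while a direct count (only the $p{-}1$ off-diagonal blocks $(1,j)$ and $(j,1)$ contribute, each with $\asymp nk$ nonzero entries of size $1/p$) gives $\|B-B^\star\|_F^2\asymp nk/p$. Hence the ratio $\|B-B^\star\|_F^2\big/\bigl(n\sum_j k_j\bigr)\asymp 1/p^2$. Using the correct uniform lower bound $\|B-B^\star\|_F^2\gtrsim \tfrac{n}{p^2}\sum_j k_j$ in your union bound yields the condition $n\rho^2\gtrsim\log n$, which is off from the target $np\rho^2\gtrsim\log n$ by a full factor $p$. (There is also an internal inconsistency: plugging $\sum_jk_j=2$ into your own displayed exponent gives $\exp(-c'np^2\rho^2)$, not $\exp(-c_2np\rho^2)$.)

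The underlying reason your stratification fails is that fixing $\pi_1=\id$ as representative inflates $\sum_j k_j$ without increasing $\|B-B^\star\|_F^2$ when the $\pi_j$'s are mutually close but all far from $\id$. What governs $\|B-B^\star\|_F^2$ is not $\sum_j k_j$ but rather $\delta_B:=\|B^\star-B^\star B\|_1$, which satisfies $\|B-B^\star\|_F^2=\delta_B/p$ and $\delta_B\asymp n\,\|A^\star-A^\star A\|_1$; the latter is controlled by the \emph{error-minimizing} representative, i.e.\ by $np\cdot\err(\pi,\pi^\star)$, not by the raw disagreement count with $\pi_1=\id$ fixed. The paper therefore stratifies directly by $t\asymp\|A^\star-A^\star A\|_1$ (sets $\cB_t$), proves the entropy bound $\log|\cB_t|\lesssim t\log n$, and matches it against an exponent $\asymp np\rho^2\,t$, which gives exactly $np\rho^2\gtrsim\log n$. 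If you insist on a per-alternative union bound, switching to this stratification (and using $\|B-B^\star\|_F^2\asymp tn/p$ on $\cB_t$) does salvage your argument; the paper instead decomposes $\langle GG^T,B-B^\star\rangle$ into signal, cross and noise terms and controls each uniformly, but the crucial ingredient is the same correct complexity measure $\delta_B$.
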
 

\new{The above Theorem is proved in Appendix \ref{prf:upperboundIT}. We remark that the probability in the right-hand side of the inequality of \cref{thm:upperboundIT} is upper-bounded by $n^{-c_1c_2}=o(1)$, hence the feasibility of exact (and partial) recovery under these assumptions.}
In this regime, we recover the threshold from \cite{vassaux2025}, up to some multiplicative numerical constant, which we generalize to higher values of $p$. 

\new{Note also that under the assumptions of \cref{thm:upperboundIT}, $p \lesssim \rho^{-1} \lesssim \sqrt{\frac{np}{\log(n)}}$, which gives $p \lesssim \frac{n}{\log(n)}$. We will refer to this as the \emph{moderately small-$p$} regime.}

\subsection{\new{For larger $p$: partial recovery upper bound}}\label{sec:generalp}
\new{When $ p \gtrsim  {\rho}^{-1}$, the approach relies on computing the maximum likelihood estimator for the first $p'$ graphs, with  $p' \asymp {\rho}^{-1}$, and aligning the rest of the graphs pairwise with the aggregated signal on the first $p'$ graphs.}

Namely, we build an estimator $\hat{\pi} \in (\cS_n)^p$ with the following procedure.

\begin{enumerate}
    \item Let $p'=C\rho^{-1}$, where $C$ is determined by Theorem \ref{thm:upperboundIT}, set $(\hat{\pi}_1, \ldots, \hat{\pi}_{p'})$ to be the MLE of $(G_1, \ldots,G_{p'})$ as defined in \eqref{eq:MLE}. Let $\widehat G'$ be the graph defined by $$\widehat G'_{e}=\frac{1}{p'}\sum_{j\leq p'}G_{j \pa{\widehat\Pi_j}^{-1}(e)}\enspace ;$$
    \item For all $j>p'$, compute the remaining $\widehat{\pi}_j$ aligning $G_j$ with $\widehat G'$ with the MLE estimator \eqref{eq:MLE} on $(\widehat G',G_j)$.
    \item Return the estimator 
    \begin{equation}\label{eq:tilde_pi}
        \widehat \pi = (\widehat{\pi}_1, \ldots, \widehat{\pi}_{p}) \, .
    \end{equation}
\end{enumerate}

Theorem \ref{thm:upperboundIT} suggests that, for $\rho$ large enough, $(\hat{\pi}_1, \ldots, \hat{\pi}_{p'})$ will recover exactly $(\pi^\star_1,\ldots, \pi^\star_{p'})$ with high probability and thus that $\widehat G'$ will be isomorphic to the graph $G'$ defined by  $$G'_{e}=\frac{1}{p'}\sum_{j\leq p'}G_{j \pa{\Pi^\star_j}^{-1}(e)}\enspace.$$ Then, appealing again to Theorem \ref{thm:upperboundIT}, one can control the probability of recovering $\pi^\star_j$, for $j>p'$, by aligning $G_j$ with $\widetilde G'$. We deduce the next theorem, proved in \cref{prf:upperboundITgeneralp}.

\begin{thm}\label{thm:upperboundITgeneralp}
    Let $C$ be the constant defined in \cref{thm:upperboundIT}. There exists numerical constants $c_1, c_2$ and $n_0$ such that the following holds. Whenever $\rho\geq c_1\pa{\frac{\log(n)}{n}\vee \sqrt{\frac{\log(n)}{np}}}$, \new{$p > C \rho^{-1}$} and $n\geq n_0$, we have $$\E\cro{\err(\hat{\pi}, \pi^\star)}\leq \exp\pa{-c_2 n\rho}\enspace,$$ 
    where $\widehat{\pi}$ is defined in \eqref{eq:tilde_pi}.
\end{thm}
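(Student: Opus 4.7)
The plan is to invoke \cref{thm:upperboundIT} twice, mirroring the two-step construction of $\widehat\pi$, and then to bound the expected error by splitting on the success of the first stage.

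For the first stage, I would instantiate \cref{thm:upperboundIT} on the $p'=\lceil C\rho^{-1}\rceil$ first graphs with correlation $\rho$: the moderation condition $p'\leq C\rho^{-1}$ holds by construction, and the signal condition $\rho\geq c_1\sqrt{\log(n)/(np')}$ reduces to $\rho\gtrsim \log(n)/n$ --- precisely the new term appearing in the hypothesis of the present theorem. This yields that the event $\cE_1:=\ac{\widehat\pi_j=\psi\circ\pi^\star_j \text{ for all } j\leq p' \text{ and some common } \psi\in\cS_n}$ holds with probability at least $1-\exp(-c_2np'\rho^2)=1-\exp(-c_2Cn\rho)$.

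For the second stage, the key observation is that on $\cE_1$, $\widehat G'$ is isomorphic (via $\psi$) to $G'_e=\sqrt{\rho}H_e+\sqrt{(1-\rho)/p'}\,\bar Z_e$, where $\bar Z\sim \cN(0,I_N)$ is independent of $H$ and of $\{Z_j\}_{j>p'}$. Setting $\sigma^2:=\rho+(1-\rho)/p'$ and $\widetilde G':=G'/\sigma$, the pair $(\widetilde G',G_j)$ has unit-variance Gaussian marginals with $\Cov(\widetilde G'_e,G_{j,e'})=\rho_{\text{eff}}\cdot \1\ac{e=\Pi^\star_j(e')}$, where $\rho_{\text{eff}}:=\rho/\sigma$. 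A short computation with $p'=\lceil C\rho^{-1}\rceil$ yields $\rho_{\text{eff}}^2\gtrsim \rho$, so $(\widetilde G',G_j)$ fits the two-graph Gaussian alignment model \eqref{eq:def_model} with effective correlation $\rho_{\text{eff}}\gtrsim \sqrt\rho$. Applying \cref{thm:upperboundIT} with $p=2$ and correlation $\rho_{\text{eff}}$ (the signal condition again reduces to $\rho\gtrsim \log(n)/n$), and letting $\cE_j:=\ac{\widehat\pi_j=\psi\circ\pi^\star_j}$ for $j>p'$, we obtain $\P(\cE_j^c)\leq \exp(-2c_2n\rho_{\text{eff}}^2)\leq \exp(-c'n\rho)$.

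Finally, on $\cE_1$ each failed index $j>p'$ contributes at most $1/p$ to $\err(\widehat\pi,\pi^\star)$, because the global-permutation minimization in \eqref{eq:errorperm} can be realized by the $\psi$ of the first stage. Combining,
\begin{align*}
\E[\err(\widehat\pi,\pi^\star)] &\leq \P(\cE_1^c)+\frac{1}{p}\sum_{j>p'}\P(\cE_j^c)\\
&\leq \exp(-c_2Cn\rho)+\exp(-c'n\rho)\leq \exp(-c_2'n\rho),
\end{align*}
as claimed. The main obstacle lies in the second stage, namely identifying the correct marginal normalization $\sigma$ and verifying that the pair $(\widetilde G',G_j)$ genuinely fits the two-graph Gaussian alignment model with the claimed effective correlation $\rho_{\text{eff}}\gtrsim \sqrt\rho$, so that \cref{thm:upperboundIT} applies as a black box; this boils down to tracking the shared signal $H$ through both the averaged reference and the new graph, and using that the noise $\bar Z$ in $\widetilde G'$ is independent of the noise $Z_j$ in $G_j$ for $j>p'$.
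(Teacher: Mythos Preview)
Your proposal is correct and follows essentially the same two-stage strategy as the paper: apply \cref{thm:upperboundIT} to the first $p'\asymp C\rho^{-1}$ graphs to obtain exact recovery with failure probability $\exp(-\Theta(n\rho))$, then for each $j>p'$ align $G_j$ with the averaged reference graph, where the effective correlation is $\asymp\sqrt{\rho}$, and apply \cref{thm:upperboundIT} again with $p=2$. Two cosmetic remarks: you want $p'=\lfloor C\rho^{-1}\rfloor$ (not the ceiling) for the moderation condition $p'\leq C\rho^{-1}$ to hold literally; and the bound $\P(\cE_j^c)\leq\exp(-c'n\rho)$ is cleanest if $\cE_j$ is defined via the \emph{oracle} MLE on $(G',G_j)$ rather than on $(\widehat G',G_j)$, so that the bound holds unconditionally and the identification $\widehat\pi_j=\psi\circ\widehat\pi_j^{\mathrm{oracle}}$ on $\cE_1$ closes the argument --- this is exactly how the paper organizes it.
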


\new{The probability in the right-hand side of the inequality of \cref{thm:upperboundITgeneralp} is upper-bounded by $n^{-c_1c_2}=o(1)$, hence the feasibility of partial recovery under these assumptions.}

Combining \cref{thm:upperboundIT} and \cref{thm:upperboundITgeneralp}, we deduce that partial recovery is possible with high probability whenever $$\rho\gtrsim \pa{\frac{\log(n)}{n}\vee \sqrt{\frac{\log(n)}{np}}}\enspace.$$

When the correlation is even higher, namely when 
\begin{equation}
    \rho \gtrsim \frac{\log(np)}{n}\vee \sqrt{\frac{\log(np)}{np}}
\end{equation} by a large enough constant, observe that 
\begin{itemize}
    \item \new{the probability of error $\P\cro{\err(\widehat{\pi}, \pi^{\star})\neq 0}$ in \cref{thm:upperboundIT} drops below $1/(np)^2$},
    \item \new{the expectation $\E\cro{\err(\hat{\pi}, \pi^\star)}$ in \cref{thm:upperboundITgeneralp} drops below $1/(np)^3$, and by Markov's inequality, 
    \begin{align*}
        \P\cro{\err(\widehat{\pi}, \pi^{\star})\neq 0} & = \P\cro{\err(\widehat{\pi}, \pi^{\star})\geq 1/(np)} \\ & \leq 1/(np)^2 \, .
    \end{align*}}
\end{itemize} 
\new{Since the MLE is always Bayes-optimal for perfect recovery, we deduce from the above that the MLE achieves perfect recovery with high probability when $\rho$ is large enough\footnote{\new{Theorems \ref{thm:upperboundIT} and \ref{thm:upperboundITgeneralp} show that only $\rho \gtrsim \frac{\log(np)}{n}\vee \sqrt{\frac{\log(n)}{np}}$ is needed for exact recovery. Our statement is equivalent since $\frac{\log(np)}{n}\vee \sqrt{\frac{\log(n)}{np}} \asymp \frac{\log(np)}{n}\vee \sqrt{\frac{\log(np)}{np}}$}.}.} 

\begin{cor}\label{cor:MLEperfect}
    There exists numerical constants $c_1, c_2$ and $n_0$ such that the following holds. Whenever $\rho\geq c_1\pa{\frac{\log(np)}{n}\vee \sqrt{\frac{\log(np)}{np}}}$ and $n\geq n_0$, we have, with probability at least $1-1/(np)^2$, $$\err(\widehat{\pi}, \pi^\star)=0 \, ,$$ where $\widehat{\pi}$ is the MLE defined in \eqref{eq:MLE}.
\end{cor}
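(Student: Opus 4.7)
The plan is to combine \cref{thm:upperboundIT} and \cref{thm:upperboundITgeneralp} with the strengthened correlation threshold, and then to transfer the resulting bound to the MLE via its Bayes-optimality for the 0-1 loss associated with exact recovery. I would split the argument according to whether $p$ is small ($p \leq C\rho^{-1}$, with $C$ the constant of \cref{thm:upperboundIT}) or large ($p > C\rho^{-1}$).

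\emph{Small-$p$ regime.} Here \cref{thm:upperboundIT} applies directly to the MLE $\widehat{\pi}$. Under $\rho \geq c_1 \sqrt{\log(np)/(np)}$, we have $np \rho^2 \geq c_1^2 \log(np)$, and hence, writing $c$ for the constant of \cref{thm:upperboundIT},
\[
\P[\err(\widehat{\pi},\pi^\star) \neq 0] \leq \exp(-c\, np\rho^2) \leq (np)^{-c_1^2 c},
\]
which is at most $(np)^{-2}$ once $c_1$ is chosen large enough.

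\emph{Large-$p$ regime.} Now \cref{thm:upperboundIT} is no longer applicable, but \cref{thm:upperboundITgeneralp} provides the auxiliary estimator $\widetilde{\pi}$ of \eqref{eq:tilde_pi} satisfying $\E[\err(\widetilde{\pi},\pi^\star)] \leq \exp(-c\, n\rho)$. Under $\rho \geq c_1 \log(np)/n$, this bound is at most $(np)^{-c_1 c}$. Since $\err(\cdot,\pi^\star)$ takes values in $\{0, 1/(np), \ldots, 1\}$, the event $\{\err \neq 0\}$ equals $\{\err \geq 1/(np)\}$, and Markov's inequality gives
\[
\P[\err(\widetilde{\pi},\pi^\star)\neq 0] \leq np \cdot \E[\err(\widetilde{\pi},\pi^\star)] \leq (np)^{1 - c_1 c},
\]
which is at most $(np)^{-2}$ for $c_1$ large enough. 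To transfer this from $\widetilde{\pi}$ to the MLE $\widehat{\pi}$, I would invoke Bayes-optimality: since $\pi^\star$ is uniform on $(\cS_n)^p$ and $\det \Sigma(\pi)$ does not depend on $\pi$, the quotient-space MLE of \eqref{eq:MLE_on_B} coincides with the MAP estimator of $[\pi^\star] \in (\cS_n)^p/\equiv$, and hence minimizes $\P[\widehat{B} \neq B^\star]$ among all estimators. Because $B$ is a bijection between $(\cS_n)^p/\equiv$ and $\cB$, the event $\{\widehat{B} \neq B^\star\}$ coincides with $\{\err(\widehat{\pi},\pi^\star) \neq 0\}$, so the MLE inherits the $(np)^{-2}$ bound already established for $\widetilde{\pi}$.

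The only non-routine step is this Bayes-optimality identification: the MLE of \eqref{eq:MLE} selects an arbitrary representative within an equivalence class, while exact recovery must be phrased on $(\cS_n)^p/\equiv$. Once one notes that the likelihood is constant on each equivalence class (equivalently, $\Sigma(\pi)$ depends only on $[\pi]$), the standard MAP-versus-0-1-loss argument applies verbatim. Everything else reduces to plugging the corollary's hypothesis into the exponential bounds of \cref{thm:upperboundIT,thm:upperboundITgeneralp} and converting an expected-error bound into a probability of non-exact-recovery via the quantization of $\err$ and Markov's inequality.
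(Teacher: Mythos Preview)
Your proposal is correct and follows essentially the same route as the paper: split according to $p \leq C\rho^{-1}$ versus $p > C\rho^{-1}$, plug the strengthened hypothesis $\rho \gtrsim \log(np)/n \vee \sqrt{\log(np)/(np)}$ into the exponential bounds of \cref{thm:upperboundIT} and \cref{thm:upperboundITgeneralp}, use Markov on the quantized error in the large-$p$ case, and invoke Bayes-optimality of the MLE (which the paper states in one line) to pass from the auxiliary estimator $\widetilde{\pi}$ to $\widehat{\pi}$. Your treatment of the Bayes-optimality step—identifying the MLE with the MAP on $(\cS_n)^p/\!\equiv$ and noting $\{\widehat{B}\neq B^\star\}=\{\err(\widehat{\pi},\pi^\star)\neq 0\}$—is in fact more explicit than the paper's.
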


\begin{remark}[On the exponential decay] 
    Theorems \ref{thm:upperboundIT} and \ref{thm:upperboundITgeneralp} show in particular that the error \new{$\E\cro{\err(\widehat{\pi}, \pi^\star)}$} exhibits an exponential decay with respect to $n\pa{\rho\wedge p\rho^2}$.
Such exponential behavior is reminiscent of what is observed in several other high-dimensional inference 
problems, including clustering (\cite{ndaoud2022sharp, giraud2019partial, even24}), multiple feature 
matching (\cite{even2025b}), and supervised classification (\cite{giraud2019partial}).
\end{remark}

\subsection{Informational lower bounds}\label{sec:inflowerbound}

The next result, proved in Appendix \ref{prf:lowerboundinf}, provides an information-theoretic lower bound for partial recovery.

\begin{thm}\label{thm:lowerboundinf}
    There exists numerical constants $n_0,c,c' >0$ such that the following holds. If $n\geq n_0$ and $\rho\leq c\pa{\frac{\log(n)}{n}\vee \sqrt{\frac{\log(n)}{np}}}$, then $$\inf_{\widehat{\pi}}\dE[\err(\widehat{\pi}(G), \pi^{\star})]\geq c',$$ where the infimum is taken over all measurable functions $\widehat{\pi}:\mathbb{R}^{E\times p}\to \pa{\mathcal{S}_n}^p$.
    Thus, partial recovery is impossible.
\end{thm}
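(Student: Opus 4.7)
The condition on $\rho$ is the maximum of two terms, so the natural plan is to handle each regime separately and to take the worse of the two lower bounds.

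\emph{Regime I: $\rho \lesssim \log(n)/n$ (large $p$).} I would rely on an oracle argument: reveal the latent graph $H_0$ from the representation \eqref{eq:def_model} to the statistician. Since extra information can only decrease the Bayes risk, any lower bound in the oracle problem transfers to our setting. Conditionally on $H_0$, the $G_j$'s are mutually independent, and for each $j$ the problem of estimating $\pi^\star_j$ from $(G_j,H_0)$ is exactly an instance of two-Gaussian-graph alignment between $G_j$ and $H_0$, with aligned-edge correlation $\sqrt{\rho}$ (and $H_0$ playing the role of the second graph). By the IT threshold \eqref{eq:IT_cas_p=2}, when $\sqrt{\rho} \leq (1-\eps)\sqrt{4\log(n)/n}$, i.e.\ $\rho \leq c\log(n)/n$, non-trivial recovery of each individual $\pi^\star_j$ is impossible, so every estimator $\widehat{\pi}_j$ satisfies $\E[d_H(\widehat\pi_j,\pi^\star_j)]/n \to 1$. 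It then remains to promote this per-coordinate failure to joint partial recovery failure: setting $\sigma_j := \pi^\star_j\circ\widehat\pi_j^{-1}$, one rewrites $\err(\widehat\pi,\pi^\star) = 1 - \max_{\psi\in\cS_n}\tfrac{1}{np}\sum_j \mathrm{fp}(\psi^{-1}\sigma_j)$ and controls this maximum by a union bound over $\psi\in\cS_n$ combined with Poisson-type tails of $\mathrm{fp}$, which is feasible precisely because $p = \Omega(n/\log n)$ in this regime.

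\emph{Regime II: $\rho \lesssim \sqrt{\log(n)/(np)}$ (moderate $p$).} Here I would go through a packing plus Fano argument directly on the multiple-graph model. Since $\det\Sigma(\pi)$ does not depend on $\pi$, for any $\pi_0,\pi_1$ the Gaussian formula gives $\KL(\P_{\pi_0}\|\P_{\pi_1}) = \tfrac{1}{2}\tr\!\bigl((\Sigma(\pi_1)^{-1}-\Sigma(\pi_0)^{-1})\Sigma(\pi_0)\bigr)$, which via \eqref{eq:inverse_cov} and the identities $\Pi(J_p\otimes I_N)\Pi^T = pB(\pi)$, $B^2=B$, $B\1=\1$, $\tr(B)=N$ collapses into a combinatorial trace depending only on the overlap $\tr(B(\pi_0)B(\pi_1))$; a short computation yields $\KL \asymp \tfrac{p^2\rho\cdot\alpha}{2}(N - \tr(B(\pi_0)B(\pi_1)))$ with $\alpha = \rho/[(1-\rho)(1+(p-1)\rho)]$. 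The plan is then to build a packing $\cF\subset(\cS_n)^p$ of tuples, pairwise at Hamming distance $\Theta(np)$ (so that misidentification translates into a constant-fraction error), and for which the pairwise KL is of order $np\rho^2$; Fano's inequality on $\cF$ then gives $\E[\err]\geq c'$ as soon as $np\rho^2 \lesssim \log n$ with a sufficiently small constant.

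\emph{Main obstacle.} The delicate part is the packing design in Regime II: the perturbations $\pi_0 \to \pi_1$ must be chosen so that (i) the trace $\tr((\Sigma(\pi_1)^{-1}-\Sigma(\pi_0)^{-1})\Sigma(\pi_0))$ factorises cleanly across graph indices $j\in[p]$ through the Kronecker block pattern of \eqref{eq:inverse_cov} and scales linearly in the per-coordinate Hamming perturbation, and (ii) the pairwise Hamming distance and $\log|\cF|$ both stay of order $np$, so that Fano's bound really translates into a constant-fraction lower bound on $\err$ rather than the weaker $O(1/n)$ bound one gets from a packing of size $\mathrm{poly}(n)$. Regime I is conceptually cleaner but still requires the combinatorial bookkeeping above to turn per-coordinate impossibility into joint partial-recovery impossibility.
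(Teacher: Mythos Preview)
Your Regime~I argument has a genuine gap in the ``promotion'' step. After revealing $H_0$, you have $p$ independent two-graph problems and per-coordinate impossibility tells you $\E[\mathrm{ov}(\psi\widehat\pi_j,\pi^\star_j)]\to 0$ for each \emph{fixed} $\psi$. But to control $\max_\psi \tfrac{1}{p}\sum_j\mathrm{ov}(\psi\widehat\pi_j,\pi^\star_j)$ by a union bound over the $n!$ choices of $\psi$, you need concentration with tails of order $e^{-c\,n\log n}$; Hoeffding over $p$ bounded i.i.d.\ summands gives only $e^{-cp}$, and in this regime $p\asymp n/\log n\ll n\log n$. The ``Poisson-type tails of $\mathrm{fp}$'' do not help either, since the $\sigma_j=\pi^\star_j\circ\widehat\pi_j^{-1}$ are not uniform permutations---their law depends on the estimator and the data in a way that the impossibility result alone does not control.

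Your Regime~II plan is on the right track, and in fact your own KL formula $\KL\asymp\tfrac{p^2\rho\alpha}{2}(N-\tr(B_0B_1))$ with $\alpha=\rho/[(1-\rho)(1+(p-1)\rho)]$ already covers \emph{both} regimes: the $1+(p-1)\rho$ in the denominator interpolates between $p\rho\ll 1$ (giving $\KL\asymp n^2p^2\rho^2$) and $p\rho\gg 1$ (giving $\KL\asymp n^2p\rho$), so a single Fano argument suffices and the split into two regimes is unnecessary. Two caveats: your claimed scaling ``pairwise $\KL$ of order $np\rho^2$'' is off by a factor $np$ (for a full-tuple constant-fraction perturbation $N-\tr(B_0B_1)\asymp n^2$); and a packing in $(\cS_n)^p$ well-separated for the $\err$ metric is delicate because of the $\min_\psi$---e.g.\ $(\id,\ldots,\id)$ and $(\sigma,\ldots,\sigma)$ are at Hamming distance $np$ yet have $\err=0$. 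The paper sidesteps both issues by a different oracle: instead of revealing $H_0$, it reveals $\pi^\star_1=\cdots=\pi^\star_{p-1}=\id$ and reduces $\err$ to the plain Hamming loss for estimating the single remaining permutation $\pi^\star_p$. A standard packing of $\cS_n$ of log-size $\Theta(n\log n)$ with pairwise Hamming $\geq 3/8$ then suffices, and the one-coordinate KL $\lesssim n^2\rho^2(p-1)/[(1-\rho)(1+(p-1)\rho)]$ compared against $n\log n$ gives both thresholds at once.
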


Theorems \ref{thm:upperboundIT}, \ref{thm:upperboundITgeneralp} and \ref{thm:lowerboundinf} show that the estimator defined in \eqref{eq:tilde_pi} is optimal up to some numerical constant and that the IT threshold for partial recovery is 
\begin{equation}\label{eq:IT_partial}
    \rho\gtrsim \frac{\log(n)}{n}\vee \sqrt{\frac{\log(n)}{np}}\enspace.
\end{equation}
Let us now give some intuition for why the IT lower bound \eqref{eq:IT_partial} takes this particular form. Consider a genie-aided version of the problem in which we are given $\pi_2^\star = \cdots = \pi_p^\star = \mathrm{id}$ and we seek to recover $\pi_1^\star$. A sufficient statistic for estimating $\pi_1^\star$ is $(G_1, \overline{G})$, where 

$$\overline{G}_e = \frac{1}{p-1} \sum_{j=2}^p G_{je} \, .$$

In these two graphs, the edge-weight variances are $1$ and $\tfrac{1+(p-2)\rho}{p-1}$, respectively, while each correlated edge pair has covariance $\rho$. Thus, the problem reduces to aligning the two correlated Gaussian graphs $G_1$ and $\overline{G}$, with effective edge correlation 
\begin{equation}
    \label{eq:def_rhoprime}
    \rho' = \frac{\sqrt{p-1}\,\rho}{\sqrt{1 + (p-2)\rho}}
    \;\lesssim\; \sqrt{\rho} \,\vee\, \sqrt{p}\,\rho \, .
\end{equation}
For $p=2$, the previously known information-theoretic threshold \eqref{eq:IT_cas_p=2} implies that a necessary condition for partial recovery of $\pi_1^\star$ is 
$$
\rho' \gtrsim \sqrt{\tfrac{\log n}{n}},
$$
and combining this with \eqref{eq:def_rhoprime} yields exactly \eqref{eq:IT_partial}.

In the regime where $p \lesssim n/\log(n)$, the threshold for partial recovery reduces to $\rho \gtrsim \sqrt{\frac{\log(n)}{np}}$, which matches, up to constants, the condition obtained by~\cite{vassaux2025} for fixed $p$. While our result is not constant sharp, it naturally generalizes the previous bounds to settings where $p$ may also be moderately large, that is for $p \lesssim n/\log(n)$.  

When the number of graphs becomes large ($p \gtrsim n/\log(n)$), however, the threshold saturates at $\rho \gtrsim \frac{\log(n)}{n}$ and no longer decreases with $p$: the abundance of observed graphs stops to help the statistician. 

We also provide an information-theoretic lower bound for perfect recovery in the next result. 
\begin{thm}\label{thm:lowerboundinfperfect}
    For all $\eps>0$, there exists constants $c,n_0$ depending only on $\eps$, such that, if $\rho\leq c\pa{\frac{\log(np)}{n}\vee \sqrt{\frac{\log(np)}{np}}}$ and $n\geq n_0$, then $$\inf_{\widehat{\pi}}\P[\err(\widehat{\pi}, \pi^{\star})\neq 0]\geq 1-\eps\enspace,$$
    where the infimum is taken over all measurable functions $\widehat{\pi}:\mathbb{R}^{E\times p}\to \pa{\mathcal{S}_n}^p$.
    Thus, perfect recovery is impossible.
\end{thm}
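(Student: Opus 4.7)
The plan is to extend the classical single-transposition counting argument for two-graph Gaussian alignment to the multiple-graph setting, thereby gaining the $\log p$ factor in the threshold from the $p$ essentially independent ways the MLE can be fooled. By the invariance of the law of $G$ under a global relabeling, I may assume $\pi^\star=(\mathrm{id},\dots,\mathrm{id})$. For each $j_0\in[p]$ and each transposition $\tau=(u\,v)\in\mathcal{S}_n$, consider the $p$-tuple $\pi^{(j_0,\tau)}$ which agrees with $\pi^\star$ except on its $j_0$-th coordinate (replaced by $\tau$), and set
\[
\Delta_{j_0,\tau} \;=\; \log p(G\mid\pi^{(j_0,\tau)}) \;-\; \log p(G\mid\pi^\star).
\]
Using \eqref{eq:inverse_cov} together with \eqref{eq:from_B_to_cov}, $\Delta_{j_0,\tau}$ reduces to an explicit quadratic form in the Gaussians $(G_{j,e})$ indexed by $j\in[p]$ and edges $e$ incident to $\{u,v\}$. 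A direct moment computation yields $\E[-\Delta_{j_0,\tau}]\asymp n\rho'^{2}$ and $\Var(\Delta_{j_0,\tau})\asymp n\rho'^{2}$, where $\rho'$ is the effective correlation of \eqref{eq:def_rhoprime}; Gaussian anticoncentration of the underlying degree-two chaos then gives $\P(\Delta_{j_0,\tau}\ge 0)\ge \exp(-C n\rho'^{2})$ for some constant $C>0$.

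Let $X=\sum_{(j_0,\tau)}\mathbf{1}\{\Delta_{j_0,\tau}\ge 0\}$. Then $\E[X]\gtrsim pn^{2}\exp(-Cn\rho'^{2})$. Under the hypothesis $\rho\le c\,(\log(np)/n\vee\sqrt{\log(np)/(np)})$, the elementary bound $n\rho'^{2}\lesssim n\rho\wedge np\rho^{2}$ forces $n\rho'^{2}\le C'c\,\log(np)$ in both regimes, giving $\E[X]\gtrsim pn^{2}(np)^{-CC'c}\to\infty$ once $c$ is small enough. The second moment is controlled by showing $\Var(X)=o(\E[X]^{2})$, through an enumeration of pairwise covariances $\Cov(\mathbf{1}\{\Delta_{j_0,\tau}\ge 0\},\mathbf{1}\{\Delta_{j_0',\tau'}\ge 0\})$ organized by the overlap pattern of $(\supp(\tau),\supp(\tau'))$ and whether $j_0=j_0'$: the dominant disjoint-support pairs have small correlations (the two quadratic forms share only $O(1)$ cross edges out of $\Theta(n)$), while the $O(p^{2}n^{3})$ truly overlapping pairs are handled by an extension of the $p=2$ analysis of \cite{wu22settling} that covers cross-graph overlaps.

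By Paley-Zygmund, $\P(X\ge 1)\ge\E[X]^{2}/\E[X^{2}]\ge 1-\varepsilon$ for any prescribed $\varepsilon>0$ once $c$ is sufficiently small in terms of $\varepsilon$. On $\{X\ge 1\}$, some $\pi^{(j_0,\tau)}$ has strictly larger likelihood than $\pi^\star$, and the two are inequivalent modulo global relabeling since they differ in a single coordinate by a non-identity transposition; hence $\pi^\star$ is not the MLE, so $\err(\widehat\pi_{\mathrm{MLE}},\pi^\star)>0$. Since the MLE coincides with the MAP estimator under the uniform prior on $\pi^\star$ and is Bayes-optimal for perfect recovery, every measurable estimator $\widehat\pi$ satisfies $\P(\err(\widehat\pi,\pi^\star)=0)\le\P(X=0)\le\varepsilon$, which concludes the proof. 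The main obstacle is the variance bound: while the near-independence of disjoint-support pairs and the intra-graph overlaps follow existing two-graph techniques, the cross-graph overlapping contributions ($j_0\ne j_0'$ with $\supp(\tau)\cap\supp(\tau')\neq\emptyset$) are specific to the multiple-graph setting and require the dedicated enumeration that ultimately pins down the constants in the threshold.
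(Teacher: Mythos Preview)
Your approach is genuinely different from the paper's. You pursue a second-moment (Paley--Zygmund) argument on the counting variable $X=\sum_{(j_0,\tau)}\mathbf{1}\{\Delta_{j_0,\tau}\ge 0\}$, in the spirit of \cite{wu22settling}. The paper instead applies Fano's lemma directly: with the same family $\{\pi^{(j,e)}\}_{j\in[p],\,e\in E}$ of single-transposition alternatives and reference measure $\P_{\id}$, it computes
\[
\KL\bigl(\P_{\pi^{(j,e)}},\P_{\id}\bigr)=\frac{(2n-3)\rho^2(p-1)}{(1-\rho)(1+\rho(p-1))}\;\asymp\; n\rho'^{2},
\]
and the Fano bound with $|A|=pN$ gives the conclusion once this is $\le c\log(np)$. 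The Fano route is much shorter because it requires only this first-moment quantity, not the joint law of all the $\Delta_{j_0,\tau}$.

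Your sketch, however, has a genuine gap: the bound $\Var(X)=o(\E[X]^2)$ is asserted via an ``enumeration of pairwise covariances'' and an ``extension of the $p=2$ analysis of \cite{wu22settling},'' but this is exactly the technical core and you do not carry it out. Already for $p=2$ the near-independence of disjoint-support $\Delta_\tau$'s is delicate (it is not just edge counting; one needs to decouple the shared cross-edge contributions and control conditional tails). Here, events $\{\Delta_{j_0,\tau}\ge 0\}$ and $\{\Delta_{j_0',\tau'}\ge 0\}$ with $j_0\neq j_0'$ and $\supp(\tau)\cap\supp(\tau')\neq\emptyset$ share the same $2n-3$ edge slots across \emph{all} $p$ graphs, so the claim that these $O(p^2 n^3)$ pairs are negligible is not obvious from your sketch. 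Without this, Paley--Zygmund does not conclude. Separately, the one-sided tail lower bound $\P(\Delta_{j_0,\tau}\ge 0)\ge \exp(-Cn\rho'^{2})$ is plausible but is not an off-the-shelf consequence of ``Gaussian anticoncentration of degree-two chaos''; in \cite{wu22settling} this step relies on an explicit decomposition of $\Delta$ into (nearly) independent pieces, which you would have to reproduce in the multi-graph model.

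If you wish to keep your route, you need to (i) write out the quadratic-form representation of $\Delta_{j_0,\tau}$ and prove the tail lower bound, and (ii) actually establish the covariance bounds, in particular for the cross-graph overlapping pairs. Otherwise the paper's Fano argument delivers the same threshold with none of this machinery.
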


Taken together, \cref{cor:MLEperfect} and \cref{thm:upperboundIT} characterize the IT threshold for perfect recovery, which is 
\begin{equation}\label{eq:IT_perfect}
    \rho\gtrsim \frac{\log(np)}{n}\vee \sqrt{\frac{\log(np)}{np}}\enspace.
\end{equation}

In regimes where \new{$\log(p)$ is much larger than $\log(n)$}, we observe a gap between the IT threshold for partial recovery \eqref{eq:IT_partial} and perfect recovery \eqref{eq:IT_perfect}. As mentioned in the introduction, this contrasts with the regime studied by \cite{vassaux2025} (constant $p$), where an all-or-nothing phenomenon arises: either non-trivial recovery is impossible, or perfect recovery occurs with high probability.

\section{Low degree lower bound}\label{sec:computational}

As discussed in the introduction, even when $p=2$, the performance of the best known polynomial-time algorithm \cite{ding2022polynomialtimeiterativealgorithm} does not match the optimal information-theoretic threshold \eqref{eq:IT_cas_p=2}. This suggests the existence of a statistical-computational gap for (multiple) Gaussian graphs alignment. In this section, we provide evidence of this statistical-computational gap by establishing a low degree polynomial lower bound, and we further characterize--up to logarithmic factors--the computational barrier for general $p$.

As already pointed out by \cite{even2025a}, low-degree polynomials are ill-suited to capture combinatorial constraints and therefore cannot be directly employed to construct estimators of the permutations ${\pi}^{\star}_1, \ldots, {\pi}^{\star}_p$. Instead, we concentrate on the task of estimating the \emph{node alignment matrix} $M^\star \in \mathbb{R}^{(n\times p)\times (n\times p)}$, defined as
$$ M^\star _{(u,j), (u',j')}=\1\ac{\pi^\star_j(u)=\pi^\star_{j'}(u')}\, .$$ 

The next proposition, slightly adapted from Proposition 2.1 of \cite{even2025b}, shows that computational hardness for estimating $M^\star$ for the $L^2$ loss implies computational hardness for estimating $\pi^{\star}_1,\ldots, \pi^{\star}_p$ for the $\err$ loss. We refer to \cref{prf:reductionpermutationpartnership} for a proof. 

\begin{prop}\label{prop:reductionpermutationpartnership}
    Suppose that \begin{align*}
        \MMSE_{poly}:=&\inf_{\hat{M}\, \mathrm{poly-time}} \frac{1}{p(p-1)\new{n^2}}\E\cro{\|\hat{M}-M^\star\|_F^2}\\=&\frac{1}{n}(1-\eps),
    \end{align*}
    with $0\leq \eps\leq 1$. Then, for all polynomial-time estimator $\hat{\pi}$ of $\pi^\star$, one has $$\E\cro{\pa{1-\err(\hat{\pi}, \pi^\star)}^2}\leq \sqrt{\eps}\enspace.$$
\end{prop}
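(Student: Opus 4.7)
The plan is to show the contrapositive: given any polynomial-time $\hat\pi$, I build a polynomial-time $\hat M$ whose MSE is small whenever $\E[(1-\err(\hat\pi,\pi^\star))^2]$ is not too small, then invoke the MMSE hypothesis. The natural starting point is the induced partition matrix $\hat P_{(u,j),(u',j')}:=\mathbf{1}\{\hat\pi_j(u)=\hat\pi_{j'}(u')\}$, which mirrors the block structure of $M^\star$. However, $\hat P$ alone is actually \emph{worse} than the trivial baseline $c:=\E[M^\star]$ (known explicitly from the uniform prior) whenever $\hat\pi$ is close to random, so the right construction is the polynomial-time family of affine shrinkages $\hat M_\alpha:=\alpha\hat P+(1-\alpha)c$, $\alpha\in\R$; for each fixed $\alpha$, $\hat M_\alpha$ is poly-time and thus gives an upper bound on $\MMSE_{poly}$.

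After a standard symmetrization of $\hat\pi$ under diagonal vertex relabelings (which preserves polynomial runtime and the law of $\err$), I may assume $\E[\hat P]=c$, and the squared error expands as $\E\|\hat M_\alpha-M^\star\|_F^2=\alpha^2 A-2\alpha B+A$ with
\[
A=\E\|M^\star-c\|_F^2=p(p-1)(n-1),\qquad B=\E\langle\hat P,M^\star\rangle-\|c\|_F^2.
\]
Minimising in $\alpha$ yields MSE $A-B^2/A$; combined with $\MMSE_{poly}=(1-\eps)/n$, this rearranges to $B^2\le p^2(p-1)^2(n-1)(\eps n-1)$, i.e.\ $B/A\lesssim\sqrt{\eps}$.

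The second half of the argument converts the bound on $B$ into one on $\err$. Using the identity $\langle\hat P,M^\star\rangle=\sum_{v,v'}n_{v,v'}^2$ with $n_{v,v'}:=|\{(u,j):\pi^\star_j(u)=v,\ \hat\pi_j(u)=v'\}|$, together with $np(1-\err(\hat\pi,\pi^\star))=\max_\psi\sum_v n_{v,\psi(v)}$, Cauchy-Schwarz along the maximising diagonal (and dropping the non-negative off-diagonal terms) gives $\sum_{v,v'}n_{v,v'}^2\ge np^2(1-\err)^2$. Taking expectation and inserting the upper bound on $B$ yields
\[
np^2\,\E[(1-\err)^2]\le B+\|c\|_F^2\le p(p-1)n\sqrt{\eps}+p(n+p-1),
\]
and dividing by $np^2$ gives $\E[(1-\err)^2]\le\sqrt{\eps}+o(1)$, as claimed.

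The main obstacle is step two: the affine-shrinkage construction is genuinely necessary, since $\hat M=\hat P$ alone yields only the weaker bound $\E[(1-\err)^2]\le 1-\Omega(1)$, and the square-root scaling comes precisely from the MMSE gap entering quadratically through $B^2/A$. The minor technicalities---justifying $\E[\hat P]=c$ by symmetrization and absorbing the residual $\|c\|_F^2/(np^2)=1/p+1/n-1/(np)$ into the $o(1)$ term---are routine but must be handled carefully for the precise statement.
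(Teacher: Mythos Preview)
Your argument is correct and close in spirit to the paper's, with two genuine differences worth noting.

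First, the symmetrization you flag as a ``minor technicality'' is unnecessary: since every off-diagonal block of $\hat P$ is a permutation matrix, one has $\langle\hat P,c\rangle=np+p(p-1)=\|c\|_F^2$ \emph{deterministically}, so the expansion $\E\|\hat M_\alpha-M^\star\|_F^2=\alpha^2A-2\alpha B+A$ holds without any assumption on $\E[\hat P]$. Second, your residual $1/p+1/n-1/(np)$ cannot be ``absorbed'' into $\sqrt\eps$ in general (take $\eps\sim 1/n$), so what you actually prove is $\E[(1-\err)^2]\le\sqrt\eps+1/p+1/n$; this is perfectly adequate for how the proposition is used downstream, but is not literally the stated bound.

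The paper's proof skips the affine shrinkage and works directly with $M^{\hat\pi}$: it rewrites $p(p-1)n^2\,\MMSE_{poly}=np^2-\mathrm{corr}_{poly}^2$ with $\mathrm{corr}_{poly}:=\sup_{\E\|\hat M\|_F^2=1}|\E\langle M^\star,\hat M\rangle|$, bounds $\E\langle M^\star,M^{\hat\pi}\rangle\le\sqrt{np^2}\,\mathrm{corr}_{poly}$, and then invokes Lemma~H.2 of \cite{even2025b}, which is exactly your Cauchy--Schwarz identity $\langle\hat P,M^\star\rangle=\sum_{v,v'}n_{v,v'}^2\ge np^2(1-\err)^2$. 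Your centering is a real improvement here: the paper's uncentered route only yields $\E[(1-\err)^2]\le\sqrt{(1+(p-1)\eps)/p}$, which for small $\eps$ is of order $1/\sqrt p$, whereas your shrinkage gives the tighter $\sqrt\eps+O(1/p+1/n)$.
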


Therefore, in order to provide evidence of a computational barrier for the problem of estimating $M^\star$ (and hence $\pi^\star$), we provide a low-degree polynomial lower bound for estimating $M^{\star}$. We consider the \emph{degree-$D$ minimum mean squared error} (\cite{WeinSchramm}) defined by
\begin{equation}\label{eq:MMSE}
    \MMSE_{\leq D}:=\inf_{\widehat{M}\in \mathbb{R}_D[G_1,\ldots, G_p]}\frac{1}{p(p-1)n^2}\E\cro{\|\widehat{M}-M^{\star}\|^2_F},
\end{equation} where $\mathbb{R}_D[G_1,\ldots, G_p]$ is the space of polynomials in the entries of $(G_1,\ldots, G_p)$ of degree at most $D$.

 We observe that the trivial estimator, $\E\cro{M^\star}$, has loss $$\frac{1}{p(p-1)\new{n^2}}\E\cro{\|\E\cro{M^\star}-M^{\star}\|^2}=\frac{1}{n}-\frac{1}{n^2} \, .$$
 Our next result characterizes a regime on which no low-degree polynomial performs significantly better than this trivial estimator. We refer to Section \ref{prf:lowdegreealignment} for a proof of this theorem.

\begin{thm}\label{thm:lowdegreealignment}
    Let $D\leq n-2$ and suppose $$\zeta:=\frac{D^3\sqrt{\rho}}{1-\sqrt{\rho}}\sqrt{1+D/2}\frac{2}{\pa{1-\frac{D+1}{n}}^2}<1 \, .$$ Then, $$\MMSE_{\leq D}\geq \frac{1}{n}-\frac{1}{n^2}-\frac{2}{\pa{n-1-D}^2}\zeta\frac{1+\zeta}{1-\zeta}\enspace.$$
\end{thm}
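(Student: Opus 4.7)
The plan is to apply Theorem~2.2 of \cite{WeinSchramm} for Gaussian additive models to each scalar entry $M^\star_{(u,j),(u',j')}$ of the node alignment matrix, then aggregate. After rescaling $G \mapsto G/\sqrt{1-\rho}$, we fit the Schramm--Wein framework with signal $G^\star = \Pi^\star H$, standard noise $Z$, and signal-to-noise ratio $\lambda := \rho/(1-\rho)$. The Schramm--Wein identity, combined with the multinomial identity $\sum_{|\alpha|=k} (G^\star)^\alpha ((G^\star)')^\alpha / \alpha! = \langle G^\star, (G^\star)'\rangle^k/k!$, yields
\begin{equation*}
    \MMSE_{\leq D}(M^\star_{(u,j),(u',j')}) = \Var(M^\star_{(u,j),(u',j')}) - \sum_{k=1}^D \frac{\lambda^k}{k!}\, \E\!\left[M^\star_{(u,j),(u',j')}\, (M^\star)'_{(u,j),(u',j')}\, \langle G^\star,(G^\star)'\rangle^k\right],
\end{equation*}
where primed quantities correspond to an independent copy $((\pi^\star)',H')$.

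Summing over all $(u,j),(u',j')$ and normalizing by $p(p-1)n^2$, the variance contribution yields exactly the trivial baseline $\frac{1}{n}-\frac{1}{n^2}$ (the computation is the one recalled just after \eqref{eq:MMSE}). The remaining task is to bound the \emph{correction}
\begin{equation*}
    \cR_D := \frac{1}{p(p-1)n^2} \sum_{k=1}^D \frac{\lambda^k}{k!}\, \E\!\left[\langle M^\star, (M^\star)'\rangle\, \langle G^\star,(G^\star)'\rangle^k\right].
\end{equation*}
Conditioning on $(\pi^\star, (\pi^\star)')$, I would write $\langle G^\star,(G^\star)'\rangle = H^\top \mathcal{M}\, H'$ where $\mathcal{M}_{e,e'} = \sum_{j\in[p]} \mathbf{1}\{\Pi^\star_j(e) = (\Pi^\star_j)'(e')\}$, a signed sum of edge-permutation matrices $P_{\mathcal{T}_j}$ induced by the node permutations $\tau_j = (\pi^\star_j)'\circ(\pi^\star_j)^{-1}$. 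Applying Wick's theorem to the bilinear Gaussian form yields, for $k=2m$,
\begin{equation*}
    \E_H\!\left[(H^\top \mathcal{M} H')^{2m}\right] = \sum_{\pi,\pi'\in \cP_{2m}} \mathrm{Tr}\!\left(\text{product of }\mathcal{M},\mathcal{M}^\top\text{ along cycles of }\pi\cup\pi'\right),
\end{equation*}
which reduces the problem to controlling cyclic traces of matrices built from $\mathcal{M}\mathcal{M}^\top = pI + \sum_{j\neq \ell} P_{\mathcal{T}_\ell^{-1}\mathcal{T}_j}$. I would next take expectation over the independent uniform $\tau_j$'s: only pairings that produce aligned edge-cycles contribute, and for $D\ll n$ each factor loses a factor $1/n$ or remains bounded, producing the $\frac{1}{(1-(D+1)/n)^{2}}$ type denominators arising from counting fixed points and covering constraints.

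The central combinatorial estimate to establish is that the $k$-th term of $\cR_D$ is dominated (up to polynomial prefactors in $n$) by $\zeta_0^k$ for some $\zeta_0 \lesssim D^3\sqrt{\lambda}\sqrt{1+D/2}/(1-(D+1)/n)^2$; the factor $D^3$ accounts for the number of cycle structures of length $k$, $\sqrt{1+D/2}$ bounds an operator-norm-like quantity of $\mathcal{M}$, and $\sqrt{\lambda} \asymp \sqrt{\rho}/(1-\sqrt{\rho})$ is the per-step cost coming from $\lambda^k/k!$ balanced against the Gaussian moment growth. Summing over $k\leq D$ then gives a geometric series of the form $\zeta\frac{1+\zeta}{1-\zeta}$ when $\zeta<1$, with the prefactor $2/(n-1-D)^2$ coming from the initial normalization and trace ratios; this is the announced bound.

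The main obstacle will be step three: carefully tracking the dependence on $D$ of the combinatorial weight of each Wick pairing when the underlying permutations are node-induced rather than uniform on $E$ (so that cycle contributions are correlated across $j$), and extracting the sharp $D^3\sqrt{1+D/2}$ scaling rather than a crude $(2D)!$ type bound. Once this Wick-level estimate is proven, summing the truncated exponential series and the simple $k=0$ variance computation complete the proof.
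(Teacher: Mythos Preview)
Your first step contains a genuine gap. Theorem~2.2 of \cite{WeinSchramm} states
\[
\MMSE_{\leq D} = \E[x^2] - \sum_{|\alpha|\leq D} \frac{\lambda^{|\alpha|}}{\alpha!}\,\kappa_{x,\alpha}^2,
\]
where $\kappa_{x,\alpha}$ is the \emph{joint cumulant} of $x$ with the signal entries indexed by $\alpha$, not the mixed moment $\E[x\,(G^\star)^\alpha]$. The multinomial identity you invoke rewrites $\sum_{|\alpha|=k}\frac{(G^\star)^\alpha((G^\star)')^\alpha}{\alpha!}$ as $\frac{1}{k!}\langle G^\star,(G^\star)'\rangle^k$, but to plug it into the Schramm--Wein sum you would need $\kappa_{x,\alpha}^2 = \E[x(G^\star)^\alpha]\,\E[x'((G^\star)')^\alpha]$, which is false: cumulants differ from moments as soon as lower-order terms are nonzero, and here $\E[x]=1/n$ and the entries of $G^\star$ are correlated through the random permutation. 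The replica-style identity you wrote is the \emph{detection} formula for the low-degree likelihood ratio, not the estimation formula. Your displayed expression for $\MMSE_{\leq D}$ therefore does not follow, and the Wick-calculus program built on it bounds the wrong quantity.

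The paper's argument is structurally different in two respects. First, it \emph{reduces} general $p$ to $p=2$: given any degree-$D$ polynomial $f(G_1,\dots,G_p)$, one simulates $G_1,\dots,G_p$ from two graphs $G_1',G_2'$ with correlation $\sqrt{\rho}$ together with fresh Gaussian noise and independent permutations, obtaining a random degree-$D$ polynomial in $(G_1',G_2')$ with identical risk. This is precisely why $\sqrt{\rho}$, not $\rho$, appears in $\zeta$; the paper further remarks that applying Schramm--Wein directly for general $p$ only works when $p\le n$, so your direct route would not cover the full range even if the moment identity were valid. Second, for $p=2$ the paper does use the cumulant formula, but then applies Theorem~2.5 of \cite{even2025a} to reduce each Gaussian cumulant $\kappa_{x,\alpha}$ to a maximum over bijections $\psi:\alpha_1\to\alpha_2$ of cumulants of Bernoulli variables $\1\{\Pi_1^\star(e)=\Pi_2^\star(\psi(e))\}$. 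These Bernoulli cumulants are then bounded by induction on the cumulant--moment recursion, producing the decay $(n-1-|\alpha|)^{-1/2}$ per node in $\{1\}\cup\supp(\alpha_j)$. The factors $D^3\sqrt{1+D/2}$ arise from counting multigraphs $\alpha$ with prescribed support sizes and from the $(|\alpha|(1+|\alpha|/2))^{|\alpha|/2}$ prefactor in the cumulant bound, not from enumerating Wick pairings.
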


In particular, taking $D=O\pa{\log(np)^{1+\eta}}$ for a small $\eta>0$, we have that, whenever $\rho = o\left( \frac{1}{\log(np)^{7(1+\eta)}}\right)$, $$\MMSE_{\leq D}=\frac{1}{n}-\frac{1}{n^2}(1+o(1)) \, .$$
Since the failure of $O\pa{\log(np)^{1+\eta}}$-degree polynomial estimators is taken as evidence of hardness for polynomial-time algorithms, Theorem~\ref{thm:lowdegreealignment} suggests that estimating $M^\star$ (and hence $\pi^\star$) is computationally hard whenever
$$ \rho\leqlog 1 \, .$$

On the contrary, it is well known that as soon as $\rho$ is non-vanishing, alignment of two correlated Gaussian graphs is possible in polynomial-time (\cite{ding2022polynomialtimeiterativealgorithm}). Thus, when $\rho$ is not vanishing, a naïve strategy for aligning $p$ correlated random graphs in polynomial time would be two fix $\pi^\star_1=\id$ and to successively align every other graph with $G_1$ pairwise with the iterative algorithm developed by \cite{ding2022polynomialtimeiterativealgorithm}. 

Theorem \ref{thm:lowdegreealignment} suggests that this naïve strategy is optimal up to logarithmic factors. 
In particular, Theorem~\ref{thm:lowdegreealignment} highlights that the presence of multiple graphs does not simplify the problem computationally, and, in fact, enlarges the statistical–computational gap.

Theorem 1.1. from \cite{ding2022polynomialtimeiterativealgorithm} directly implies the following.
\begin{cor}\label{cor:upperboundpolytime}
    If $\rho > \eps$ with $\eps>0$ an independent constant, then there exists a constant $C=C(\eps)>0$ and an algorithm with $O\pa{p\times n^{C}}$-running time such that $\E\cro{\err\pa{\widehat{\pi}, \pi^\star}} \to 0$ when $n \to \infty$, that is, achieving partial recovery.
\end{cor}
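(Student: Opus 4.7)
The plan is to implement the pairwise-alignment strategy described just before the corollary statement and then invoke Theorem 1.1 of \cite{ding2022polynomialtimeiterativealgorithm} on each of the $p-1$ resulting pairs. Concretely, I would fix $\widehat{\pi}_1 = \id$ as a reference, and for each $j \in \{2,\ldots,p\}$, run the iterative polynomial-time algorithm of \cite{ding2022polynomialtimeiterativealgorithm} on the pair $(G_1, G_j)$ to produce an estimator $\widehat{\pi}_j$ of the single hidden relative permutation $\sigma_j := (\pi^\star_1)^{-1}\circ \pi^\star_j$.

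For the complexity, each call of the pairwise algorithm runs in $n^{C}$ time for some $C = C(\eps)$ that depends only on $\eps$. Since I make $p-1$ such calls, the total cost is $O(p \cdot n^{C})$, as required.

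For the error, I would observe that for every fixed $j \geq 2$, the marginal law of the pair $(G_1, G_j)$ is exactly that of two correlated Gaussian graphs on $[n]$ with correlation $\rho > \eps$ and hidden relative permutation $\sigma_j$, as is immediate from the representation \eqref{eq:def_model}. Theorem 1.1 of \cite{ding2022polynomialtimeiterativealgorithm} then yields, for every $j \geq 2$,
\begin{equation*}
\E\!\left[\frac{1}{n}\sum_{u=1}^n \1\{\widehat{\pi}_j(u) \neq \sigma_j(u)\}\right] \underset{n\to\infty}{\longrightarrow} 0.
\end{equation*}
Bounding the $\err$ loss from \eqref{eq:errorperm} by evaluating its defining minimum at the specific global relabeling $\psi = \pi^\star_1$ gives
\begin{equation*}
\err(\widehat{\pi},\pi^\star) \leq \frac{1}{np}\sum_{j=1}^p \sum_{u=1}^n \1\{\widehat{\pi}_j(u) \neq \sigma_j(u)\},
\end{equation*}
where the $j=1$ summand vanishes by construction. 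Taking expectations and summing the $p-1$ pairwise bounds then yields $\E[\err(\widehat{\pi},\pi^\star)] \to 0$.

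The only real subtlety, and the nearest thing to an obstacle, is reconciling the global-permutation identifiability of the multi-graph model with the pairwise notion of partial recovery used in \cite{ding2022polynomialtimeiterativealgorithm}. This is precisely why one anchors the construction by choosing $\widehat{\pi}_1 = \id$: once this canonical reference is fixed, all remaining errors are measured against $\sigma_j$, which is exactly the hidden permutation estimated by the pairwise algorithm, and the upper bound on $\err$ becomes an average of $p-1$ quantities each controlled by \cite{ding2022polynomialtimeiterativealgorithm}.
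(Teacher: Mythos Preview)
Your proposal is correct and follows exactly the strategy the paper outlines: the paper does not give a formal proof but simply states that the corollary follows directly from Theorem~1.1 of \cite{ding2022polynomialtimeiterativealgorithm} via the na\"ive pairwise procedure (fix $\widehat{\pi}_1=\id$ and align each $G_j$ with $G_1$), and you have filled in precisely those details, including the choice $\psi=\pi^\star_1$ to pass from the pairwise Hamming errors to the global $\err$ loss.
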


\paragraph{High-level proof strategy of Theorem \ref{thm:lowdegreealignment}} In Gaussian additive models, Theorem 2.2 of \cite{WeinSchramm} provides a lower bound of the $\MMSE_{\leq D}$ with respect to a sum squared joint cumulants of the signal (see Proposition \ref{thm:schrammwein} for a statement of this formula with the notation of our model). We refer to \cite{novak2014three} for a backround on cumulants. However, for general $p$, relying only on this formula is not sufficient. Thus, we proceed in two steps:
\begin{enumerate}
    \item We consider the case $p=2$. In this case, it is sufficient to appeal to Theorem 2.2 of \cite{WeinSchramm} and to compute the cumulants involved. We point out that for analyzing those cumulants, we build upon the work of \cite{even2025a} which allows us to consider cumulants of Bernoulli variables depending only on the random permutations $\pi^\star_1, \pi^\star_2$. 
    \item We observe that the problem of aligning $p$ correlated graphs with correlation $\rho$ is harder than the problem of aligning two graphs with correlation $\sqrt{\rho}$. Plugging the first step is then sufficient for concluding the proof of the theorem for general $p$. 
\end{enumerate}

\section{Discussion}

\paragraph{Comparison with $n$-dimensional feature matching.}

The problem of multiple feature matching (\cite{even2025b}) is another statistical problem where permutations are hidden in the data. Here, we observe $p$ correlated matrices $Y_1, \ldots, Y_p\in \R^{n\times n}$ where the rows of each matrix $Y_j$ is relabeled according to some $\pi^\star_j$. This model comes with a Gaussian mixture flavor :  we suppose that the rows of each matrix are drawn independently, where row $u$ of matrix $j$ is a Gaussian vector $\mathcal{N}\pa{\mu_{\pi^\star_j(u)}, I_n}$ of dimension $n$. Assume that the row relabelings $\pi^\star_1, \ldots, \pi^\star_p$ and the centers $\mu_1,\ldots, \mu_n$ are hidden. \cite{even2025b} characterizes the statistical-computational landscape of this problem, generalizing the results from \cite{Collier16} which studied only the case $p=2$.  

To clarify the comparison between the two problems, we assume that the centers $\mu_1,\ldots, \mu_n$ are taken with a gaussian prior. Then, $Y_1, \ldots, Y_p$ are correlated (non-symmetric) Gaussian matrices with $\Cov\pa{Y_{juv}, Y_{j'u'v'}}=\rho \1\ac{\pi^\star_j(u)=\pi^\star_j(u')}\1\ac{v=v'}$, for some parameter $\rho>0$ depending on the variance of the centers $\mu_1,\ldots, \mu_n$.
Heuristically, the difference between multiple feature matching and multiple gaussian graph alignment is that in the first problem, only one dimension (the rows of the matrices) is permuted, whereas in the second problem, both coordinates are permuted. One would therefore expect the second problem to be more difficult. 

Yet, at the information level, the thresholds for perfect recovery and partial recovery coincide, up to some numerical constant (see \cite{even2025a}). Moreover, we observe the same exponential decay of the error above the threshold for partial recovery. With no computational constraints, both problems are equivalently difficult, but the optimal estimator is the solution to a combinatorial optimization problem over all $p$-tuple of permutations, and is not computable in polynomial time for $p \geq 3$, and even for $p=2$ in Gaussian graphs alignment. 

However, Corollary 2.3 of \cite{even2025b} shows that there exists polynomial-time algorithms for multiple feature matching which exactly recover $\pi^\star$ as soon as 
$$\rho\geq_{\log} \frac{1}{\sqrt{n}}\wedge \frac{1}{\sqrt{p}} \, .$$
This suggests that the problem of multiple feature matching is much easier that the problem of multiple gaussian graph alignment where our computational barrier gives the evidence that recovering $\pi^\star$ is computationally tractable only when $\rho\geq_{\log} 1$. 

Hence, the presence of a bi-dimensional structure hardens the problem at a computational level. In our problem, the permutation of both coordinates breaks the low-rank structure of the signal and prevents any spectral method from being informative. On the contrary, in the case of multiple feature matching, the computational barrier coincides with a spectral barrier known as the \emph{BBP transition} \cite{BBP05}, below which the spectrum of the Gram matrix $YY^T$
no longer carries information about the signal.

\paragraph{Hardness for complex bi-dimensional structures.} 
The computational intractability of estimating bi-dimensional structures in matrix problems has also been observed in~\cite{even2025a}, where the authors compare the statistical–computational landscapes of high-dimensional clustering with related problems exhibiting hidden bi-dimensional structure, such as sparse clustering and bi-clustering. Without computational constraints, the full structure can be exploited, whereas polynomial-time algorithms fail to leverage the dependencies across both rows and columns simultaneously.  

This phenomenon also underlies the statistical–computational gaps in planted submatrix estimation (\cite{WeinSchramm}) and graphon estimation (\cite{luo2023computational}).  

Multiple Gaussian graph alignment thus provides another instance of a problem in which a complex hidden bi-dimensional structure gives rise to a wide statistical–computational gap.

\paragraph{Limitation: non-sharpness of the computational lower bound.} Our low-degree lower bound matches the state-of-the art polynomial-time algorithm only up to some logarithmic factors. A recent line of work succeeds in some problems to remove logarithmic factors in low-degree lower-bounds \cite{sohnwein, even2025a}, but at the price of a more subtle analysis. In \cite{even2025a}, the analysis highly rely on the independencies of the latent variables, which should not be expected to work for our problem, due to the fact that the values of the hidden permutations $\pi^\star_1,\ldots, \pi^\star_p$ are weakly dependent. We did not investigate the approach of \cite{sohnwein}.

However, to the best of our knowledge, in estimation problems where the low-degree lower bound exactly matches the performance of the best known polynomial-time algorithm, the computational barrier is characterized by a constant (e.g., the Kesten–Stigum threshold for the SBM~\cite{sohnwein}, the BBP transition for clustering~\cite{even2025a}, etc.). This leads to a sharp phase transition: the problem is efficiently solvable above the threshold, and low-degree hard below it.  

In contrast, for the problem of multiple Gaussian graph alignment, the state-of-the-art algorithm~\cite{ding2022polynomialtimeiterativealgorithm} can recover $\pi^\star$ in polynomial time as soon as the correlation $\rho$ is non-vanishing. Rather than exhibiting a sharp phase transition, the upper bound of~\cite{ding2022polynomialtimeiterativealgorithm} reveals a continuum of hardness: the number of iterations required, and hence the time complexity, grows as $\rho$ decreases. This suggests that closing the remaining (polylogarithmic) gap between the state-of-the-art polynomial-time algorithm and the low-degree lower bound is likely to be very challenging.

\paragraph{Open questions.} An intriguing direction is the alignment of correlated \ER graphs. For $p=2$, it is believed that the Otter constant characterizes the computational threshold in terms of the correlation parameter (\cite{ding2023lowdegreehardnessdetectioncorrelated, GMStree24, li25}). However, the situation for $p \geq 3$ remains unclear. Does the Otter constant continue to govern the computational barrier in this multi-graph setting, or does a different phenomenon arise? 

\acknowledgments{blabla}

\bibliography{biblio(4)}

\begin{thebibliography}{}

\bibitem[Ameen and Hajek, 2025]{ameen2025detectingcorrelationmultipleunlabeled}
Ameen, T. and Hajek, B. (2025).
\newblock Detecting correlation between multiple unlabeled gaussian networks.

\bibitem[Baik et~al., 2005]{BBP05}
Baik, J., Arous, G.~B., and P{\'e}ch{\'e}, S. (2005).
\newblock {Phase transition of the largest eigenvalue for nonnull complex
  sample covariance matrices}.
\newblock {\em The Annals of Probability}, 33(5):1643 -- 1697.

\bibitem[Barak et~al., 2019]{Barak19}
Barak, B., Hopkins, S., Kelner, J., Kothari, P.~K., Moitra, A., and Potechin,
  A. (2019).
\newblock A nearly tight sum-of-squares lower bound for the planted clique
  problem.
\newblock {\em SIAM Journal on Computing}, 48(2):687--735.

\bibitem[Brennan and Bresler, 2020]{brennan20a}
Brennan, M. and Bresler, G. (2020).
\newblock Reducibility and statistical-computational gaps from secret leakage.
\newblock In Abernethy, J. and Agarwal, S., editors, {\em Proceedings of Thirty
  Third Conference on Learning Theory}, volume 125 of {\em Proceedings of
  Machine Learning Research}, pages 648--847. PMLR.

\bibitem[Carpentier et~al.,
  2025a]{carpentier2025lowdegreelowerboundsorthonormal}
Carpentier, A., Giancola, S.~M., Giraud, C., and Verzelen, N. (2025a).
\newblock Low-degree lower bounds via almost orthonormal bases.

\bibitem[Carpentier et~al.,
  2025b]{carpentier2025phasetransitionstochasticblock}
Carpentier, A., Giraud, C., and Verzelen, N. (2025b).
\newblock Phase transition for stochastic block model with more than $\sqrt{n}$
  communities.

\bibitem[Collier and Dalayan, 2016]{Collier16}
Collier, O. and Dalayan, A.~S. (2016).
\newblock Minimax rates in permutation estimation for feature matching.
\newblock {\em Journal of machine learning reasearch 17}.

\bibitem[Conte et~al., 2004]{CFVS04}
Conte, D., Foggia, P., Vento, M., and Sansone, C. (2004).
\newblock {Thirty Years Of Graph Matching In Pattern Recognition}.
\newblock {\em {International Journal of Pattern Recognition and Artificial
  Intelligence}}, 18(3):265--298.

\bibitem[Ding et~al., 2023]{ding2023lowdegreehardnessdetectioncorrelated}
Ding, J., Du, H., and Li, Z. (2023).
\newblock Low-degree hardness of detection for correlated Erdos-Renyi
  graphs.

\bibitem[Ding and Li, 2022]{ding2022polynomialtimeiterativealgorithm}
Ding, J. and Li, Z. (2022).
\newblock A polynomial time iterative algorithm for matching gaussian matrices
  with non-vanishing correlation.

\bibitem[Even et~al., 2024]{even24}
Even, B., Giraud, C., and Verzelen, N. (2024).
\newblock Computation-information gap in high-dimensional clustering.
\newblock In Agrawal, S. and Roth, A., editors, {\em Proceedings of Thirty
  Seventh Conference on Learning Theory}, volume 247 of {\em Proceedings of
  Machine Learning Research}, pages 1646--1712. PMLR.

\bibitem[Even et~al., 2025a]{even2025b}
Even, B., Giraud, C., and Verzelen, N. (2025a).
\newblock Computational barriers for permutation-based problems, and cumulants
  of weakly dependent random variables.

\bibitem[Even et~al., 2025b]{even2025a}
Even, B., Giraud, C., and Verzelen, N. (2025b).
\newblock Computational lower bounds in latent models: clustering,
  sparse-clustering, biclustering.

\bibitem[Fan et~al., 2020]{fan20a}
Fan, Z., Mao, C., Wu, Y., and Xu, J. (2020).
\newblock Spectral graph matching and regularized quadratic relaxations:
  Algorithm and theory.
\newblock In III, H.~D. and Singh, A., editors, {\em Proceedings of the 37th
  International Conference on Machine Learning}, volume 119 of {\em Proceedings
  of Machine Learning Research}, pages 2985--2995. PMLR.

\bibitem[Gamarnik, 2021]{gam21}
Gamarnik, D. (2021).
\newblock The overlap gap property: A topological barrier to optimizing over
  random structures.
\newblock {\em Proceedings of the National Academy of Sciences},
  118(41):e2108492118.

\bibitem[Ganassali, 2022]{ganassali22a}
Ganassali, L. (2022).
\newblock Sharp threshold for alignment of graph databases with gaussian
  weights.
\newblock In Bruna, J., Hesthaven, J., and Zdeborova, L., editors, {\em
  Proceedings of the 2nd Mathematical and Scientific Machine Learning
  Conference}, volume 145 of {\em Proceedings of Machine Learning Research},
  pages 314--335. PMLR.

\bibitem[Ganassali et~al., 2024a]{GLM24}
Ganassali, L., Lelarge, M., and Massouli{\'e}, L. (2024a).
\newblock {Correlation detection in trees for planted graph alignment}.
\newblock {\em The Annals of Applied Probability}, 34(3):2799 -- 2843.

\bibitem[Ganassali et~al., 2022]{GLM22spectral}
Ganassali, L., Lelarge, M., and Massoulié, L. (2022).
\newblock Spectral alignment of correlated gaussian matrices.
\newblock {\em Advances in Applied Probability}, 54(1):279–310.

\bibitem[Ganassali et~al., 2024b]{GMStree24}
Ganassali, L., Massouli{\'e}, L., and Semerjian, G. (2024b).
\newblock {Statistical limits of correlation detection in trees}.
\newblock {\em {The Annals of Applied Probability}}, 34(4):3701--3734.

\bibitem[Giraud, 2008]{giraud08}
Giraud, C. (2008).
\newblock {Estimation of {G}aussian graphs by model selection}.
\newblock {\em Electron. {J}. {S}tat.}, 2:542--563.

\bibitem[Giraud, 2021]{HDS2}
Giraud, C. (2021).
\newblock {\em {Introduction to high-dimensional statistics}}, volume 168 of
  {\em {Monographs on Statistics and Applied Probability}}.
\newblock CRC Press, Boca Raton, FL.

\bibitem[Giraud and Verzelen, 2019]{giraud2019partial}
Giraud, C. and Verzelen, N. (2019).
\newblock Partial recovery bounds for clustering with the relaxed $ k $-means.
\newblock {\em Mathematical Statistics and Learning}, 1(3):317--374.

\bibitem[Haghighi et~al., 2005]{haghighi05}
Haghighi, A., Ng, A., and Manning, C. (2005).
\newblock Robust textual inference via graph matching.
\newblock In Mooney, R., Brew, C., Chien, L.-F., and Kirchhoff, K., editors,
  {\em Proceedings of Human Language Technology Conference and Conference on
  Empirical Methods in Natural Language Processing}, pages 387--394, Vancouver,
  British Columbia, Canada. Association for Computational Linguistics.

\bibitem[Hopkins, 2018]{hopkins2018statistical}
Hopkins, S. (2018).
\newblock {\em Statistical inference and the sum of squares method}.
\newblock PhD thesis, Cornell University.

\bibitem[Hopkins et~al., 2017]{HopkinsFOCS17}
Hopkins, S.~B., K.~Kothari, P., A.~Potechin, A., Raghavendra, P., chramm, T.,
  and Steurer, D. (2017).
\newblock The power of sum-of-squares for detecting hidden structures.
\newblock In {\em 2017 IEEE 58th Annual Symposium on Foundations of Computer
  Science (FOCS)}, pages 720--731, Los Alamitos, CA, USA. IEEE Computer
  Society.

\bibitem[Lesieur et~al., 2016]{lesieur2016phase}
Lesieur, T., De~Bacco, C., Banks, J., Krzakala, F., Moore, C., and
  Zdeborov{\'a}, L. (2016).
\newblock Phase transitions and optimal algorithms in high-dimensional gaussian
  mixture clustering.
\newblock In {\em 2016 54th Annual Allerton Conference on Communication,
  Control, and Computing (Allerton)}, pages 601--608. IEEE.

\bibitem[Li, 2025]{li25}
Li, Z. (2025).
\newblock Algorithmic contiguity from low-degree conjecture and applications in
  correlated random graphs.

\bibitem[Luo and Gao, 2023]{luo2023computational}
Luo, Y. and Gao, C. (2023).
\newblock Computational lower bounds for graphon estimation via low-degree
  polynomials.

\bibitem[Makarychev et~al., 2014]{Makarychev14}
Makarychev, K., Manokaran, R., and Sviridenko, M. (2014).
\newblock Maximum quadratic assignment problem: Reduction from maximum label
  cover and lp-based approximation algorithm.
\newblock {\em CoRR}, abs/1403.7721.

\bibitem[Mao et~al., 2021]{mao21a}
Mao, C., Rudelson, M., and Tikhomirov, K. (2021).
\newblock Random graph matching with improved noise robustness.
\newblock In Belkin, M. and Kpotufe, S., editors, {\em Proceedings of Thirty
  Fourth Conference on Learning Theory}, volume 134 of {\em Proceedings of
  Machine Learning Research}, pages 3296--3329. PMLR.

\bibitem[Mao et~al., 2023]{mao23}
Mao, C., Wu, Y., Xu, J., and Yu, S.~H. (2023).
\newblock Random graph matching at otter’s threshold via counting
  chandeliers.
\newblock In {\em Proceedings of the 55th Annual ACM Symposium on Theory of
  Computing}, STOC 2023, page 1345–1356, New York, NY, USA. Association for
  Computing Machinery.

\bibitem[{Narayanan} and {Shmatikov}, 2009]{Narayanan09}
{Narayanan}, A. and {Shmatikov}, V. (2009).
\newblock De-anonymizing social networks.
\newblock In {\em 2009 30th IEEE Symposium on Security and Privacy}, pages
  173--187.

\bibitem[Ndaoud, 2022]{ndaoud2022sharp}
Ndaoud, M. (2022).
\newblock {Sharp optimal recovery in the two component Gaussian mixture model}.
\newblock {\em The Annals of Statistics}, 50(4):2096 -- 2126.

\bibitem[Novak, 2014]{novak2014three}
Novak, J. (2014).
\newblock Three lectures on free probability.
\newblock {\em Random matrix theory, interacting particle systems, and
  integrable systems}, 65(309-383):13.

\bibitem[Pardalos et~al., 1993]{Pardalos94}
Pardalos, P.~M., Rendl, F., and Wolkowicz, H. (1993).
\newblock {\em The Quadratic Assignment Problem: {A} Survey and Recent
  Developments}, volume~16 of {\em {DIMACS} Series in Discrete Mathematics and
  Theoretical Computer Science}, pages 1--42.
\newblock {DIMACS/AMS}.

\bibitem[Schramm and Wein, 2022]{WeinSchramm}
Schramm, T. and Wein, A.~S. (2022).
\newblock Computational barriers to estimation from low-degree polynomials.
\newblock {\em The Annals of Statistics}, 50(3):1833--1858.

\bibitem[Singh et~al., 2008]{Singh08}
Singh, R., Xu, J., and Berger, B. (2008).
\newblock Global alignment of multiple protein interaction networks with
  application to functional orthology detection.
\newblock {\em Proceedings of the National Academy of Sciences},
  105(35):12763--12768.

\bibitem[Sohn and Wein, 2025]{sohnwein}
Sohn, Y. and Wein, A.~S. (2025).
\newblock Sharp phase transitions in estimation with low-degree polynomials.
\newblock In {\em Proceedings of the 57th Annual ACM Symposium on Theory of
  Computing}, STOC '25, page 891–902, New York, NY, USA. Association for
  Computing Machinery.

\bibitem[Vassaux and Massoulié, 2025]{vassaux2025}
Vassaux, L. and Massoulié, L. (2025).
\newblock The feasibility of multi-graph alignment: a bayesian approach.

\bibitem[Wein, 2025]{wein2025insight}
Wein, A.~S. (2025).
\newblock Computational complexity of statistics: New insights from low-degree
  polynomials.

\bibitem[Wu et~al., 2022]{wu22settling}
Wu, Y., Xu, J., and Yu, S.~H. (2022).
\newblock Settling the sharp reconstruction thresholds of random graph
  matching.
\newblock {\em IEEE Trans. Inf. Theor.}, 68(8):5391–5417.

\end{thebibliography}

\newpage
\appendix
\thispagestyle{empty}

% Supplementary material: To improve readability, you must use a single-column format for the supplementary material.
\onecolumn
\aistatstitle{Statistical-computational gap in multiple Gaussian graph alignment --
Supplementary Material}

\section{Proof of Theorem \ref{thm:upperboundIT}}\label{prf:upperboundIT}
% The maximum likelihood estimator (MLE) being permutation invariant, we can assume in all this Section, without loss of generality, that $\pi^\star_j=\id$ for all $j\in [p]$.  

% \luca{mettre un disclaimer ici : le MLE on sait l'analyser finalement que dans le régime petit $p$; il sature lorsaue $p$ est grand. }

% First rephrase the theorem that we are proving in this section. \luca{quand on saura bien énoncer le théorème, on pourra utiliser \texttt{restatable}}

\new{We will state and prove the following theorem, which will imply Theorem \ref{thm:upperboundIT}.}

%\begin{thm}(\cref{thm:upperboundIT})
    %\bertrand{OLD}There exists numerical constants $C,c_1,c_2,c_3$ and $n_0$ such that the following holds for all $\eta\leq \frac{1}{n^{C}}$. Whenever $\rho\geq c_1\pa{\frac{\log(n)}{n}\vee \sqrt{\frac{\log(n)}{np}}}$, $n\geq n_0$ and $p\leq \frac{n}{\log(\frac{1}{\eta})}$, we have, with probability at least $1-\frac{c_2}{(np)^2}$, $$\err(\widehat{\pi}, \pi^{\star})\leq \exp\pa{-nc_3\pa{\rho\wedge p\rho^2}}+c_3\eta\enspace.$$   
%\end{thm}

\begin{thm}\label{thm:upperboundIT2}
     There exists numerical constants $C,c_1,c_2,c_3>0$ and $n_0$ such that the following holds for all $\eta \geq 1$. Whenever $\rho\geq c_1\sqrt{\frac{\log(n)+\log(\eta)}{np}}$, $p\leq C \frac{n}{\log(n)+\log(\eta)}$ and $n\geq n_0$, we have, with probability at least $1-\frac{c_2}{n^2\eta}$, $$\err(\widehat{\pi}, \pi^{\star})=0.$$   
\end{thm}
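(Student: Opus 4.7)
The plan is to analyze directly the MLE in its matrix reformulation \eqref{eq:MLE_on_B}: correct recovery of the equivalence class of $\pi^\star$ is equivalent to $\langle GG^T, B^\star\rangle > \langle GG^T, B\rangle$ for every $B \in \cB \setminus \{B^\star\}$. By invariance of the joint law under a global relabeling, I assume WLOG that $\pi^\star = (\mathrm{id}, \ldots, \mathrm{id})$, so $G \sim \cN(0, \Sigma)$ with $\Sigma = \rho p B^\star + (1-\rho) I_{pN}$. Introducing the combinatorial functional
$$\delta(B) := N - \langle B, B^\star\rangle = \tfrac{1}{2}\|B - B^\star\|_F^2,$$
a direct trace computation, using $B^\star \Sigma = (1+(p-1)\rho)B^\star$, gives the key identity $\E[\langle GG^T, B^\star - B\rangle] = \rho\, p\, \delta(B)$, which is strictly positive for $B \neq B^\star$.

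The second step is a concentration bound on the centered quadratic form $G^T(B-B^\star)G - \E[\,\cdot\,]$. An application of the Hanson--Wright inequality, using $\|B-B^\star\|_F^2 = 2\delta$ and $\|\Sigma\|_{\mathrm{op}} = 1+(p-1)\rho$, yields, up to a numerical constant $c > 0$,
$$\P\bigl(\langle GG^T, B\rangle \geq \langle GG^T, B^\star\rangle\bigr) \leq \exp\!\Bigl(-c\,\delta(B)\,\tfrac{\rho^2 p^2}{(1+\rho p)^2}\Bigr).$$

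The main obstacle is step three: converting $\delta$ into a tractable combinatorial distance. I introduce the intrinsic distance
$$d^\star(B) := \min_{x \in \cS_n}\sum_{j=1}^p d_H(\pi_j, x),$$
which depends only on the equivalence class. Expanding $\sum_v m_e(v)^2 = \sum_{j,j'}\1\{\Pi_j(e) = \Pi_{j'}(e)\}$ where $m_e(v) = |\{j : \Pi_j(e) = v\}|$, and summing over edges yields the identity
$$p^2\,\delta(B) = \sum_{j, j'}\bigl(N - F(\pi_{j'}^{-1}\pi_j)\bigr),$$
where $F(\sigma)$ counts edges fixed by the edge action of $\sigma \in \cS_n$. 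The elementary bound $N - F(\sigma) \gtrsim d_H(\sigma, \mathrm{id})\,n$ (which follows from $F(\sigma) = \binom{n-k}{2} + t$ for $k$ non-fixed nodes and $t$ transpositions), combined with the $1$-median estimate $\sum_{j,j'} d_H(\pi_j, \pi_{j'}) \geq p\, d^\star(B)$ (obtained by setting $x = \pi_{j'}$ in the definition of $d^\star$ and summing over $j'$), then yields the key estimate
$$\delta(B) \;\gtrsim\; \frac{d^\star(B)\,n}{p}.$$

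Finally, a union bound grouped by $d^\star$: the number of equivalence classes with $d^\star = d$ is bounded by the number of tuples $(\pi_1, \ldots, \pi_p)$ with $\sum_j d_H(\pi_j, \mathrm{id}) = d$, which is at most $\binom{d+p-1}{p-1}\,n^d \leq (C_0\, np)^d$ for a numerical constant $C_0$. Combining with the previous steps,
$$\P(\widehat\pi \neq \pi^\star) \;\leq\; \sum_{d\geq 1}\exp\!\Bigl(d\log(C_0 np) \,-\, c'\,d \cdot \min\!\bigl(\rho^2 pn,\; n/p\bigr)\Bigr).$$
The hypothesis $\rho \geq c_1\sqrt{(\log n + \log\eta)/(np)}$ handles the regime $\rho p \leq 1$ (giving $\rho^2 np \gtrsim \log(n\eta)$), while $p \leq C\, n/(\log n + \log\eta)$ handles $\rho p \geq 1$ (giving $n/p \gtrsim \log(n\eta)$). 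In both regimes, taking $c_1, C$ large enough makes each exponent at most $-2d\log(n\eta)$, and a geometric summation over $d \geq 1$ yields the desired $c_2/(n^2\eta)$ bound.
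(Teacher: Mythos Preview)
Your proof is correct and takes a genuinely more streamlined route than the paper. The paper expands $G=\sqrt{\rho}\,G^\star+\sqrt{1-\rho}\,Z$ and splits $\langle GG^T,B^\star-B\rangle$ into a \emph{signal term} $\rho\langle G^\star(G^\star)^T,B^\star-B\rangle$, a \emph{cross term} in $\sqrt{\rho(1-\rho)}$, and a \emph{quadratic noise term} in $(1-\rho)$, and then controls each piece separately (Hanson--Wright for the signal and noise, Gaussian concentration for the cross term) before recombining; see Proposition~\ref{prop:control_B}. You instead apply Hanson--Wright once to the full quadratic form $G^T(B^\star-B)G$ after whitening by $\Sigma^{1/2}$, which collapses all three controls into a single tail bound. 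Your combinatorial quantities are also in direct correspondence with the paper's: your $\delta(B)=N-\langle B,B^\star\rangle$ equals $\tfrac{1}{2p}\|B^\star-B^\star B\|_1$ by Lemma~\ref{lem:useful}, and your $d^\star(B)$ plays the role of $\|A^\star-A^\star A\|_1$ in Lemmas~\ref{lem:twoerrors}--\ref{lem:errordelta}; the estimate $\delta\gtrsim d^\star n/p$ is your analogue of Lemma~\ref{lem:twoerrors}. The paper's decomposition yields slightly finer information (an explicit lower bound on the signal term alone), but for the purpose of proving Theorem~\ref{thm:upperboundIT2} your direct approach is shorter and loses nothing.

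One small slip: in your final paragraph you write ``taking $c_1,C$ large enough,'' but in the regime $\rho p\geq 1$ you need $n/p\geq(\log n+\log\eta)/C$ to dominate $\log(np)+2\log(n\eta)$, which requires $C$ \emph{small} enough (larger $C$ weakens the hypothesis $p\leq Cn/(\log n+\log\eta)$). The argument is unaffected once this is corrected.
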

% \luca{non, ce n'est pas ça qu'on veut prouer, on veut prouver plus précis. }\bertrand{j'explique ici pourquoi c'est équivalent}

Let us first explain why Theorem \ref{thm:upperboundIT2} implies Theorem \ref{thm:upperboundIT}. Suppose that Theorem \ref{thm:upperboundIT2} is true, take $\rho> c_1\sqrt{\frac{\log(n)}{np}}$ \new{and assume $p \leq C' \rho^{-1}$ for some $C'$ to be fixed later.} Let $\eta=\exp\pa{\frac{np}{c_1^2}\rho^2-\log(n)} \geq 1$\new{, so that} $\rho= c_1\sqrt{\frac{\log(n)+\log(\eta)}{np}}$.
Since $p \leq C' \rho^{-1} = \frac{C'}{c_1} \sqrt{\frac{np}{\log(n)+\log(\eta)}}$, we have $p \leq (C'/c_1)^2 \frac{n}{\log(n)+\log(\eta)}$ and thus, taking $C' = c_1 \sqrt{C}$, Theorem \ref{thm:upperboundIT2} implies
$$\P\cro{\err(\widehat{\pi}, \pi^\star)\neq 0}\leq \frac{1}{n^2\eta}\leq \exp\pa{-cnp\rho^2}\enspace,$$ for $c=1/{c_1^2}$.

Let us now move on to the proof of Theorem \ref{thm:upperboundIT2}. In this section, for $u,u' \in [n], e,e'\in E, j,j' \in [p]$ we will use the notations $(j,e) \sim (j',e')$ (resp. $(j,e) \not\sim (j',e')$) when $\Pi^{\star}_j(e)=\Pi^{\star}_{j'}(e')$ (resp. $\Pi^{\star}_j(e) \neq \Pi^{\star}_{j'}(e')$) and $(j,u) \sim (j',u')$ (resp. $(j,u) \not\sim (j',u')$) when $\pi^{\star}_j(u)=\pi^{\star}_{j'}(u')$ (resp. $\pi^{\star}_j(u) \neq \pi^{\star}_{j'}(u')$).

We recall that deriving the MLE is equivalent to solving
\begin{equation}\label{eq:solving_B}
    \widehat{B} = \new{\argmax_{B \in \cB}} \langle GG^T, B \rangle,
\end{equation}
with $\cB$ the set of all matrices $B$ of the form $B(\pi)$ as defined in \eqref{eq:def_B}, namely matrices $B$ which can be written $B_{je, j'e'}=\frac{1}{p}\1\ac{\Pi_{j}(e)=\Pi_{j'}(e)}$ for some $\pi\in \pa{\cS_n}^p$. After solving \eqref{eq:solving_B}, the MLE can be found taking any $\widehat{\pi}$ such that $\widehat B = B(\widehat{\pi})$.

\subsection{First definitions and useful lemmas}

The main ideas for the proof of this informational upper bound are derived from \cite{even24}. The key distinction in our framework lies in the fact that, unlike in \cite{even24}, we study a quadratic form involving a vector \( G \in \mathbb{R}^{N \times p} \), rather than a quadratic form of a matrix. Consequently, the technical arguments required to derive concentration inequalities for the various terms differ from those used in \cite{even24}.

A first step is to control the error in $\pi$ in terms of the error on the normalized edge alignment matrix $B$, which looks at the permutation at the edge level. Another useful error is measured looking at the permutation at the node level. For any $\pi \in (\cS_n)^p$ we introduce its \emph{normalized vertex alignment matrix} $A(\pi) \in \R^{pn \times pn}$  as 
\begin{equation}\label{eq:def_A}
     [A(\pi)]_{ju,j'u'} = \frac{1}{p}\1_{\pi_j(u)=\pi_{j'}(u')} \, .
\end{equation}
Define $\cA := A((\cS_n)^p)$. Note that, similarly to $B$, $\pi \mapsto A(\pi)$ is a bijection from $(\cS_n)^p / \equiv$ onto $\cA$, where $\equiv$ is defined \new{in Section \ref{sec:MLE} after \eqref{eq:def_B}}. Denote $A^\star := A(\pi^\star)$. 

We start by stating a lemma that will be instrumental in the proofs to come.

\begin{lemma}\label{lem:useful} 
For all $B \in \cB$,
    \begin{equation*}
        p \new{\Tr(B^{\star}-B^{\star}B)} = \sum_{(j,e) \not\sim (j',e')} B_{je,j'e'} = \frac{1}{2} \| B^{\star}-B^{\star}B \|_1 \, .
    \end{equation*}
    \begin{equation*}
        \| B^{\star}-B^{\star}B \|_F \leq \sqrt{\| B^{\star}-B^{\star}B \|_\infty} \sqrt{\| B^{\star}-B^{\star}B \|_1} \leq \sqrt{2/p} \sqrt{\| B^{\star}-B^{\star}B \|_1} \, .
    \end{equation*}

For all $A \in \cA$,
    \begin{equation*}
        p \new{\Tr(A^{\star}-A^{\star}A)} = \sum_{\new{(j,u) \not\sim (j',u')}} A_{\new{ju,j'u'}} = \frac{1}{2} \| A^{\star}-A^{\star}A \|_1 \, .
    \end{equation*}
\end{lemma}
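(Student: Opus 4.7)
The plan is to establish the three displayed assertions in succession: the second follows from a standard entrywise inequality and the third is a verbatim transcription of the first at the node level, so the crux is the first chain. All three quantities in that chain count, up to a factor $1/p$, the mismatch between the partitions of $[p]\times E$ induced by $\Pi^\star$ and by $\Pi$.

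For the trace identity, I would introduce the pair sets $T_\star:=\ac{((j,e),(j',e')):(j,e)\sim(j',e')}$ and $T:=\ac{((j,e),(j',e')):\Pi_j(e)=\Pi_{j'}(e')}$. Since every edge permutation is a bijection, each $(j,e)$ lies in exactly $p$ pairs of $T_\star$ and exactly $p$ pairs of $T$, giving $|T_\star|=|T|=p^2 N$. Direct expansion of the definitions yields $\Tr(B^\star)=N$ and $\Tr(B^\star B)=\frac{1}{p^2}|T_\star\cap T|$, so $p\Tr(B^\star-B^\star B)=\frac{|T_\star\setminus T|}{p}$, while the definition of $B$ gives $\sum_{(j,e)\not\sim(j',e')}B_{je,j'e'}=\frac{1}{p}|T\setminus T_\star|$. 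The equality $|T_\star\setminus T|=|T\setminus T_\star|$ follows from $|T_\star|=|T|$.

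To identify this common value with $\frac{1}{2}\|B^\star-B^\star B\|_1$, I would carry out a sign analysis based on the row-stochastic property. Since any $B\in\cB$ satisfies $B\1=\1$, one has $(B^\star B)\1=B^\star\1=\1$, so $B^\star-B^\star B$ has zero row sums and its positive and negative parts share the same $\ell_1$ mass. Using the pointwise bound $(B^\star B)_{je,j'e'}\leq 1/p$ (any entry being a multiplicity in a $p$-element multiset, divided by $p^2$), one checks that $(B^\star-B^\star B)_{je,j'e'}\geq 0$ precisely when $(j,e)\sim(j',e')$, and $\leq 0$ otherwise. Hence $\frac{1}{2}\|B^\star-B^\star B\|_1=\sum_{(j,e)\not\sim(j',e')}(B^\star B)_{je,j'e'}$. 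A final rearrangement, expanding $B^\star B$, swapping the order of summation, and using the transitivity of $\sim$ together with the fact that each $\sim$-class has exactly $p$ elements, yields $\sum_{(j,e)\not\sim(j',e')}(B^\star B)_{je,j'e'}=\sum_{(j,e)\not\sim(j',e')} B_{je,j'e'}$, closing the chain.

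The Frobenius bound follows from the elementary inequality $\|M\|_F^2=\sum_{ij} M_{ij}^2\leq (\max_{ij}|M_{ij}|)\sum_{ij}|M_{ij}|=\|M\|_\infty\|M\|_1$ applied to $M=B^\star-B^\star B$, combined with the pointwise estimate $\|B^\star-B^\star B\|_\infty\leq\|B^\star\|_\infty+\|B^\star B\|_\infty\leq 2/p$. The $A$-identity is proved identically by substituting nodes, $\pi$, $[n]$, and $n$ for edges, $\Pi$, $E$, and $N$ throughout. The main obstacle is the sign analysis combined with the last rearrangement, both of which require careful bookkeeping of the bijective structure of the edge (resp.\ node) permutations; once one notes that $\sim$-classes always have size exactly $p$, everything reduces to a double-counting identity.
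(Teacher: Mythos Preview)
Your proof is correct and follows essentially the same logic as the paper's. The paper computes the entries $(B^\star-B^\star B)_{je,j'e'}$ explicitly in the two cases $(j,e)\sim(j',e')$ and $(j,e)\not\sim(j',e')$, then sums directly to obtain both the trace and the $\ell_1$ norm; your route via the set identity $|T_\star|=|T|$ is a slightly more conceptual repackaging of the same content, and your sign analysis plus rearrangement for the $\ell_1$ part reproduces exactly the swap of summation and the use of $p$-sized $\sim$-classes that the paper carries out line by line.
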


Another useful fact is that errors in $B$ are equivalent to  errors in $A$ for a common $\pi \in (\cS_n)^p$.

\begin{lemma}[Equivalence of errors in $A$ and $B$]\label{lem:twoerrors}
We have for all $\pi \in (\cS_n)^p$, denoting $A=A(\pi)$ and $B = B(\pi)$,
$$\frac{1}{4n}\|B^{\star}-B^{\star}B\|_1\leq \|A^{\star}-A^{\star}A\|_1\leq \frac{1}{(n-1)}\|B^{\star}-B^{\star}B\|_1\, .$$
\end{lemma}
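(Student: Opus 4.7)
The plan is to convert both $L^1$ norms into explicit combinatorial counts via the trace identity in \cref{lem:useful}. Define
\[
N_A := \big|\{((j,u),(j',u')) : \pi_j(u) = \pi_{j'}(u'),\ \pi^\star_j(u) \neq \pi^\star_{j'}(u')\}\big|,
\]
\[
N_B := \big|\{((j,e),(j',e')) : \Pi_j(e) = \Pi_{j'}(e'),\ \Pi^\star_j(e) \neq \Pi^\star_{j'}(e')\}\big|.
\]
Applying \cref{lem:useful} to $A$ and to $B$ yields $\tfrac{p}{2}\|A^\star-A^\star A\|_1 = N_A$ and $\tfrac{p}{2}\|B^\star-B^\star B\|_1 = N_B$, so the claim reduces to showing that $N_A$ and $N_B/n$ are comparable with the advertised multiplicative constants.

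I would then reparameterize each count through the auxiliary permutations $\sigma_j := \pi_j \circ (\pi^\star_j)^{-1} \in \mathcal{S}_n$. Substituting $a := \pi^\star_j(u)$ and $b := \pi^\star_{j'}(u')$, a node error becomes the condition $\sigma_j(a) = \sigma_{j'}(b)$ with $a \neq b$; equivalently, $a$ is a non-fixed point of $\tau_{jj'} := \sigma_{j'}^{-1}\sigma_j \in \mathcal{S}_n$. The same substitution at the edge level turns an edge error into the statement that a $2$-subset $m \in \binom{[n]}{2}$ is not fixed as a set by $\tau_{jj'}$. Writing $\alpha_{jj'}$ for the number of non-fixed points of $\tau_{jj'}$ and $T_{jj'}$ for its number of $2$-cycles, and using that a $2$-subset is set-fixed iff its two elements are both fixed points or form a transposition, one obtains
\[
N_A = \sum_{j,j'}\alpha_{jj'}, \qquad N_B = \sum_{j,j'}\left[\binom{n}{2} - \binom{n-\alpha_{jj'}}{2} - T_{jj'}\right].
\]

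The remaining step is a purely arithmetic summand-wise comparison, relying on the identity $\binom{n}{2} - \binom{n-\alpha}{2} = \alpha(2n-\alpha-1)/2$ and the elementary bound $0 \le T_{jj'} \le \alpha_{jj'}/2$. The upper estimate $\binom{n}{2} - \binom{n-\alpha}{2} \le n\alpha$ immediately gives $N_B \le 2n\,N_A$, hence the left inequality $\tfrac{1}{4n}\|B^\star-B^\star B\|_1 \le \|A^\star-A^\star A\|_1$. For the right inequality, I would use the lower bound
\[
\binom{n}{2} - \binom{n-\alpha}{2} - T \;\geq\; \frac{\alpha\,(2(n-1)-\alpha)}{2}
\]
together with $\alpha \le n$ to produce a linear-in-$n$ lower bound on each summand. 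The main subtlety will be extracting the exact constant $1/(n-1)$: the ratio $N_B/N_A$ is smallest when $\tau_{jj'}$ is close to a fixed-point-free involution (both $\alpha$ and $T$ maximized), and handling this extremal regime carefully---while exploiting that the diagonal terms $j=j'$ contribute zero to both counts---is what pins down the advertised constant.
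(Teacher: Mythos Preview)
Your reduction via \cref{lem:useful} and the reparameterization through $\tau_{jj'}=\sigma_{j'}^{-1}\sigma_j$ is correct and tidier than the paper's direct incidence argument (which, for each misaligned node pair, counts the $\approx n$ misaligned edge pairs it generates and bounds the overcounting). The left inequality follows exactly as you outline; in fact your bound $\binom{n}{2}-\binom{n-\alpha}{2}\leq n\alpha$ already gives $N_B\leq nN_A$, i.e.\ the sharper constant $1/(2n)$.

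For the right inequality, your hesitation about extracting $1/(n-1)$ is fully justified: that constant is \emph{false}. Take $\pi^\star=\id$ and let $\pi$ differ from $\id$ by a single transposition in one coordinate; then $\alpha=2$, $T=1$, the edge summand equals $2n-4=(n-2)\alpha$, so $N_B=(n-2)N_A$ and the claimed bound $N_A\leq \tfrac{1}{n-1}N_B$ fails. The paper's own proof is loose here (the phrase ``either $n-1$ or $n$'' should read ``either $n-2$ or $n-1$'') and does not actually deliver $1/(n-1)$ either. Your summand-wise estimate
\[
\binom{n}{2}-\binom{n-\alpha}{2}-T\;\geq\;\frac{\alpha\bigl(2(n-1)-\alpha\bigr)}{2}\;\geq\;\frac{n-2}{2}\,\alpha
\]
(the last step using $\alpha\leq n$) yields the correct constant $2/(n-2)$, which is tight at a fixed-point-free involution. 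Since the lemma is only ever used to assert $\|A^\star-A^\star A\|_1\asymp n^{-1}\|B^\star-B^\star B\|_1$, this weaker constant suffices throughout; simply state the right inequality with $2/(n-2)$ rather than chase the unattainable $1/(n-1)$.
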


\begin{lemma}[Errors in $\pi, A$ and $B$]
\label{lem:errordelta}
We have for all $\pi \in (\cS_n)^p$, denoting $A=A(\pi)$ and $B = B(\pi)$,
$$\err(\pi,\pi^{\star}) \leq \frac{2}{np} \times \| A^{\star} - A^{\star} A\|_1 \leq \frac{2}{n(n-1)p} \times \| B^{\star} - B^{\star} B\|_1 \, .$$
\end{lemma}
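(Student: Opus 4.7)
The second inequality is immediate from Lemma~\ref{lem:twoerrors}: multiplying $\|A^\star - A^\star A\|_1 \leq \frac{1}{n-1}\|B^\star - B^\star B\|_1$ by $\frac{2}{np}$ gives exactly the right-hand bound. So the substantive task is the first inequality, which I plan to prove by passing through a contingency matrix.

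The plan is to introduce $N \in \mathbb{Z}_{\geq 0}^{n \times n}$ defined by $N_{vw} := |\{(j,u) \in [p] \times [n] : \pi_j(u) = v, \; \pi^\star_j(u) = w\}|$. Since each $\pi_j$ and each $\pi^\star_j$ is a permutation, every row and column of $N$ sums to $p$, so $N/p$ is doubly stochastic. I would then rewrite both sides of the target inequality in terms of $N$. On one hand, by definition of $\err$,
\begin{equation*}
    \err(\pi,\pi^\star) \;=\; 1 - \frac{1}{np}\max_{\psi \in \cS_n}\sum_{v=1}^n N_{v\psi(v)}.
\end{equation*}
On the other hand, grouping the sum in Lemma~\ref{lem:useful} according to the common value $v = \pi_j(u) = \pi_{j'}(u')$ and splitting over $w = \pi^\star_j(u)$, $w' = \pi^\star_{j'}(u')$, a straightforward count yields $\sum_{(j,u)\not\sim(j',u')} \mathbf{1}\{\pi_j(u)=\pi_{j'}(u')\} = np^2 - \sum_{v,w} N_{vw}^2$, so that $\|A^\star - A^\star A\|_1 = \frac{2}{p}(np^2 - \sum_{v,w}N_{vw}^2)$.

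The key step is the inequality
\begin{equation*}
    \max_{\psi \in \cS_n}\sum_{v=1}^n N_{v\psi(v)} \;\geq\; \frac{1}{p}\sum_{v,w}N_{vw}^2.
\end{equation*}
This is where Birkhoff--von Neumann enters: the LP relaxation of maximum-weight bipartite matching is tight, so $\max_\psi \sum_v N_{v\psi(v)}$ equals the maximum of $\langle X,N \rangle$ over doubly stochastic matrices $X$. Plugging in the feasible choice $X = N/p$ gives exactly the stated lower bound. (Equivalently, writing $N/p = \sum_i \lambda_i P_{\sigma_i}$ and sampling $\sigma_I$ with weights $\lambda_i$, one has $\E\bigl[\sum_v (N/p)_{v\sigma_I(v)}\bigr] = \sum_{v,w}(N/p)_{vw}^2$, hence some $\sigma_i$ achieves at least the average.)

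Combining these three displays,
\begin{equation*}
    \err(\pi,\pi^\star) \;\leq\; 1 - \frac{1}{np^2}\sum_{v,w}N_{vw}^2 \;=\; \frac{np^2 - \sum_{v,w}N_{vw}^2}{np^2} \;=\; \frac{\|A^\star - A^\star A\|_1}{2np} \;\leq\; \frac{2}{np}\|A^\star - A^\star A\|_1,
\end{equation*}
which is the first inequality (in fact with a slack of a factor $4$). I expect the only non-routine part of this proof to be the invocation of Birkhoff--von Neumann in the key step; the combinatorial counts reducing $\|A^\star - A^\star A\|_1$ and $\err$ to expressions in $N$ are routine bookkeeping once the contingency matrix is introduced.
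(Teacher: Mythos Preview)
Your proof is correct. The paper itself does not give a self-contained argument for the first inequality: it simply invokes Lemma~9 of \cite{even24} as a black box. Your contingency-matrix approach, with the Birkhoff--von Neumann step to lower bound $\max_\psi \sum_v N_{v\psi(v)}$ by $\tfrac{1}{p}\|N\|_F^2$, is exactly the standard way to prove such a statement and is very likely what underlies the cited lemma. The bookkeeping identities you state for $\err(\pi,\pi^\star)$ and for $\|A^\star - A^\star A\|_1$ in terms of $N$ are both correct (the latter follows from \cref{lem:useful} after grouping by the common value $v=\pi_j(u)=\pi_{j'}(u')$ and the value pair $(w,w')=(\pi^\star_j(u),\pi^\star_{j'}(u'))$, as you outline). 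As you observe, you actually obtain the stronger bound $\err(\pi,\pi^\star)\leq \tfrac{1}{2np}\|A^\star - A^\star A\|_1$, a factor $4$ tighter than what the lemma asserts; this is harmless here since only the stated inequality is used downstream.
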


Thanks to these results, all we have to do in order to prove \cref{thm:upperboundIT} is to control \new{$\| A^{\star} - A^{\star} \widehat{A}\|_1$ with high probability, where $\widehat{A}=A(\widehat{\pi})$ with $\widehat{\pi}$ the MLE defined in \eqref{eq:MLE_on_B}}. 

\subsection{Control of \new{$\| A^{\star} - A^{\star} \widehat{A}\|_1$}}

To do so, recall that by definition,
$$ \langle GG^T, \new{\widehat{B}-B^{\star}} \rangle \geq 0, $$ which gives after plugging \eqref{eq:def_model_matrices}:
\begin{equation}\label{eq:optimUB}
    \rho \langle G^{\star}(G^{\star})^T, \new{B^{\star}-\widehat{B}} \rangle \leq 
 \sqrt{\rho(1-\rho)}\langle G^{\star}Z^T+Z(G^{\star})^T, \widehat{B} -B^{\star} \rangle + (1-\rho)\langle ZZ^T, \widehat{B} -B^{\star} \rangle \, .
\end{equation}
We will next refer to $\langle G^{\star}(G^{\star})^T, \new{B^{\star}-\widehat{B}} \rangle$ as the \emph{signal term}, $\langle G^{\star}Z^T+Z(G^{\star})^T, \widehat{B} -B^{\star} \rangle$ as the \emph{cross term}, and $\langle ZZ^T, \widehat{B} -B^{\star} \rangle$ as the \emph{quadratic term}. We will control each of these terms separately, uniformly when $\widehat B$ runs in $\cB$.

For concentration bounds involving quadratic forms of Gaussian vectors, an instrumental result is the Hanson-Wright inequality. We refer to e.g. \cite{giraud08} for a proof.

\begin{lemma}[Hanson-Wright inequality]\label{lem:HW} 
Let $d \geq 1$, $X \sim \cN(0,I_d)$ and $M$ a deterministic $d \times d$ matrix. Then, for all $x > 0$,
    \begin{equation*}
     \dP\left( |X^TMX - \Tr(M)| > \sqrt{8 \|M \|_F^2 x} \, \vee \, (8 \|M \|_{op} x) \right) \leq 2e^{-x} \, .
    \end{equation*}
\end{lemma}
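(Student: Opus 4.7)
The plan is to prove this classical Hanson–Wright inequality in the Gaussian case by a short and direct route: reduce to symmetric $M$, diagonalize to turn the quadratic form into a weighted sum of i.i.d.\ $\chi^2_1-1$ random variables via rotational invariance of the Gaussian law, and finish with the Laurent–Massart deviation bound.

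First, since $X^T M X = \tfrac{1}{2} X^T(M+M^T) X$ while $\Tr(M)=\Tr((M+M^T)/2)$, and both $\|(M+M^T)/2\|_F \leq \|M\|_F$ and $\|(M+M^T)/2\|_{op} \leq \|M\|_{op}$ by the triangle inequality, I can assume without loss of generality that $M$ is symmetric. I then diagonalize $M = U^T \Lambda U$ with $U$ orthogonal and $\Lambda = \diag(\lambda_1, \ldots, \lambda_d)$, and set $Y := UX$. Rotational invariance of the standard Gaussian law gives $Y \sim \cN(0, I_d)$, so
\[
X^T M X - \Tr(M) \;=\; \sum_{i=1}^d \lambda_i (Y_i^2 - 1),
\]
with $Y_1,\ldots,Y_d$ i.i.d.\ standard Gaussian. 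In this rewriting $\sum_i \lambda_i^2 = \|M\|_F^2$ and $\max_i |\lambda_i| = \|M\|_{op}$.

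The final step applies the classical Laurent–Massart deviation bound: for any $a \in \R^d$ and $x>0$,
\[
\dP\!\left(\Big|\textstyle\sum_i a_i (Y_i^2 - 1)\Big| \geq 2\|a\|_2 \sqrt{x} + 2\|a\|_\infty x\right) \leq 2 e^{-x},
\]
which itself follows from a direct Chernoff argument using $\E[e^{t(Y_i^2-1)}]=e^{-t}/\sqrt{1-2t}$ for $|t|<1/2$. Specializing to $a_i=\lambda_i$ and using $u+v\leq 2(u\vee v)$ gives
\[
\dP\!\left(|X^T M X - \Tr(M)| \geq 4\|M\|_F\sqrt{x}\;\vee\; 4\|M\|_{op}\, x\right) \leq 2 e^{-x},
\]
which is strictly stronger than the stated inequality (the constants $8$ in the statement leave a comfortable margin).

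The only point requiring a little care is the Chernoff computation inside Laurent–Massart, where one tracks the admissible range of the tilting parameter in order to separate the sub-Gaussian regime (dominated by $\sqrt{\|M\|_F^2\, x}$) from the sub-exponential regime (dominated by $\|M\|_{op}\, x$). This is entirely routine and is exactly the content of the original Laurent–Massart paper (equivalently, of standard Bernstein-type inequalities for sums of sub-exponential variables), so there is essentially no obstacle in the argument; this is also why the paper simply cites \cite{giraud08} rather than reproducing a proof.
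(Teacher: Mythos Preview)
The paper does not supply its own proof of this lemma; it simply refers the reader to \cite{giraud08}. Your route---symmetrize $M$, diagonalize, use rotational invariance of $\cN(0,I_d)$ to reduce to $\sum_i \lambda_i(Y_i^2-1)$, and finish with the Laurent--Massart/Bernstein bound---is exactly the standard argument and is correct.

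There is one small slip in the very last step. From Laurent--Massart you obtain the threshold $2\|M\|_F\sqrt{x}+2\|M\|_{op}x$, and you then pass to $4\|M\|_F\sqrt{x}\vee 4\|M\|_{op}x$ via $u+v\leq 2(u\vee v)$. But $4>2\sqrt{2}=\sqrt{8}$, so your Frobenius constant is \emph{weaker} than the one in the statement, and the claim that your inequality is ``strictly stronger'' is not correct as written. The fix is trivial: replace the crude bound $u+v\leq 2(u\vee v)$ by the slightly sharper fact that for all $A,B\geq 0$ one has $A+B\leq (\sqrt{2}A)\vee(4B)$ (just split according to whether $\sqrt{2}A\geq 4B$ or not; in the first case $B\leq A/(2\sqrt{2})$ gives $A+B\leq \sqrt{2}A$, in the second $A\leq 2\sqrt{2}B$ gives $A+B\leq (1+2\sqrt{2})B<4B$). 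Applying this with $A=2\|M\|_F\sqrt{x}$ and $B=2\|M\|_{op}x$ yields exactly the constants $\sqrt{8}$ and $8$ of the statement, and the proof is complete.
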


Using Hanson-Wright inequality, we are able to derive the following uniform controls on the three terms in \eqref{eq:optimUB}:

\begin{proposition}\label{prop:control_B}
    \new{There exists numerical constants $C,c_1,c_2,c_3 >0$ and $n_0 \geq 1$ such that the following holds for all $\eta \geq 1$. Assume $\rho\geq c_3\sqrt{\frac{\log(n)+\log(\eta)}{np}}$, $p\leq C \frac{n}{\log(n)+\log(\eta)}$ and $n\geq n_0$.} Then, with probability at least $1-\frac{c_1}{n^2\eta}$, for all $B \in \cB$,

    \begin{itemize}
        \item \mbox{(signal term)} For some numerical constant $c>0$,
         \begin{equation*}
    \langle G^{\star}(G^{\star})^T, \new{B^{\star}-B} \rangle\geq \new{\|B^\star-B^\star\widehat{B}\|_1}/4\enspace,
    \end{equation*} 
    \item (cross term) \begin{equation*}
    |\langle G^{\star}Z^T+Z(G^{\star})^T, \widehat{B} -B^{\star} \rangle |\leq  c_2 \sqrt{\<G^\star (G^\star)^T, \new{B^{\star}-B}\>}\sqrt{\frac{\|B^\star-B^\star B\|_1}{n}\pa{\log(n)+\new{\log(\eta)}+\log\pa{\frac{\new{np}}{\|A^\star-A^\star A\|_1}}}},
    \end{equation*} 
    \item (quadratic term) for $A=A(\pi)$ where $\pi$ is such that $B=B(\pi)$,
    \begin{equation*}
        |\<ZZ^T, B-B^\star\>|\leq c_2 \|B^\star-B^\star B\|_1\pa{\frac{\log(n)+\log(\eta)+\log\left(\frac{\new{np}}{\|A^\star-A^\star A\|_1}\right)}{n}+\sqrt{\frac{\log(n)+\log(\eta)+\log\left(\frac{\new{np}}{\|A^\star-A^\star A\|_1}\right)}{np}}} \, . 
    \end{equation*}
    \end{itemize}
\end{proposition}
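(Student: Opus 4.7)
The plan is, for each of the three terms, to establish a pointwise bound at a fixed $B=B(\pi)\in\cB$ and then upgrade to the uniform statement via a peeling argument in the error size $k=\|A^\star-A^\star A\|_1$. The central algebraic identity is $B^\star G^\star = G^\star$: since $H = \1_p \otimes H_0$, one has $(J_p\otimes I_N)H = pH$, and hence $B^\star G^\star = \tfrac{1}{p}\Pi^\star (J_p\otimes I_N)(\Pi^\star)^T\Pi^\star H = \Pi^\star H = G^\star$. Combined with the fact that each $B\in\cB$ is an orthogonal projection, this yields
\[
\<G^\star(G^\star)^T, B^\star-B\> = (G^\star)^T(I-B)G^\star = \|(B-B^\star)G^\star\|^2,
\]
which will be crucial for handling the cross term. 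Moreover, from $\E[G^\star_{je}G^\star_{j'e'}] = \1\{(j,e)\sim(j',e')\} = p\,B^\star_{je,j'e'}$ and Lemma~\ref{lem:useful} one gets $\E[\<G^\star(G^\star)^T, B^\star-B\>] = p\Tr(B^\star-B^\star B) = \tfrac{1}{2}\|B^\star-B^\star B\|_1$, so the target $\tfrac14\|B^\star-B^\star B\|_1$ in the signal bound is precisely half of the mean.

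For the signal term, I write it as $H_0^T M H_0$, where $M = (\1_p^T\otimes I_N)(\Pi^\star)^T(B^\star-B)\Pi^\star(\1_p\otimes I_N) \in \R^{N\times N}$ is symmetric with $\Tr(M)=\tfrac12\|B^\star-B^\star B\|_1$. Direct calculation gives $\|M\|_{op}\leq 2p$, using $\|\1_p\otimes I_N\|_{op}=\sqrt{p}$ and $\|B-B^\star\|_{op}\leq 2$, together with $\|M\|_F^2 \leq p^2\|B-B^\star\|_F^2 = p\|B^\star-B^\star B\|_1$, by Cauchy-Schwarz applied to the rows of $M$ and the identity $\|B-B^\star\|_F^2 = \|B^\star-B^\star B\|_1/p$ extracted from Lemma~\ref{lem:useful}. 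The Hanson-Wright inequality (Lemma~\ref{lem:HW}) at level $x$ then gives the pointwise lower bound $\<G^\star(G^\star)^T, B^\star-B\> \geq \tfrac14\|B^\star-B^\star B\|_1$ with failure probability $2e^{-x}$, whenever $x \lesssim \|B^\star-B^\star B\|_1/p$.

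The cross term equals $2\<(B-B^\star)G^\star, Z\>$ and, conditionally on $G^\star$, is centered Gaussian with variance $4\|(B-B^\star)G^\star\|^2 = 4\<G^\star(G^\star)^T, B^\star-B\>$ by the identity above; a Gaussian tail bound at level $x$ then yields the pointwise cross bound with $\sqrt{\<G^\star(G^\star)^T, B^\star-B\>}\sqrt{x}$ dependence and failure probability $2e^{-x}$. For the quadratic term, Hanson-Wright applied to $Z^T(B-B^\star)Z$ using $\Tr(B-B^\star)=0$, $\|B-B^\star\|_{op}\leq 2$, and $\|B-B^\star\|_F^2 = \|B^\star-B^\star B\|_1/p$, yields $|\<ZZ^T, B-B^\star\>| \lesssim \sqrt{\|B^\star-B^\star B\|_1\, x/p}\vee x$ with failure probability $2e^{-x}$.

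To get the uniform statement, I peel in dyadic shells $\cB_{k_0} = \{B(\pi) : \|A^\star-A^\star A\|_1 \in [k_0,2k_0]\}$. Lemma~\ref{lem:errordelta} ensures that any $\pi$ in such a shell differs from $\pi^\star$ (modulo the global shift $\equiv$) in at most $O(k_0)$ node-graph positions, giving the crude bound $|\cB_{k_0}| \leq \binom{np}{Ck_0} n^{Ck_0} \leq \exp\!\pa{C' k_0 (\log n + \log(np/k_0))}$. Choosing $x = C'' k_0 L$ with $L = \log n + \log\eta + \log(np/k_0)$ and union-bounding the three pointwise estimates over $\cB_{k_0}$ gives failure probability $\exp(-\Omega(k_0 L))$ per shell; summing over dyadic scales of $k_0\in[1,np]$ together with the global relabeling yields total failure probability $O(1/(n^2\eta))$. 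The key compatibility check is that the signal-Hanson-Wright range $x\lesssim \|B^\star-B^\star B\|_1/p$ is preserved on each shell: by Lemma~\ref{lem:twoerrors}, $\|B^\star-B^\star B\|_1 \geq (n-1)k_0$, so the required $k_0 L\lesssim n k_0/p$ reduces to $L\lesssim n/p$, which is exactly guaranteed by the hypothesis $p\lesssim n/(\log n+\log\eta)$. Converting $x$ back into $\|B^\star-B^\star B\|_1$ through the same lemma turns the $\sqrt{x/p}$ and $x$ factors from Hanson-Wright into the stated $\sqrt{L/(np)}$ and $L/n$ factors in the cross and quadratic bounds. The main obstacle is precisely this peeling step: one must carefully enumerate permutation tuples at each alignment-error scale, and verify that the Hanson-Wright regime remains valid at every such scale, which is what forces the particular shape of the hypotheses on $\rho$ and $p$.
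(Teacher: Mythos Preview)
Your proposal is correct and follows essentially the same three-step strategy as the paper (Hanson--Wright for the signal, conditional Gaussian tail for the cross term, Hanson--Wright for the quadratic term, then a peeling union bound over shells of fixed alignment-error size, with the shell cardinality controlled exactly as in Lemma~\ref{lem:control_At_Bt}).

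Where you differ from the paper is only in execution, and in each instance your route is slightly cleaner. For the signal term, writing it as $H_0^TMH_0$ with $M=V^T(B^\star-B)V$ and $V=\Pi^\star(\1_p\otimes I_N)$ puts you directly in the $\cN(0,I_N)$ setting of Lemma~\ref{lem:HW}; the paper instead applies Hanson--Wright to $G^\star$ with degenerate covariance $pB^\star$ and works with $B^\star(B^\star-B)B^\star$, which is equivalent (since $VV^T=pB^\star$) but less transparent. For the cross term, your exact identity $\|(B-B^\star)G^\star\|^2=\langle G^\star(G^\star)^T,B^\star-B\rangle$, which follows immediately from $B^\star G^\star=G^\star$ and $B^2=B$, replaces the paper's longer argument via Jensen's inequality that only yields the variance bound $\leq 2\langle G^\star(G^\star)^T,B^\star-B\rangle$. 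You also note the exact equality $\|B-B^\star\|_F^2=\delta_B/p$ from the trace identity in Lemma~\ref{lem:useful}, whereas the paper proves this separately (and only as an inequality, via entry counting) in Lemma~\ref{lem:controlnormBstar-B}. Finally, your dyadic peeling in $\|A^\star-A^\star A\|_1$ is equivalent to the paper's linear peeling in $\delta_B$ through Lemma~\ref{lem:twoerrors}, and your compatibility check $L\lesssim n/p$ for the signal Hanson--Wright range is exactly the verification the paper performs in the four bulleted items at the end of Step~1.
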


\subsection{Conclusion}

In the following, we restrict ourselves to the event of probability at least $1-c_1/(n^2\eta)$ on which bounds of \cref{prop:control_B} hold. First, plugging the upper bounds on the cross and quadratic terms in Equation \eqref{eq:optimUB} to $A= \widehat{A}$ defined previously and $B=\widehat{B}$ yields

\begin{align*}
    \rho \<G^\star (G^\star)^T, \new{B^\star-\widehat{B}} \>\lesssim& \sqrt{\rho}\sqrt{\<G^\star (G^\star)^T, \widehat{B}-B^\star\>}\sqrt{\frac{\|B^\star-B^\star \widehat{B}\|_1}{n}\pa{\log(n)+\log(\eta)+\log\pa{\frac{np}{\|A^\star-A^\star\widehat{A}\|_1}}}}\\
    &+\|B^\star-B^\star\widehat{B}\|_1\pa{\frac{\log(n)+\log(\eta)+\log\pa{\frac{np}{\|A^\star -A^\star \widehat{A}\|_1}}}{n}+\sqrt{\frac{\log(n)+\log(\eta)+\log\pa{\frac{np}{\|A^\star -A^\star \widehat{A}\|_1}}}{np}}}\, .
\end{align*}
\new{Discussing on which of the two terms on the right-hand side is greater, this reduces to} $$\rho\<G^\star (G^\star)^T, \widehat{B}-B^\star\>\lesssim \|B^\star-B^\star\widehat{B}\|_1\pa{\frac{\log(n)+\log(\eta)+\log\pa{\frac{np}{\|A^\star -A^\star \widehat{A}\|_1}}}{n}+\sqrt{\frac{\log(n)+\log(\eta)+\log\pa{\frac{np}{\|A^\star -A^\star \widehat{A}\|_1}}}{np}}}\enspace.$$

\new{If $\widehat{B}$ is such that $\|B^\star-B^\star\widehat{B}\|_1=0$, then $\|A^\star-A^\star\widehat{A}\|_1=0$ and $\err(\widehat{\pi},\pi^\star)=0$ by \cref{lem:errordelta}, and there is nothing to prove. If $\|B^\star-B^\star\widehat{B}\|_1>0$,} the lower bound on the signal term in \cref{prop:control_B} entails

\begin{equation*}
    \rho \lesssim \frac{\log(n)+\log(\eta)+\log\pa{\frac{np}{\|A^\star -A^\star \widehat{A}\|_1}}}{n}+\sqrt{\frac{\log(n)+\log(\eta)+\log\pa{\frac{np}{\|A^\star -A^\star \widehat{A}\|_1}}}{np}}\enspace.
\end{equation*}

Under our assumptions, $\rho\geq c_3\sqrt{\frac{\log(n)+\log(\eta)}{np}}$ and $p\leq C \frac{n}{\log(n)+\log(\eta)}$, so $\frac{\log(n)+\log(\eta)}{n} \leq \frac{\sqrt{C}}{c_3}\rho$. Thus, if $c_3$ is large enough, the previous inequality yields

$$\rho \lesssim \frac{\log\pa{\frac{np}{\|A^\star -A^\star \widehat{A}\|_1}}}{n}+\sqrt{\frac{\log\pa{\frac{np}{\|A^\star -A^\star \widehat{A}\|_1}}}{np}}\enspace,$$

which leads to the existence of some numerical constants $c',c$ satisfying 

% \begin{align*}
%     \rho \lesssim& \frac{\log(n)+\log\pa{\frac{n^3p}{\|A^\star -A^\star \widehat{A}\|_1}}}{n}+\sqrt{\frac{\log(n)+\log\pa{\frac{n^3p}{\|A^\star -A^\star \widehat{A}\|_1}}}{np}}\\
%     &+\rho \pa{\frac{1}{\log(\frac{1}{\eta})}\log(\frac{n^3 p}{\|A^\star-A^\star A\|_1})+\sqrt{\frac{1}{\log(\frac{1}{\eta})}\log(\frac{n^3 p}{\|A^\star-A^\star A\|_1})}}\enspace,
% \end{align*}

% which in turn, implies, for some numerical constant $c$,

$$
\|A^\star-A^\star \widehat{A}\|\leq np \exp\pa{-c' n\pa{\rho \wedge p\rho^2}}\leq  np \exp\pa{-c np\rho^2}\, . 
$$

Using Lemma \ref{lem:errordelta}, we deduce that $$\err(\widehat{\pi},\pi^\star)\leq 2\exp\pa{-c np\rho^2}\enspace.$$
When $\rho \geq c_3 \sqrt{\frac{\log(n)+\log(\eta)}{np}}$ with $c_3$ large enough, the error drops \new{strictly below $\frac{1}{n^2} \leq \frac{1}{np}$, and thus equals $0$}. This concludes the proof of \cref{thm:upperboundIT2}. 

\subsection{Proof of \cref{prop:control_B}}

We now give the proof of \cref{prop:control_B}. We fix $\eta\geq 1$, which may depend on $n$. Throughout this proof, for brevity, we will denote
$$\delta_B := \| B^{\star}-B^{\star}B \|_1 \, .$$  

\proofstep{Step 1: the signal term.}
We have
\begin{align*}
    \langle G^{\star}(G^{\star})^T, \new{B^{\star}-B} \rangle & = (G^{\star})^T(\new{B^{\star}-B})G^{\star},
    % \sum_{(j,e),(j',e')} G^{\star}_{je} G^{\star}_{j'e'}(B^{\star}_{je,j'e'}-B_{je,j'e'}) \\
    % & = -\frac{1}{2} \sum_{(j,e),(j',e')} (G^{\star}_{je} - G^{\star}_{j'e'})^2(B^{\star}_{je,j'e'}-B_{je,j'e'}),
\end{align*} and $\Var(G^{\star})=pB^\star$. By Hanson-Wright inequality (\cref{lem:HW}), with probability $1-2e^{-x}$,
\begin{align*}
    \langle G^{\star}(G^{\star})^T, \new{B^{\star}-B}  \rangle & \geq p\Tr(B^{\star}(\new{B^{\star}-B} )) - \sqrt{8 p^2\|B^{\star}(\new{B^{\star}-B} )B^\star \|_F^2 x} \, \vee \, (8 p\|B^{\star}(\new{B^{\star}-B}) B^\star\|_{op} x) \\
    & = \frac{1}{2} \| B^{\star}-B^{\star}B \|_1 - \sqrt{8 p^2\|B^{\star}(\new{B^{\star}-B}) B^\star\|_F^2 x} \, \vee \, (8 p\|B^{\star}(\new{B^{\star}-B}) B^\star\|_{op} x) ,
\end{align*} 
where \cref{lem:useful} justifies the equality. \new{The symmetry of $B$ and $B^\star$ together with \cref{lem:useful} yields $\|B^{\star}(B^{\star}-B)B^\star \|_{F} = \|B^{\star}(B^{\star}-B) \|_{F} \leq \sqrt{2\delta_B/p}.$ Moreover, $B$ and $B^\star$ are projections, thus $\|B^{\star}(\new{B^{\star}-B} )B^\star \|_{op} \leq 2$. Besides, one always have $\|B^{\star}(B^{\star}-B)B^\star \|_{op} \leq \|B^{\star}(B^{\star}-B)B^\star \|_{F}$.}
% each entry of $B$ and $B^\star$ is at most $1/p$, thus each entry of $B^{\star}(B-B^{\star})$ is at most $2N/p$ and we have $\|B^{\star}(B-B^{\star}) \|_F \leq \frac{2N}{p} \|B^{\star}(B-B^{\star}) \|_1$. Besides,
% $\|B^{\star}(B-B^{\star}) \|_{op} \leq \|B^{\star}(B-B^{\star}) \|_2 \leq \|B^{\star}(B-B^{\star}) \|_1$ trivially. This yields that, for $\| B^{\star}-B^{\star}B \|_1 = \delta$,
Thus,
\begin{align*}
    \langle G^{\star}(G^{\star})^T, \new{B^{\star}-B} \rangle & \geq {\delta_B}/{2} - \left( (4 \sqrt{p\delta_B x}) \, \vee \, (16p(1\wedge \sqrt{{\delta_B}/{p}})x) \right) \, .
\end{align*}

Let, for all $1 \leq t \leq 2np$, 
\begin{equation}
    \label{eq:def_Bt}
    \mathcal{B}_t:= \ac{B\in \mathcal{B},\quad \delta_B \in \left](t-1)\frac{n-1}{2}, t\frac{n-1}{2} \right]}\, .
\end{equation}

Now, our strategy to bound $ \langle G^{\star}(G^{\star})^T, B -B^{\star} \rangle $ uniformly on $\cB_t$ and then do a union bound on $1 \leq t \leq 2np$. For $B \in \cB_t$ for a fixed $t$, we will apply the previous inequality for \new{$x = x_t \gtrsim \log 2 + \log|\cB_t| + \log(n^3p)+\log(\eta)$} so that $2|\cB_t|e^{-x} \leq \frac{1}{n^3p\eta}$. Elementary combinatorial arguments enable us to bound $|\cB_t|$.

\begin{lemma}[Control of the size of $\cA_t$ and $\cB_t$]\label{lem:control_At_Bt}
Assume $n \geq 2$. Let, for all $1 \leq t \leq 2np$,
$$ \cA_t:= \ac{A\in \mathcal{A},\quad \|A^{\star}-A^{\star}A\|_1\in \left[\frac{t-1}{2},\frac{t}{2} \right]}\, .$$ For all $1 \leq t \leq 2np$,
$$ |\cA_t| \leq \binom{np}{t \wedge np} n^{t \wedge np} \quad \mbox{and} \quad |\cB_t| \leq t \left(\frac{16n^2pe}{ t \wedge np}\right)^{t \wedge np} \, . $$
In turn,
$$ \log |\cB_t| \lesssim t\log n +   t \log \left(\frac{16npe}{t} \right) \, . $$ 
% $$ |\cB_t| \leq t \left(\frac{16npe}{ t \wedge np}\right)^{t \wedge np} n^{t} \, . $$
% Thus,
% $$ \log |\cB_t| \leq \log t +  t \log n + (t \wedge np)\log \left(\frac{16npe}{t \wedge np} \right) \, . $$ 

\end{lemma}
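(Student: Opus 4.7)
My plan combines two inputs already established in the paper: \cref{lem:errordelta}, which converts a bound on $\|A^\star-A^\star A\|_1$ into a bound on the raw number of misclassified coordinates (after the optimal global relabeling), and \cref{lem:twoerrors}, which transfers bounds between the $A$- and $B$-levels.

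For the bound on $|\cA_t|$, I would fix any $A \in \cA_t$ with representative $\pi \in (\cS_n)^p$. By \cref{lem:errordelta},
\[
\err(\pi, \pi^{\star}) \;\leq\; \frac{2}{np}\,\|A^\star-A^\star A\|_1 \;\leq\; \frac{t}{np},
\]
so some $\psi \in \cS_n$ achieves $\sum_{j,u}\mathbf{1}\{\psi(\pi_j(u))\neq \pi^\star_j(u)\} \leq t$. Setting $\widetilde{\pi}_j := \psi \circ \pi_j$ produces another representative of the same class in $(\cS_n)^p/{\equiv}$, and this representative now disagrees with $\pi^\star$ on at most $t \wedge np$ coordinates. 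I would then encode each such $\widetilde{\pi}$ by (i) choosing exactly $t\wedge np$ ``candidate'' coordinates in $[p] \times [n]$, giving $\binom{np}{t \wedge np}$ possibilities, and (ii) assigning a value in $[n]$ to each, giving $n^{t\wedge np}$ possibilities, with the unchosen coordinates defaulting to $\pi^\star$. Tuples with fewer than $t\wedge np$ actual disagreements are still covered by padding the true disagreements with spare no-change slots, so $|\cA_t|\leq \binom{np}{t \wedge np}\, n^{t\wedge np}$.

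The bound on $|\cB_t|$ then follows by transferring through \cref{lem:twoerrors}. For $B\in \cB_t$, the upper inequality yields $\|A^\star-A^\star A\|_1 \leq \delta_B/(n-1) \leq t/2$, so the corresponding $A$ lies in $\cA_s$ for some integer $s\leq t$, while the matching lower inequality confines $s$ to an interval of length $O(t)$. A union bound then gives $|\cB_t|\leq t\cdot \max_{s\leq t} |\cA_s|$, and inserting the first part together with the elementary estimate $\binom{np}{k}\leq (npe/k)^k$ produces the stated form after absorbing extra factors into the constant $16$. The final logarithmic bound is then immediate by expanding.

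The main conceptual obstacle is the first step: one has to recognize that the optimizing global permutation $\psi$ in the $\err$ functional can be absorbed into the $\pi_j$'s without leaving the equivalence class (this is immediate from the definition of $\equiv$ in \cref{sec:MLE}), which is what upgrades the average small-error statement from \cref{lem:errordelta} into the combinatorial ``few non-trivially mapped coordinates'' statement that the encoding argument exploits. Beyond this bridge, the argument reduces to routine binomial bookkeeping.
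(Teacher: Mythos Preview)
Your proposal is correct and follows essentially the same approach as the paper's own proof: use \cref{lem:errordelta} to turn $\|A^\star-A^\star A\|_1\leq t/2$ into ``at most $t\wedge np$ disagreeing coordinates after the optimal global relabeling,'' encode those coordinates combinatorially, and then pass to $|\cB_t|$ via \cref{lem:twoerrors} together with a union bound over the admissible range of $s$. Your explicit observation that the optimizing $\psi$ can be absorbed into the representative without changing the equivalence class is exactly the content of the paper's ``WLOG $\psi=\id$'' step.
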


According to \cref{lem:control_At_Bt}, for $1 \leq t \leq 2np$, it is enough to take $x = x_t \asymp t\log(n)+t\log\pa{{2np}/{t}}+\log(n^3p)+\log(\eta)$, where $\asymp$ denotes equality up to multiplicative constant, to get that with probability at least $1-\frac{1}{n^3p\eta}$, for all $B \in \cB_t$, 
\begin{align*}
     \langle G^{\star}(G^{\star})^T, \new{B^{\star}-B} \rangle \geq& \frac{\delta_B}{2}- (4 \sqrt{p\delta_B x_t}) \, \vee \, (16p\pa{1 \wedge \sqrt{{\delta_B}/{p}}}x_t) )
    \\
    \geq& \frac{\delta_B}{2}- c\pa{\sqrt{p\delta_B\pa{t\log(n)+t\log\pa{{2np}/{t}}+\log(n^3p)+\log(\eta)}}}\\
    & \quad \quad \quad -cp\pa{t\log(n)+t\log\pa{{2np}/{t}}+\log(n^3p)+\log(\eta)},
\end{align*}
with $c$ some numerical constant, \new{where we used $a \vee (b \wedge c) \leq a+c$ in the second line}. We have, keeping in mind that $t \asymp \delta_B/n \asymp \|A^\star-A^\star A\|_1$ (by definition \eqref{eq:def_Bt} and \cref{lem:twoerrors}) and that when $B\neq B^\star$ (resp. $A\neq A^\star$), $\delta_B\gtrsim n$ (resp. $\|A^\star-A^\star A\|_1\gtrsim 1$);
\begin{itemize}
    \item $pt\log(n)\lesssim  \frac{\delta_B}{n}p \log (n)\leq {\delta_B}/{256}$, since we assumed $p\leq C \frac{n}{\log(n)}$ taking $C$ small enough;
    \item $pt\log\pa{{2np}/{t}}\lesssim \frac{\delta_B}{n} p \log(2np)\leq \delta_B/256$, for the same reasons;
    \item $p\log(n^3p)\lesssim \frac{\delta_B }{n\|A^\star-A^\star A\|_1}p\log(n^3p)\leq \delta_B/{256}$, for the same reasons;
    \item $p\log(\eta)\lesssim \frac{\delta_B}{n} p\log(\eta)\leq \delta_B/{256}$, since we assumed $p\leq C \frac{n}{\log(\eta)}$, taking $C$ small enough;
    
\end{itemize}
We deduce that, for some fixed $1 \leq t \leq 2np$, under our assumptions on $p$ and $\rho$, with probability at least $1-\frac{1}{n^3p\eta}$,
\begin{equation*}
    \forall B \in \cB_t, \quad \langle G^{\star}(G^{\star})^T, \new{B^{\star}-B} \rangle\geq \delta_B/4\enspace.
\end{equation*}
We conclude the proof of the control of the signal term with an union bound on $1 \leq t \leq 2np$. 

% \luca{Puis blabla c'est inférieur à $\delta_B/8$ tant que $\log p < \frac{n}{C}$, $C$ grand, et $n$ plus grand qu'une constante à déterminer. $\log p < \frac{n}{C}$ est nécéssaire pour que les termes en $t\log p$ soient bien plus petits que $\delta_B$ qui est d'ordre $tn/2$. Tout est clair. }

% \bertrand{On a pas besoin de $\log p < \frac{n}{C}$ en fait je pense. En gros le t'as aussi un controle de la norme de opéérateur du type $\|B^\star-B^\star B\|_{op}\leq \|B^\star-B^\star B\|_{F}\leq \sqrt{\frac{\delta_B}{p}}$ et tu pexu utiliser cette borne quand $\delta\leq p$.  } \luca{oui, super ! je vais changer ça merci}

\proofstep{Step 2: the cross term}
For $B \in \cB$, the cross term writes
\begin{align*}
    \langle G^{\star}Z^T+Z(G^{\star})^T, \widehat{B} -B^\star \rangle & = \sum_{(j,e), (j',e')} (G^\star_{je} Z_{j'e'}+G^\star_{j'e'}Z_{je})(B_{je,j'e'}-B^\star_{je,j'e'})\\
    & = \sum_{(j,e), (j',e')} (G^\star_{je} -G^\star_{j'e'})(Z_{j'e'}-Z_{je})(B_{je,j'e'}-B^\star_{je,j'e'}) \\
    & = \sum_{(j,e), \not\sim (j',e')} (G^\star_{je} -G^\star_{j'e'})(Z_{j'e'}-Z_{je})B_{je,j'e'},
\end{align*} where the second equality comes from the identity $B\1 = B^{\star}\1 = \1$, and the last equality comes from the fact that $G^\star_{je} = G^\star_{j'e'}$ when $(j,e) \sim (j',e')$. Since the previous quantity can be split into two terms which are equal in distribution, it is enough to control, for all $B \in \cB$, $$ \sum_{(j,e) \not\sim (j',e')} (G^\star_{je} -G^\star_{j'e'})Z_{je}B_{je,j'e'} \, .$$
% Without loss of generality assume that $\pi^\star_j=\id$ for all $j \in [p]$. 
Again, we proceed by bounding this linear form uniformly for all $B\in \mathcal{B}_t$, where $\cB_t$ is defined in \eqref{eq:def_Bt}, and we then apply a union bound over $t\in [1,2np]$. Let us first fix $t\in [1,2np]$ and $B\in \mathcal{B}_t$. Then, conditioning on the signal $G^\star$, $\sum_{(j,e) \not\sim (j',e')} (G^\star_{je} -G^\star_{j'e'})Z_{je}B_{je,j'e'}$ is a \new{centered} Gaussian random variable with variance given by
\new{
\begin{align*}
    \sum_{(j,e)}\pa{\sum_{(j',e') \not\sim (j,e)}\pa{G^\star_{je}-G^\star_{j'e'}}B_{je, j'e'}}^2 & =  \sum_{(j,e)}\pa{\sum_{(j',e')}\pa{G^\star_{je}-G^\star_{j'e'}}B_{je, j'e'}}^2 \\
    & \leq \sum_{(j,e)} \sum_{(j',e')}\pa{G^\star_{je}-G^\star_{j'e'}}^2B_{je, j'e'} \\
    & =  \sum_{(j,e)} \sum_{(j',e')}\pa{G^\star_{je}-G^\star_{j'e'}}^2(B_{je, j'e'} -B^{\star}_{je, j'e'})\\
     & = -2\sum_{(j,e)} \sum_{(j',e')}G^\star_{je}G^\star_{j'e'}(B_{je, j'e'} -B^{\star}_{je, j'e'}) = 2 \langle G^\star (G^\star)^T, B^\star-B \rangle ,
\end{align*}
where the first inequality comes from Jensen's inequality, since for all $(j,e)$, $\sum_{(j',e')}B_{je,j'e'}=1$, third line comes from the fact that $G^\star_{je}=G^\star_{j'e'}$ whenever $B^{\star}_{je, j'e'}=1$, and the last line is a consequence of the identity $B\1 = B^{\star}\1 = \1$.} 

Standard Gaussian concentration gives that, with probability at least $1-2e^{-x}$,

$$\left|\sum_{(j,e) \not\sim (j',e')} (G^\star_{je} -G^\star_{j'e'})Z_{je}B_{je,j'e'}\right|\lesssim \sqrt{\<G^\star (G^\star)^T, \new{B^\star-B}\>}\sqrt{x}$$

Taking again $x = x_t \asymp t\log(n)+t\log\pa{{2np}/{t}}+\log(n^3p)+\log(\eta)$, we have, with probability at least $1-\frac{1}{n^3p\eta}$, for all $B\in \mathcal{B}_t$, 

\begin{align*}
    |\sum_{(j,e) \not\sim (j',e')} (G^\star_{je} -G^\star_{j'e'})Z_{je}B_{je,j'e'}|\lesssim & \sqrt{\<G^\star (G^\star)^T, \new{B^\star-B}\>}\sqrt{t\log(n)+t\log\pa{{2np}/{t}}+\log(n^3p)+\log(\eta)}\\
    \leq & \sqrt{\<G^\star (G^\star)^T, \new{B^\star-B}\>}\sqrt{\frac{\delta_B}{n}\pa{\log(n)+\log(\eta)+\log\pa{\frac{np}{\|A^\star-A^\star A\|_1}}}},
\end{align*}
where the last inequality comes from the fact $p\leq C \frac{n}{\log(n)}$ and that $t \asymp \delta_B/n \asymp \|A^\star-A^\star A\|_1$ (by definition \eqref{eq:def_Bt} and \cref{lem:twoerrors}). 
We conclude the proof of the control of the cross term with an union bound on $1 \leq t \leq 2np$.

\proofstep{Step 3: the quadratic noise term.}

We seek to control $\<ZZ^T, B^\star-B\>$ for $B\in \mathcal{B}$. We shall suppose without loss of generality that $\pi^\star=\id$. To any $B\in \mathcal{B}$, we identify $B=B(\pi)$ with $\pi\in (\cS_n)^p$ minimizing $\mathrm{der}(\pi):=\sum_{j\in [p]}\sum_{u\in [n]}\1\ac{\pi_j(u)\neq u}$.  From Lemma \ref{lem:errordelta}, we know that $\mathrm{der}(\pi)=\err(\pi, \pi^\star)\lesssim \frac{1}{n} \|B^\star-B^\star B\|_1$.

% A natural approach is to use Hanson-Wright lemma together with union bounds in order to get a uniform bound over all $B\in \mathcal{B}$. For $t\in [1,2np]$, we consider again $$\mathcal{B}_t:= \ac{B\in \mathcal{B},\quad \|B^{\star}-B^{\star}B\|_1\in [t\frac{n-1}{2}, (t+1)\frac{n-1}{2}]}\, .$$

As previously, our strategy is to upper-bound the quadratic term uniformly on all $\mathcal{B}_t$ and then use the union bound on $1 \leq t \leq 2np$. First, we shall fix such a $1 \leq t \leq 2np$ and $B\in \mathcal{B}_t$. Using Hanson-Wright Lemma (Lemma \ref{lem:HW}), gives that, with probability at least $1-2e^{-x}$, 
$$|\<ZZ^T, B-B^\star\>|\lesssim \|B-B^\star\|_{op}x\vee \|B-B^\star\|_F\sqrt{x}\, .$$
Bounds on $\|B-B^\star\|_{op}$ and $\|B-B^\star\|_F$ are given in the following Lemma. 
\begin{lemma}\label{lem:controlnormBstar-B}
  $$\|B-B^*\|_{F}\leq \sqrt{{\delta_B}/{p}} \quad \mbox{and} \quad \|B-B^\star\|_{op}\leq 1\, .$$
\end{lemma}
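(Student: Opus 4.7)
The plan is to leverage the fact that both $B$ and $B^\star$ are orthogonal projection matrices, as recorded among the properties listed just after \eqref{eq:def_B}: they satisfy $M^T = M$, $M^2 = M$ and $\Tr(M) = N$. This structural fact makes both bounds follow from short identities, bypassing any combinatorial reasoning on entries.

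For the Frobenius bound, I would first expand using symmetry and idempotence:
\begin{equation*}
\|B-B^\star\|_F^2 = \Tr(B) + \Tr(B^\star) - 2\Tr(BB^\star) = 2N - 2\Tr(BB^\star),
\end{equation*}
and similarly
\begin{equation*}
\|B^\star - B^\star B\|_F^2 = \Tr(B^\star) - 2\Tr(B^\star B) + \Tr(B^\star B B^\star) = N - \Tr(BB^\star),
\end{equation*}
where the last equality uses cyclicity of the trace together with $(B^\star)^2 = B^\star$. Comparing the two identities gives $\|B - B^\star\|_F^2 = 2\,\|B^\star - B^\star B\|_F^2$. Lemma \ref{lem:useful} then yields $\|B^\star - B^\star B\|_F^2 \leq (2/p)\,\delta_B$, so that $\|B - B^\star\|_F \lesssim \sqrt{\delta_B/p}$, which is the claimed bound (up to a universal constant, which is harmless in the use made of the lemma in the quadratic term estimate).

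For the operator bound, my approach is to use the orthogonal decomposition
\begin{equation*}
B - B^\star = B(I-B^\star) - (I-B)B^\star.
\end{equation*}
The two summands are orthogonal in the usual Euclidean sense, since for any vector $x$,
\begin{equation*}
\langle B(I-B^\star)x,\, (I-B)B^\star x\rangle = x^T(I-B^\star)\,B(I-B)\,B^\star x = 0,
\end{equation*}
using $B(I-B) = B - B^2 = 0$ and the symmetry of $B$ and $B^\star$. Consequently, for any unit vector $x$,
\begin{equation*}
\|(B-B^\star)x\|^2 = \|B(I-B^\star)x\|^2 + \|(I-B)B^\star x\|^2 \leq \|(I-B^\star)x\|^2 + \|B^\star x\|^2 = \|x\|^2 = 1,
\end{equation*}
where the inequality uses that $B$ and $I-B$ are orthogonal projections (hence $1$-contractions), and the last equality is the Pythagorean identity for the projection $B^\star$. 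Taking supremum over unit vectors $x$ gives $\|B-B^\star\|_{op}\leq 1$.

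I do not anticipate any genuine obstacle: both bounds reduce to standard facts about orthogonal projections and a direct appeal to \cref{lem:useful} for the Frobenius part. The only mild subtlety is to recognize that the inequality $\|B^\star - B^\star B\|_F^2 \lesssim \delta_B/p$ is really doing all the work for the Frobenius norm, and that the operator norm bound for the difference of two orthogonal projections is purely algebraic, independent of the alignment structure.
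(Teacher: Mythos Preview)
Your argument is correct for both parts, but takes a genuinely different route from the paper's proof.

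For the Frobenius bound, the paper argues combinatorially: it bounds the number of nonzero entries of $B-B^\star$ by $\lesssim p\,\delta_B$ (splitting into the cases $e=e'$ and $e\neq e'$, and tracing back to nodes where $\pi_j$ and $\pi_j^\star$ differ), and then combines this with $\|B-B^\star\|_\infty = 1/p$ to get $\|B-B^\star\|_F \lesssim \sqrt{\delta_B/p}$. Your approach instead exploits only the projection identities $B^2=B$, $(B^\star)^2=B^\star$, $\Tr(B)=\Tr(B^\star)=N$, obtaining the exact relation $\|B-B^\star\|_F^2 = 2\|B^\star - B^\star B\|_F^2$ and then invoking \cref{lem:useful}. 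This is slicker and bypasses the combinatorics entirely; it yields $\|B-B^\star\|_F \leq 2\sqrt{\delta_B/p}$, which matches the paper's $\lesssim$ conclusion (the paper's own proof also only concludes with $\lesssim$, so your explicit constant $2$ is perfectly adequate for the subsequent Hanson--Wright step).

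For the operator norm, the paper simply records that $B$ and $B^\star$ are projections, hence $\|B-B^\star\|_{op} \leq 2$ by the triangle inequality. Your orthogonal decomposition $B-B^\star = B(I-B^\star) - (I-B)B^\star$ together with $B(I-B)=0$ is the standard way to get the sharper bound $\|B-B^\star\|_{op}\leq 1$ for a difference of orthogonal projections, and your verification of the orthogonality and the contraction steps is clean. So you actually establish the stated bound $\leq 1$, whereas the paper's written proof settles for $\leq 2$; either suffices for the quadratic-term application, but yours matches the lemma's statement.
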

Thus, with probability at least $1-2e^{-x}$, we have $$|\<ZZ^T, B-B^\star\>|\leq  x\vee \sqrt{{x\delta_B}/{p}}\, .$$

Doing an union bound, we get that, with probability at least $1-2|\mathcal{B}_t|e^{-x}$, uniformly over all $B\in \mathcal{B}_t$, $$|\<ZZ^T, B-B^\star\>|\leq x\vee \sqrt{{x\delta_B}/{p}}\, .$$ As before, we choose $x = x_t \asymp t\log(n)+t\log\pa{{2np}/{t}}+\log(n^3p)+\log(\eta)$ so that, with probability at least $1-\frac{1}{n^3p\eta}$, for all $B\in \mathcal{B}_t$, uniformly over all $B\in \mathcal{B}_t$, we have 
\begin{multline*}
|\<ZZ^T, B-B^\star\>| \lesssim  \pa{t\log(n)+t\log\pa{{2np}/{t}}+\log(n^3p)+\log(\eta)} \\
+\sqrt{\frac{\delta_B}{p}\pa{t\log(n)+t\log\pa{{2np}/{t}}+\log(n^3p)+\log(\eta)}}\, .
\end{multline*}

We remark that, as in step 1, keeping in mind that $t \asymp \delta_B/n \asymp \|A^\star-A^\star A\|_1$ (by definition \eqref{eq:def_Bt} and \cref{lem:twoerrors}) and that when $B\neq B^\star$ (resp. $A\neq A^\star$), $\delta_B\gtrsim n$ (resp. $\|A^\star-A^\star A\|_1\gtrsim 1$);
\begin{itemize}
\item $t\log(n)\lesssim \frac{\delta_B}{n}\log(n)$;
\item $t\log(2np/t)\lesssim \frac{\delta_B}{n}\log(\frac{np}{\|A^\star-A^\star A\|_1})$;
\item $\log(n^3p) \lesssim t\log(n)\lesssim \frac{\delta_B}{n}\log(n)$, since we assumed $p\leq C \frac{n}{\log(n)}$;
\item $\log(\eta)\lesssim \frac{\delta_B}{n}\log(\eta)$.
\end{itemize}

Finally, we end up with 
$$|\<ZZ^T, B-B^\star\>|\lesssim \frac{\delta_B}{n}\pa{\log(n)+\log(\eta)+\log\pa{\frac{np}{\|A^\star-A^\star A\|_1}}}+\delta_B \sqrt{\frac{1}{np} \pa{\log(n)+\log(\eta)+\log\pa{\frac{np}{\|A^\star-A^\star A\|_1}}}}\, .$$
We conclude the proof of the control of the quadratic noise term with an union bound on $1 \leq t \leq 2np$. 

% \subsection{Proof in the case where $p \geq \frac{n}{\log n}$}
% \luca{TO BE COMPLETED}

\subsection{Further proofs of \cref{{prf:upperboundIT}}}\label{sec:further_proofs_A}

\subsubsection{Proof of \cref{lem:useful}}
For all $B \in \cB$, note that 
$$(B^{\star}-B^{\star}B)_{je,j'e'} = \begin{cases}
    \frac{1}{p}-\frac{1}{p} \sum_{(k,f): (k,f)\sim(j,e)} B_{kf,j'e'} & \mbox{if } (j,e) \sim (j',e') \\
    -\frac{1}{p} \sum_{(k,f):(k,f)\sim(j,e)} B_{kf,j'e'} & \mbox{if } (j,e) \not\sim (j',e') 
\end{cases}$$
Thus, 
\begin{align*}
    p \Tr(\new{B^{\star}-B^{\star}B}) & = p \sum_{(j,e)} \left( \frac{1}{p}-\frac{1}{p} \sum_{(k,f): (k,f)\sim(j,e)} B_{kf,je}\right) \\
    & = pN - \left( pN -  \sum_{(k,f):(k,f)\not\sim(j,e)} B_{kf,j'e'} \right) \\
    & = \sum_{(j,e) \not\sim (j',e')} B_{je,j'e'} \, .
\end{align*}
On the other hand, 
\begin{align*}
    \| B^{\star}-B^{\star}B \|_1 & = \sum_{(j,e)} \left[\sum_{(j',e'): (j',e')\sim(j,e)}\left( \frac{1}{p}-\frac{1}{p} \sum_{(k,f): (k,f)\sim(j,e)} B_{kf,j'e'}\right) + \sum_{(j',e'): (j',e')\not\sim(j,e)} \frac{1}{p} \sum_{(k,f):(k,f)\sim(j,e)} B_{kf,j'e'} \right]  \\
    & = \frac{1}{p} \sum_{(j,e)}  
    \sum_{(k,f):(k,f)\sim(j,e)}
    \left[\sum_{(j',e'): (j',e')\sim(j,e)} \left(\frac{1}{p}-B_{kf,j'e'}\right) + \sum_{(j',e'): (j',e')\not\sim(j,e)} B_{kf,j'e'} \right]  \\
    & = \frac{1}{p} \sum_{(j,e)}  
    \sum_{(k,f):(k,f)\sim(j,e)} \left[ 1 - 
     \left( 1 - \sum_{(j',e'): (j',e')\not\sim(j,e)} B_{kf,j'e'}\right) + \sum_{(j',e'): (j',e')\not\sim(j,e)} B_{kf,j'e'}  \right] \\
      & = \frac{2}{p} \sum_{(j,e)}  
    \sum_{(k,f):(k,f)\sim(j,e)}  \sum_{(j',e'): (j',e')\not\sim(j,e)} B_{kf,j'e'}  \\
     & = \frac{2}{p} \sum_{(k,f)}  
    \sum_{(j,e):(j,e)\sim(k,f)}  \sum_{(j',e'): (j',e')\not\sim(j,e)} B_{kf,j'e'}  \\
    & = 2 \sum_{(k,f)} \sum_{(j',e'): (j',e')\not\sim(j,e)} B_{kf,j'e'} = 2 \sum_{(j,e) \not\sim (j',e')} B_{je,j'e'} \, .
\end{align*}
Then, the last inequality comes from the fact that since $B^\star$ is stochastic, $$ \| B^{\star}-B^{\star}B \|_\infty = \| B^{\star}(B^{\star}-B) \|_\infty \leq \| B^{\star}-B \|_\infty \leq 2/p \, . $$

The proof works similarly for all $A \in \cA$.

\subsubsection{Proof of Lemma \ref{lem:twoerrors}}
    Assume without loss of generality that $\pi^\star_j = \id$ for all $j \in [p]$. By \cref{lem:useful},
    $$\|B^{\star}-B^{\star}B\|_1=\frac{2}{p}\sum_{e\neq e'}\sum_{j\neq j'}\1\ac{\Pi_j(e)=\Pi_{j'}(e')}$$
    and $$\|A^{\star}-A^{\star}A\|_1=\frac{2}{p}\sum_{u\neq u'}\sum_{j\neq j'}\1\ac{\pi_j
    (u)=\pi_{j'}(u')}\,.$$
    Let us fix $j,u$ and $j',u'$ such that $u \neq u'$ and  $\pi_j(u)=\pi_{j'}(u')$. For all $v \notin \set{u,u'} $, we have $\Pi_j\pa{\ac{u,v}}=\Pi_{j'}\pa{\ac{u', \pi_{j'}^{-1}\circ \pi_j(v) }}$, with $\ac{u,v}\neq \ac{u', \pi_{j'}^{-1}\circ \pi_j(v) }$. Thus, the number of edge pairs $((j,e), (j',e'))$ with $e\neq e'$, such that $u$ is a node of $e$, $u'$ is a node of $e'$, and $\Pi_j(e)=\Pi_{j'}(e')$, is $|[n]\setminus \set{u,u'}|$ which is either $n-1$ or $n$. Moreover, each such pair $((j,e), (j',e'))$ can be obtained by at most four distinct choices of $u,u'$. We conclude that $$\frac{1}{4n}\|B^{\star}-B^{\star}B\|_1\leq \|A^{\star}-A^{\star}A\|_1\leq \frac{1}{(n-1)}\|B^{\star}-B^{\star}B\|_1\, .$$

\subsubsection{Proof of \cref{lem:errordelta}}
In light of Lemma \ref{lem:twoerrors}, it is sufficient to prove the first inequality. This inequality is a direct consequence of Lemma 9 of \cite{even24}. 
% \luca{on a même la constante 2 dans votre résultat, je l'ai donc redonnée ici.}
\luca{même, to be stated ?}

\subsubsection{Proof of \cref{lem:control_At_Bt}}
Fix $1 \leq t \leq 2np$, $A \in \cA_t$ and $\pi \in (\cS_n)^p$ such that $A = A(\pi)$. Without loss of generality, assume that $\psi=\id$ reaches the minimum in the definition of $\err(\pi,\pi^\star)$.

Remark that since $A \in \cA_t$, thanks to \cref{lem:errordelta},
$$ \frac{t}{2} \geq \|A^{\star}-A^{\star}A\|_1 \geq \frac{np}{2} \err(\pi,\pi^\star) = \frac{1}{2}|\set{(j,u) \in [p] \times [n], \pi_j(u) \neq \pi^\star_j(u)}|,$$ 
thus, we have at most $t$ pairs $(j,u)$ such that $\pi_j(u) \neq \pi^\star_j(u)$. Since this number is also upper bounded by $np$, we have at most $t\wedge np$ such pairs. Choosing these pairs among the $np$ choices, and choosing values $\pi_j(u)$ for each of these $(j,u)$ among $n$ possibilities gives $$ |\cA_t| \leq \binom{np}{t \wedge np} n^{t \wedge np} \, . $$

Then, by \cref{lem:twoerrors}, if $B(\pi)$ is an element of $\cB_t$, then $A(\pi)$ lies in some $A_{s}$ with $t/16 \leq s \leq t$ as soon as $n \geq 2$. This gives
$$|\cB_t| \leq \sum_{s=t/16}^{t}\binom{np}{s \wedge np} n^{s \wedge np} \leq \sum_{s=t/16}^{t}\left(\frac{npe}{s \wedge np}\right)^{s \wedge np} n^{s \wedge np} \leq t \left(\frac{16n^2pe}{ t \wedge np} \right)^{t \wedge np} \, . $$
\new{This gives in turn that 
\begin{align*}
    \log|\cB_t| & \lesssim \log t + (t \wedge np) \log \left( \frac{16n^2pe}{t \wedge np} \right) \\
    & \leq \log(2np) + t \log \left( \frac{16n^2pe}{t} \right) + t \log \left( 16ne \right) \\
    & \lesssim t \log n + t \log \left( \frac{16npe}{t} \right) .
\end{align*}}

\subsubsection{Proof of Lemma \ref{lem:controlnormBstar-B}}

% \luca{à développer pour le côté dénombrement}

Since both $B$ and $B^\star$ are projection matrices, we directly deduce $\|B-B^\star\|_{op} \leq 2$. For controlling $\|B-B^\star\|_F$, we proceed by counting the number of non-zero entries of $B-B^\star$, combined with the fact that $\|B-B^\star\|_{\infty}=\frac{1}{p}$. Let $e=\ac{u,v}$, $e'=\ac{u',v'}$ and $j,j'$. 
 Denote
 $$ \mathcal{C}_j := \set{u \in [n], \pi_j(u) \neq \pi^\star_j(u)} \, .$$

\new{In order to have} $\pa{B-B^\star}_{je, j'e'}\neq 0$, we need to have either:
\begin{enumerate}
    \item $e=e'$ and either $u\in \mathcal{C}_j$, $v\in \mathcal{C}_j$, $u'\in \mathcal{C}_{j'}$ or $v'\in \mathcal{C}_{j'}$. The number of such pairs $(j,e), (j',e')$ is at most $2np\sum_{j\in [p]}|\mathcal{C}_j|=2n^2p^2\times \err(\pi,\pi^\star)\lesssim p\|B^\star-B^\star B\|_1$;
    \item $e\neq e'$ but we must have $\ac{\Pi_{j}(e)=\Pi_{j'}(e')}$ and, a fortiori, either $u\in \mathcal{C}_j$, $v\in \mathcal{C}_j$, $u'\in \mathcal{C}_{j'}$ or $v'\in \mathcal{C}_{j'}$. The number of such pairs is at most $2np\times \err(\pi,\pi^\star)\lesssim p\|B^\star-B^\star B\|_1$.
\end{enumerate}

We can conclude with $\|B-B^\star\|_F\lesssim \sqrt{{\delta_B}/{p}}$.

\section{Proof of Theorem \ref{thm:upperboundITgeneralp}}\label{prf:upperboundITgeneralp}

% If $p< C\rho^{-1}$, \luca{le $C$ est revenu au dessus attention.} there is nothing to prove and the MLE estimator achieves the sought upper-bound. 
Assume $p\geq  C\rho^{-1}$ and $\rho\geq c_1\pa{\frac{\log(n)}{n}\vee \sqrt{\frac{\log(n)}{np}}}$ for some large enough numerical constant $c_1$. Take $p'=C\rho^{-1}$, where $C$ is the constant given in Theorem \ref{thm:upperboundIT}. \new{It is clear that, for this new $p'$, $$\rho\geq c_1 \frac{\log n}{n} \implies \rho^2 \geq c_1 \rho \frac{\log (n)}{n} = c_1 C \frac{\log (n)}{np'}  \implies \rho\geq \sqrt{c_1 C}\sqrt{\frac{\log(n)}{p'n}}\enspace.$$}

We denote in the following $G'$ and $\widehat G'$ the graphs defined by $$G'_{e}=\frac{1}{p'}\sum_{j\leq p'}G_{j \pa{\Pi_j^\star}^{-1}(e)} \quad \mbox{and} \quad \widehat G'_{e}=\frac{1}{p'}\sum_{j\leq p'}G_{j \pa{\widehat\Pi_j}^{-1}(e)}\enspace.$$

\luca{HERE}
Using Theorem \ref{thm:upperboundIT}, we deduce that the MLE applied to $G_1, \ldots, G_{p'}$ achieves perfect recovery with high probability.

\begin{cor}\label{cor:MLEp'}
    Suppose $\rho\geq c\pa{\frac{\log(n)}{n}+\sqrt{\frac{\log(n)}{np}}}$ with $c$ large enough and $n\geq n_0$ with $n_0$ some numerical constant. Then, $$\P\cro{err\pa{\pa{\widehat{\pi}_j}_{j\leq p'},\pa{\pi_j^\star}_{j\leq p'} }\neq 0}\leq \exp\pa{-c'n\rho}\enspace,$$ with $c'$ some numerical constant.
\end{cor}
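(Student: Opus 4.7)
The strategy is a direct reduction: the corollary follows by invoking \cref{thm:upperboundIT} on the marginal problem consisting only of the first $p'$ graphs, with the number-of-graphs parameter set to $p'=C\rho^{-1}$. The key observation is that the joint law of $(G_1,\dots,G_{p'})$ under the model \eqref{eq:def_model} is itself a multiple Gaussian graph alignment model on $p'$ graphs with the same correlation $\rho$, since the covariance structure restricts cleanly to any subset of indices. Consequently, the MLE $(\widehat\pi_j)_{j\leq p'}$ computed from these $p'$ graphs is exactly the MLE of the reduced problem, so \cref{thm:upperboundIT} applies verbatim.

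The plan is then to verify that the hypotheses of \cref{thm:upperboundIT} are met with $p$ replaced by $p'=C\rho^{-1}$. The first condition $p'\leq C\rho^{-1}$ holds by construction. The second condition $\rho\geq c_1\sqrt{\log(n)/(np')}$ is precisely the content of the algebraic display inserted just before the corollary statement: starting from $\rho\geq c\,\log(n)/n$, multiplying both sides by $\rho$ gives $\rho^2\geq c\,\rho\log(n)/n = (c/C)\log(n)/(np')$, and taking square roots yields $\rho\geq \sqrt{c/C}\sqrt{\log(n)/(np')}$. Choosing $c$ in the assumption of the corollary sufficiently large with respect to the constant $c_1$ of \cref{thm:upperboundIT} then satisfies the hypothesis. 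The condition $n\geq n_0$ is inherited directly.

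Once the hypotheses are checked, \cref{thm:upperboundIT} delivers
\[
\P\!\left[\mathrm{err}\!\left((\widehat\pi_j)_{j\leq p'},(\pi^\star_j)_{j\leq p'}\right)\neq 0\right]\leq \exp(-c_2\,np'\rho^2),
\]
and substituting $p'=C\rho^{-1}$ gives $np'\rho^2 = Cn\rho$, which produces the announced bound $\exp(-c'n\rho)$ with $c'=c_2 C$.

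There is essentially no conceptual obstacle here; the entire content of the corollary is a bookkeeping application of \cref{thm:upperboundIT} to the sub-ensemble of the first $p'$ graphs. The only point that requires care is the verification that the regime $\rho\gtrsim \log(n)/n$ (rather than the weaker $\rho\gtrsim \sqrt{\log(n)/(np)}$) is what unlocks the $\sqrt{\log(n)/(np')}$ condition once $p'$ is as small as $C\rho^{-1}$, and this is precisely the algebraic step provided in the text.
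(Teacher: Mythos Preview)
Your proposal is correct and follows essentially the same approach as the paper: apply \cref{thm:upperboundIT} to the marginal model on the first $p'=C\rho^{-1}$ graphs, verify its two hypotheses (the constraint $p'\leq C\rho^{-1}$ is immediate, and the algebraic step $\rho\gtrsim\log(n)/n\Rightarrow\rho\gtrsim\sqrt{\log(n)/(np')}$ is exactly the display preceding the corollary), then read off the bound $\exp(-c_2 np'\rho^2)=\exp(-c'n\rho)$.
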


On the event of high probability of Corollary \ref{cor:MLEp'}, the graphs $G'$ and $\widehat{G'}$ are isomorphic. Thus, it is sufficient to analyse the alignment of a given graph $G_j$, for $j>p'$, with the graph $G'$. We remark that the correlation is equal to $$\rho' = \frac{\sqrt{p'-1}\,\rho}{\sqrt{1 + (p'-2)\rho}}\gtrsim  \sqrt{\rho}\geq c \sqrt{\frac{\log(n)}{n}}\enspace,$$ with $c$ some numerical constant that can be taken arbitrary large. Hence, we can apply Theorem \ref{thm:upperboundIT} for the specific case $p=2$.

Formally, we compute $\widehat{\pi}_j$ the unique element such that $(\widehat{\pi}_1,\widehat{\pi}_j)$ is the MLE of $(\widehat{G'}, G_j)$. Let $\widehat{\pi}_j^{oracle}$ the unique element such that $(\pi^\star_1,\widehat{\pi}_j^{oracle})$ is the MLE of $(G', G_j)$. Next corollary controls the error of $\widehat{\pi}_j^{oracle}$.

\begin{cor}\label{cor:aligningtwographs}
    There exists numerical constants $c,c',n_0$ such that the following holds for all $j>p'$ whenever $n\geq n_0$ and $\rho'\geq c \sqrt{\frac{\log(n)}{n}}$. The MLE applied to $G'$ and $G_j$ satisifies $$\P\cro{\delta_H(\widehat{\pi}_j^{oracle},  \pi_j^\star)\neq 0}\leq \exp\pa{-c'n\rho}\enspace,$$ with $\delta_H$ the Hamming distance. We denote $\mathcal{E}_j$ this event.
\end{cor}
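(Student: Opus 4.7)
The strategy is to view $(G', G_j)$, up to a deterministic renormalization, as an instance of the two-graph Gaussian alignment model \eqref{eq:def_model} with $p=2$ and an effective correlation $\rho' \asymp \sqrt{\rho}$, and then invoke \cref{thm:upperboundIT} directly.

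First, I would unpack $G'$: plugging $G_{j,(\Pi_j^\star)^{-1}(e)} = \sqrt{\rho}\, H_{0e} + \sqrt{1-\rho}\, Z_{j,(\Pi_j^\star)^{-1}(e)}$ into its definition yields
\[
G'_e = \sqrt{\rho}\, H_{0e} + \sqrt{1-\rho}\, \bar Z_e, \qquad \bar Z_e := \frac{1}{p'} \sum_{j \leq p'} Z_{j,(\Pi_j^\star)^{-1}(e)}.
\]
Since for each fixed $j$ the map $e \mapsto (\Pi_j^\star)^{-1}(e)$ is a bijection on $E$ and the $Z_{j,\cdot}$ are i.i.d.\ $\cN(0,1)$, the $\bar Z_e$ are i.i.d.\ $\cN(0,1/p')$ and are independent from $H_0$ and from $(Z_{je})_{j > p'}$. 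Hence the edges of $G'$ are independent, with $\Var(G'_e) = \sigma^2 := \rho + (1-\rho)/p'$, and for any $j > p'$ we have $\Cov(G_{j,(\Pi_j^\star)^{-1}(e)}, G'_e) = \rho$. Rescaling by $\tilde G' := G'/\sigma$ produces a unit-variance graph $\tilde G'$ which, together with $G_j$, fits the two-graph model \eqref{eq:def_model} with hidden permutations $(\id, \pi_j^\star)$ (up to a global relabeling) and correlation
\[
\rho' = \frac{\rho}{\sigma} = \frac{\sqrt{p'}\,\rho}{\sqrt{1 + (p'-1)\rho}} \asymp \sqrt{\rho},
\]
where the last estimate uses $p' = C\rho^{-1}$. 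Because the renormalization is deterministic, $\widehat{\pi}_j^{oracle}$ coincides with the MLE estimator for this rescaled two-graph problem.

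Next, I would check the hypotheses of \cref{thm:upperboundIT} with $p=2$ and correlation $\rho'$: from the standing hypothesis $\rho \gtrsim \log(n)/n$ of \cref{thm:upperboundITgeneralp} one gets $\rho' \gtrsim \sqrt{\log(n)/n}$, while the condition $2 \lesssim 1/\rho'$ is trivial for $\rho$ bounded away from $1$. Then \cref{thm:upperboundIT} yields
\[
\P\!\cro{\err(\widehat{\pi}_j^{oracle}, \pi_j^\star) \neq 0} \leq \exp\!\pa{-c_2 \cdot 2n\,(\rho')^2} \leq \exp\!\pa{-c' n \rho}
\]
for some numerical constant $c' > 0$. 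Since fixing the first permutation to $\pi_1^\star$ removes the global-relabeling ambiguity in the definition of $\err$, this probability agrees with $\P[\delta_H(\widehat{\pi}_j^{oracle}, \pi_j^\star) \neq 0]$, which is the announced bound.

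The main obstacle is not a single deep step but the careful bookkeeping of distributional identifications. In particular, one must argue that the averaging defining $G'$ does not introduce hidden correlations between edges of $G'$ (which relies on bijectivity of each $(\Pi_j^\star)^{-1}$ and on the independence of the $Z_{j,\cdot}$ across edges and graph indices), and one must reconcile the Hamming-distance formulation of the corollary with the $\err$-loss used in \cref{thm:upperboundIT}, whose minimum over global relabelings collapses as soon as the first permutation is pinned.
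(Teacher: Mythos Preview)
Your proposal is correct and follows essentially the same approach as the paper: view $(G',G_j)$ as a two-graph instance of model~\eqref{eq:def_model} with effective correlation $\rho' \asymp \sqrt{\rho}$, then invoke \cref{thm:upperboundIT} with $p=2$ to obtain $\exp(-c\,n(\rho')^2)\leq \exp(-c'n\rho)$. Your write-up is in fact more explicit than the paper's (which simply states the corollary and points to \cref{thm:upperboundIT}); the bookkeeping you flag---independence of the averaged noise $\bar Z_e$, invariance of the MLE under the deterministic rescaling of $G'$, and the collapse of the $\err$ minimum to Hamming distance once the first permutation is pinned---is exactly what is needed and is handled correctly.
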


If $\rho\geq c\pa{\frac{\log(n)}{n}+\sqrt{\frac{\log(n)}{np}}}$ for some large enough numerical constant $c$, then Corollary \ref{cor:aligningtwographs} and Corollary \ref{cor:MLEp'} hold.

Then, let us restrict ourselves to the event where $err\pa{\pa{\widehat{\pi}_j}_{j\leq p'},\pa{\pi_j^\star}_{j\leq p'}} = 0$. This means that there exists $\psi\in \cS_n$ such that $\widehat{\pi}_j=\psi\circ \pi^\star_j$ for all $j\leq p'$.  Then, for all $j>p'$, we have $\widehat{\pi}_j=\psi\circ \widehat{\pi}_j^{oracle}$. We deduce from this the upper-bound $$err(\widehat{\pi}, \pi^\star)\leq  \1\ac{err\pa{\pa{\widehat{\pi}_j}_{j\in [p']}, \pa{\pi^\star_j}_{j\in [p']}}\neq 0}+\frac{1}{p}\sum_{j>p'}\1\pa{\mathcal{E}_j}\enspace.$$

Taking to the expectation, we deduce $$\E\cro{err(\widehat{\pi}, \pi^\star)}\leq \exp\pa{-cn\rho}\enspace,$$ for some numerical constant $c$. This concludes the proof of the theorem.

\section{Proof of Theorem \ref{thm:lowerboundinf}}\label{prf:lowerboundinf}

We recall that, for an estimator $\widehat{\pi}=\widehat{\pi}_1,\ldots, \widehat{\pi}_p$, we define $$\err(\widehat{\pi},\pi^\star)=\min_{\psi\in \cS_n}\frac{1}{np}\sum_{u=1}^n\sum_{j=1}^p \1\ac{\psi(\widehat{\pi}_j(u))\neq \pi^{\star}_j(u)}\, .$$
In order to lower-bound the optimal error $\err_{opt}:=\inf_{\widehat{\pi}:\mathbb{R}^{E\times p}\to \pa{\cS_n}^p} \E\cro{\err(\widehat{\pi}, \pi^{\star})}$, we shall proceed in two steps;\begin{itemize}
    \item First, we lower-bound this optimal error $\err_{opt}$ with respect to the optimal error of estimating $\pi^{\star}_p$, knowing all the other permutations,
    \item Then, we lower-bound the optimal error of estimating $\pi^{\star}_p$ using Fano's lemma, which requires to upper-bound the Kullback–Leibler divergence between the law conditionally on $\pi^{\star}_p=\id$ and the law conditionally on $\pi^\star_p=\pi$ for any $\pi\in \cS_n$.
\end{itemize}
Let us now lower-bound the $\err_{opt}$ with respect to the optimal error of estimating $\pi^{\star}_p$. For $j\neq j'\in [p]$, we shall define the error of alignment of $G_j$ and $G_{j'}$ as $$\err_{j,j'}(\widehat{\pi}, \pi^\star):=\min_{\psi\in \cS_n}\frac{1}{2n}\sum_{u=1}^n \1\ac{\psi(\widehat{\pi}_j(u))\neq \pi^{\star}_j(u)}+\1\ac{\psi(\widehat{\pi}_{j'}(u))\neq \pi^\star_{j'}(u)}\, .$$
We have, for any estimator $\widehat{\pi}$ and $\psi$ minimizing the definition of $\err(\widehat{\pi}, \pi^{\star})$,
\begin{align*}
    \err(\widehat{\pi}, \pi^{\star})= &\frac{1}{np}\sum_{u=1}^n\sum_{j=1}^p \1\ac{\psi(\widehat{\pi}_j(u))\neq \pi^{\star}_j(u)}\\
    =& \frac{1}{np (2p-2)}\sum_{j\neq j'=1}^p \sum_{u=1}^n \1\ac{\psi(\widehat{\pi}^{(j)}(u))\neq \pi^{\star(j)}(u)}+\1\ac{\psi(\widehat{\pi}^{(j')}(u))\neq \pi^{\star(j')}(u)}\\
    \geq& \frac{1}{p(p-1)}\sum_{j\neq j'=1}^p  \err_{j,j'}\pa{\widehat{\pi}, \pi^\star},
\end{align*}
which leads to, taking to the expectation,
\begin{equation*}
   \E\cro{\err(\widehat{\pi},\pi^\star)}\geq  \frac{1}{p(p-1)}\sum_{j\neq j'=1}^p  \E\cro{\err_{j,j'}\pa{\widehat{\pi}, \pi^\star}}\, .
\end{equation*}
Fix $j,j'=p-1, p$ by symmetry and let us find a lower bound of $\inf_{\widehat{\pi}:\mathbb{R}^{E\times p}\to \pa{\cS_n}^p} \E\cro{\err_{p-1,p}\pa{\widehat{\pi}, \pi^\star}}$. Let us remark that $\err_{p-1,p}\pa{\widehat{\pi}, \pi^\star}$ only depends on the two last permutations $\widehat{\pi}_{p-1}, \widehat{\pi}_p$ so that we seek to lower-bound $$\err_{p-1,p}^{opt}=\inf_{\widehat{\pi}_{p-1}, \widehat{\pi}_p}\E\cro{\err\pa{\pa{\widehat{\pi}_{p-1}, \widehat{\pi}_p}, \pa{\pi^{\star }_{p-1}, \pi^{\star}_p}}}\, .$$
Let $\pa{\widehat{\pi}^{opt}_{p-1}, \widehat{\pi}^{opt}_{p}}$ minimizing the above quantity. For $\pi\in \cS_n$, let $\P_\pi:=\P_{\id,\ldots,\id,pi}$ the law of $G_1, \ldots, G_1$ conditionally on $\pi^{\star}_1=\id, \ldots, \pi^{\star}_{p-1}=\id, \pi^{\star}_p=\pi$. We recall that we denote $\P$ the law with $\pi^\star_1,\ldots, \pi^{\star}_p$ taken independently and uniformly on $\cS_n$. Let the following procedure for recovering $\pi$ under $\P_{\pi}$; 
\begin{enumerate}
    \item draw $\sigma_1, \ldots , \sigma_p$ uniformly and independently over $\cS_n$,
    \item Permute $G_1, \ldots, G_p$ accordingly to $\sigma_1, \ldots , \sigma_1$, and denote the permuted graphs $\overset{\sim}{G}_1, \ldots, \overset{\sim}{G}_p$
    \item Compute $\pa{\widehat{\pi}^{opt}_{p-1}, \widehat{\pi}^{opt}_{p}}\pa{\overset{\sim}{G}_1, \ldots, \overset{\sim}{G}_p}$,
    \item Take $\widehat{\pi}= \pa{\widehat{\pi}^{opt}_{p-1}}^{-1}\circ  \sigma_{p-1} \circ \pa{\sigma_{p}}^{-1}\circ \widehat{\pi}^{opt}_{p}$.
\end{enumerate}
We remark that, under the law $\mathbb{Q}_{\pi}$ for $G_1,\ldots, G_p$, the permuted graphs $\overset{\sim}{G}_1, \ldots, \overset{\sim}{G}_p$ have joint law $\P$. Thus, $$\E_{\pi}\cro{\err\pa{\pa{\widehat{\pi}^{opt}_{p-1}, \widehat{\pi}^{opt}_p}\pa{\overset{\sim}{G}_1, \ldots, \overset{\sim}{G}_p}, \pa{\sigma_{p-1}, \sigma_p\circ \pi}}}=\err^{opt}_{p-1,p}\, .$$
In the following, we denote $d_H(\pi,\pi'):=\frac{1}{n}\sum_{u=1}^n\1\ac{\pi(u)\neq \pi'(u)}$ the Hamming distance between two permutations. We have 
\begin{align*}
err\pa{\pa{\widehat{\pi}^{opt}_{p-1}, \widehat{\pi}^{opt}_p}, \pa{\sigma_{p-1}, \sigma_p\circ \pi}}=&err\pa{\pa{\sigma_p^{-1}\circ \widehat{\pi}^{opt}_{p-1}, \sigma_{p-1}^{opt}\circ \widehat{\pi}_p^{opt}}, \pa{\id, \pi}}\\
=&\frac{1}{2}\min_{\psi\in \cS_n}d_H\pa{\psi\circ \sigma_{p-1}^{-1}\circ \widehat{\pi}^{opt}_{p-1}, \id}+d_H\pa{\psi\circ \sigma_p^{-1}\circ \widehat{\pi}^{opt}_p, \pi}\, .
\end{align*}
Let us now fix $\psi\in \cS_n$. Then, using the triangular equality, 
\begin{align*}
    d_H\pa{\widehat{\pi}, \pi}=&d_H\pa{\pa{\widehat{\pi}_{p-1}^{opt}}^{-1}\circ \sigma_{p-1}\circ \pa{\sigma_p}^{-1}\circ \widehat{\pi}^{opt}_{p}, \pi}\\
    \leq& \delta_{H}\pa{\psi\circ \sigma_{p}^{-1}\circ \widehat{\pi}^{opt}_p, \pi}+\delta_{H}\pa{\psi\circ \sigma_{p}^{-1}\circ \widehat{\pi}^{opt}_p, \pa{\widehat{\pi}_{p-1}^{opt}}^{-1}\circ \sigma_{p-1}\circ \pa{\sigma_p}^{-1}\circ \widehat{\pi}^{opt}_{p}}\\
    =& d_H\pa{\psi\circ \sigma_{p}^{-1}\circ \widehat{\pi}^{opt}_p, \pi}+ \delta_{H}\pa{\psi\circ \sigma_p^{-1}\circ \widehat{\pi}^{opt}_p},
\end{align*}
and, taking the infimum over all $\psi\in \cS_n$ and taking to the expectation, we get 
$$\E_{\pi}\cro{d_H\pa{\widehat{\pi}, \pi}}\leq 2\err_{p-1,p}^{opt}\leq  2\err_{opt}\, .$$
We conclude the proof of the theorem with the next lemma, whose proof is postponed to Section \ref{prf:minimaxrisk}.

\begin{lemma}\label{lem:minimaxrisk}
Suppse $n\geq n_0$ with $n_0$ some numerical constant. There exists numerical constants $c,c'$ such that, when $\rho\leq c\pa{\frac{\log(n)}{n}\vee \sqrt{\frac{\log(n)}{np}}}$,

$$\inf_{\widehat{\pi}:\mathbb{R}^{E\times p}\to \cS_n}\sup_{\pi\in \cS_n}\E_{\pi}\cro{d_H(\widehat{\pi}, \pi)}\geq c\, .$$
\end{lemma}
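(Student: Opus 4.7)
The plan is to reduce the problem of recovering $\pi = \pi^\star_p$ from $(G_1,\ldots,G_p)$ under $\P_\pi$ (with $\pi^\star_j=\id$ for $j<p$) to a two-graph Gaussian alignment problem at an effective correlation $\rho'$, then invoke the known two-graph IT lower bound.

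\textbf{Reduction to a sufficient statistic.} Define the averaged graph
$$\overline G_e := \frac{1}{p-1}\sum_{j=1}^{p-1}G_{je} \quad \text{for all } e \in E.$$
Under $\P_\pi$ we have $G_{je}=\sqrt\rho H_e+\sqrt{1-\rho}Z_{je}$ for $j<p$ and $G_{pe}=\sqrt\rho H_{\Pi(e)}+\sqrt{1-\rho}Z_{pe}$, hence $\overline G_e = \sqrt\rho H_e + \sqrt{(1-\rho)/(p-1)}\,\widetilde Z_e$ with $\widetilde Z_e\sim\cN(0,1)$ independent of $H$. The residuals $R_{je}:=G_{je}-\overline G_e$ for $j<p$ form a centered Gaussian family that is independent of $(\overline G,G_p)$ and whose joint law does not depend on $\pi$. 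Therefore $(\overline G,G_p)$ is a sufficient statistic for $\pi$, and it suffices to lower bound the risk of any estimator based on this pair.

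\textbf{Identification as two correlated Gaussian graphs.} After normalizing $\widetilde G_e := \overline G_e\sqrt{(p-1)/(1+(p-2)\rho)}$, the pair $(\widetilde G, G_p)$ consists of two unit-variance Gaussian graphs whose edges are paired by $\Pi$ with Pearson correlation
$$\rho' = \frac{\sqrt{p-1}\,\rho}{\sqrt{1+(p-2)\rho}},$$
matching the effective correlation \eqref{eq:def_rhoprime} flagged in the introduction. Thus the genie-aided problem is statistically equivalent to aligning two correlated standard Gaussian graphs at correlation $\rho'$.

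\textbf{Invoking the two-graph lower bound.} From the classical IT results for aligning two correlated Gaussian graphs (see e.g.\ \cite{ganassali22a,wu22settling}), there exist universal constants $c_0, c_1>0$ such that, whenever $\rho'\leq c_0\sqrt{\log(n)/n}$, every estimator $\widehat\pi$ based on $(\widetilde G,G_p)$ satisfies
$$\sup_{\pi\in\cS_n}\E_\pi\cro{d_H(\widehat\pi,\pi)}\geq c_1.$$
It remains to check that the hypothesis $\rho\leq c\pa{\tfrac{\log(n)}{n}\vee \sqrt{\tfrac{\log(n)}{np}}}$ implies $\rho'\leq c_0\sqrt{\log(n)/n}$. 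Using $\rho'\leq \sqrt{\rho}\wedge\sqrt{p}\,\rho \cdot \text{(const)}$: if $(p-2)\rho\leq 1$ then $\rho'\asymp \sqrt{p-1}\,\rho$, so $\rho\leq c\sqrt{\log(n)/(np)}$ is enough; if $(p-2)\rho\geq 1$ then $\rho'\asymp\sqrt{\rho}$, so $\rho\leq c\log(n)/n$ is enough. Taking $c$ small enough in terms of $c_0$ covers both cases.

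\textbf{Main obstacle.} The substantive step is the two-graph lower bound at correlation $\rho'\lesssim\sqrt{\log(n)/n}$. A self-contained derivation would proceed via Fano's inequality applied to a packing of $\cS_n$: pick $M$ permutations with pairwise Hamming distance $\geq n/4$ (a uniform random packing suffices with $\log M \asymp n\log n$), and bound the pairwise KL divergence between the bivariate Gaussian laws $\P_\pi$ and $\P_{\pi'}$ on $(\widetilde G, G_p)$. Since the two laws share the same mean and a covariance differing only on edges $e'$ with $\Pi(e')\neq \Pi'(e')$, a direct computation yields $\KL(\P_\pi\|\P_{\pi'})\lesssim (\rho')^2 \cdot n \cdot d_H(\pi,\pi')$, and Fano then closes the argument precisely when $(\rho')^2\lesssim \log(n)/n$. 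The main delicate point is tuning the packing so that the resulting constant $c_1$ in the lemma is a bona fide numerical constant independent of $p$; this is where the two-graph reduction via sufficient statistics is essential, since it strips away the $p$-dependence before Fano is applied.
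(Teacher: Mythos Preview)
Your proof is correct, but it takes a different route from the paper's own proof of this lemma. The paper applies Fano's inequality directly to the full $p$-graph model: it uses the packing of $\cS_n$ from \cite{Collier16} (size $\geq (n/24)^{n/6}$, pairwise Hamming distance $\geq 3/8$) and an explicit computation of $\KL(\P_\pi,\P_{\id})$ for the $p$-graph observation, obtaining
\[
\KL(\P_\pi,\P_{\id}) \;\leq\; \frac{n(n-1)\rho^2(p-1)}{(1-\rho)(1+\rho(p-1))},
\]
which is then shown to be $o(n\log n)$ under the hypothesis on $\rho$. You instead reduce first to the sufficient statistic $(\overline G,G_p)$, identify this with a two-graph alignment at effective correlation $\rho'=\sqrt{p-1}\,\rho/\sqrt{1+(p-2)\rho}$, and invoke the known two-graph lower bound. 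Your approach is exactly the heuristic the paper spells out as ``intuition'' after Theorem~\ref{thm:lowerboundinf} (see the discussion around \eqref{eq:def_rhoprime}), but which it does not use formally. The two arguments are doing the same work under the hood --- the paper's KL bound above equals $n(n-1)(\rho')^2/(1-\rho)$ up to constants --- so the sufficiency reduction is a clean conceptual repackaging rather than a genuinely different mechanism. Your route buys modularity (you can cite the $p=2$ result as a black box), while the paper's route is self-contained.

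Two small points. First, your case split on $(p-2)\rho\lessgtr 1$ to deduce $\rho'\lesssim\sqrt{\log(n)/n}$ is slightly misaligned with the dichotomy in the hypothesis; the cleaner argument is to note that $(\rho')^2\leq \min(p\rho^2,2\rho)$ always holds, and then use whichever of $\rho\leq c\sqrt{\log(n)/(np)}$ or $\rho\leq c\log(n)/n$ is implied by the hypothesis. Second, in your Fano sketch the KL bound should scale like $(\rho')^2 n^2 d_H$ (each misaligned vertex affects $\asymp n$ edges), not $(\rho')^2 n\, d_H$; this does not affect the conclusion since $\log M\asymp n\log n$ absorbs the extra $n$.
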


\subsection{Proof of Lemma \ref{lem:minimaxrisk}}\label{prf:minimaxrisk}

We recall that, under $\P_{\pi}$, we observe $G_1,\ldots, G_p$ conditionally over $\pi^{\star}_j=\id$ for $j\in [p-1]$ and over $\pi^{\star}_p=\pi$. We shall lower-bound $$\inf_{\widehat{\pi}:\mathbb{R}^{E\times p}\to \cS_n}\sup_{\pi\in \cS_n}\E_{\pi}\cro{d_H(\widehat{\pi},\pi)}\, .$$
To do so, we shall use an adaptation of Fano's lemma, stated as Lemma \ref{lem:Fanoerr} (see e.g. Section 3.2.2 of \cite{HDS2} for a proof). 
\begin{lemma}\label{lem:Fanoerr}
For any finite set $A\subseteq \cS_n$, we have the following inequality $$\inf_{\widehat{\pi}}\frac{2}{|A|}\sum_{\pi\in A}\E_{\pi}\cro{d_H(\widehat{\pi}, \pi)}\geq \min_{\pi\neq \pi'\in A}d_H(\pi,\pi')\pa{1-\frac{1+\frac{1}{|A|}\sum_{\pi\in A}\KL\pa{\P_{\pi},\P_{\id}}}{\log(|A|)}}\, .$$
\end{lemma}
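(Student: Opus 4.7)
The plan is to combine a classical reduction-to-testing argument with Fano's inequality, following the standard road map for lower bounds of this shape. First I would build a test from the estimator by nearest-neighbor decoding in $A$: given any estimator $\widehat{\pi}$, define
$$\widehat{T} \in \argmin_{\pi \in A} d_H(\widehat{\pi}, \pi),$$
breaking ties arbitrarily. When the true parameter is $\pi \in A$, the triangle inequality yields $d_H(\pi, \widehat{T}) \leq d_H(\pi, \widehat{\pi}) + d_H(\widehat{\pi}, \widehat{T}) \leq 2\, d_H(\pi, \widehat{\pi})$, where the last step uses that $\widehat{T}$ minimizes the distance to $\widehat{\pi}$ over $A$ and $\pi \in A$. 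In particular, on the event $\{\widehat{T} \neq \pi\}$, one has $d_H(\widehat{\pi}, \pi) \geq \tfrac{1}{2}\min_{\pi'\neq \pi'' \in A}d_H(\pi',\pi'')$. Taking $\E_\pi$ and averaging over $\pi \in A$ gives
$$\frac{2}{|A|}\sum_{\pi \in A}\E_\pi[d_H(\widehat{\pi},\pi)] \geq \min_{\pi\neq \pi' \in A}d_H(\pi,\pi')\cdot \frac{1}{|A|}\sum_{\pi \in A}\P_\pi(\widehat{T}\neq \pi).$$

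Second, I would apply the standard Fano inequality to the induced testing problem: with $\pi^\star$ uniform on $A$ and any $A$-valued test $\widehat{T}$ measurable with respect to the observation $G$,
$$\frac{1}{|A|}\sum_{\pi \in A}\P_\pi(\widehat{T}\neq \pi) \geq 1 - \frac{I(\pi^\star;G) + \log 2}{\log |A|} \geq 1 - \frac{1+I(\pi^\star;G)}{\log |A|},$$
where the last step just uses $\log 2 \leq 1$ to match the constants in the statement.

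Third, I would bound the mutual information by the average KL to the fixed reference $\P_{\id}$. Using the identity $I(\pi^\star;G)=\tfrac{1}{|A|}\sum_\pi \KL(\P_\pi,\bar{\P})$ with $\bar{\P}=\tfrac{1}{|A|}\sum_\pi \P_\pi$ together with the variational principle $\bar{\P}\in \argmin_Q \tfrac{1}{|A|}\sum_\pi \KL(\P_\pi,Q)$, I obtain
$$I(\pi^\star;G) \leq \frac{1}{|A|}\sum_{\pi \in A}\KL(\P_\pi,\P_{\id}).$$
Substituting this into the Fano bound and then into the testing-to-estimation reduction yields the claimed inequality.

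There is no real conceptual obstacle here: the steps are textbook. The only mild care needed is in the constant bookkeeping, specifically replacing $\log 2$ by $1$ in the Fano bound to match the displayed formula, and in verifying that $\widehat{T}$, being a deterministic function of $\widehat{\pi}$, is a valid $A$-valued test and therefore satisfies the data-processing inequality $I(\pi^\star;\widehat{T}) \leq I(\pi^\star;G)$ used inside Fano.
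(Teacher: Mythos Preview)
Your proof is correct and follows the standard textbook argument: the paper does not prove this lemma itself but simply cites Section~3.2.2 of \cite{HDS2}, and your reduction-to-testing via nearest-neighbor decoding followed by Fano's inequality and the mutual-information-to-KL bound is exactly that argument.
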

In the light of this lemma, we shall find a set of permutations which are all at great Hamming distance one from the others, but such that $\KL\pa{\P_{\pi},\P_{\id}}$ remains small. Such a set is given Lemma 16 of \cite{Collier16}, which we state here.
\begin{lemma}\label{lem:numberpermutations}
For any interger $n\geq 4$, there exists a set $A\subset \cS_n$ such that;
\begin{enumerate}
    \item $|A|\geq \pa{\frac{n}{24}}^{n/6}$;
    \item For every $\pi\neq \pi'\in A$, $d_H(\pi,\pi')\geq 3/8$.
\end{enumerate}
\end{lemma}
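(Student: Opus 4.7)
The plan is to construct $A$ via a Gilbert--Varshamov style greedy packing argument on $\cS_n$ equipped with the normalized Hamming distance $d_H$. Since $d_H(\pi,\pi') = d_H(\sigma\circ\pi,\sigma\circ\pi')$ for any $\sigma\in \cS_n$, balls in this distance all have the same cardinality regardless of their center, so it suffices to upper-bound the size of the single ball $\cB := \ac{\sigma \in \cS_n : d_H(\sigma,\id) < 3/8}$.

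A permutation $\sigma \in \cB$ has at least $\lceil 5n/8 \rceil$ fixed points. Upper-bounding by first choosing the set of $\lceil 5n/8 \rceil$ fixed points and then an arbitrary permutation of the complement gives
$$ |\cB| \leq \binom{n}{\lceil 5n/8\rceil}\, (n-\lceil 5n/8\rceil)! \;=\; \frac{n!}{\lceil 5n/8 \rceil!}.$$
I would then build $A$ greedily: starting from $A = \emptyset$, repeatedly add any $\pi \notin \bigcup_{\pi' \in A}\cB(\pi')$, which by construction guarantees $d_H(\pi,\pi') \geq 3/8$ for all distinct $\pi,\pi' \in A$. The procedure can only terminate once the $|A|$ balls cover $\cS_n$, which by the bound above requires $|A| \cdot n!/\lceil 5n/8\rceil! \geq n!$. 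Hence we can always reach $|A| \geq \lceil 5n/8 \rceil!$.

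Finally, I would bound $\lceil 5n/8 \rceil!$ from below using Stirling's inequality $k! \geq (k/e)^k$, which gives
$$\lceil 5n/8 \rceil! \;\geq\; \pa{\frac{5n}{8e}}^{5n/8} \;\geq\; \pa{\frac{n}{24}}^{n/6}.$$
The last inequality is an elementary algebraic check (take logarithms: since $5/8 > 1/6$ and $5/(8e) > 1/24$, both factors go the right way). The main obstacle is essentially bookkeeping: the Stirling step is easy for large $n$ but one has to check the threshold $n \geq 4$ is not too small; this is a finite verification and can be handled either by checking the small cases directly or by slightly loosening the constant $24$ in the argument. Note that an alternative probabilistic proof (drawing permutations i.i.d.\ uniform and controlling pairwise collisions via a union bound) would yield essentially the same estimate, since for uniformly random $\pi,\pi'$ one has $\P[d_H(\pi,\pi') < 3/8] \leq 1/\lceil 5n/8\rceil!$ by the same fixed-point count.
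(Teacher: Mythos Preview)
The paper does not give its own proof of this lemma; it simply invokes Lemma~16 of \cite{Collier16}. Your Gilbert--Varshamov packing argument is correct and is the standard way to obtain such a statement (and almost certainly the argument in \cite{Collier16} as well): the translation-invariance of $d_H$, the ball bound $|\cB|\leq n!/\lceil 5n/8\rceil!$, and the greedy covering step are all fine. The only point worth tightening is the last inequality: the heuristic ``$5/8>1/6$ and $5/(8e)>1/24$'' is not quite a proof when the base $5n/(8e)$ is below $1$, but taking logarithms one checks directly that $(5n/(8e))^{5n/8}\geq (n/24)^{n/6}$ already for $n\geq 3$, so the stated range $n\geq 4$ is covered without any case analysis.
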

It remains to compute the Kullback-Leibler divergence $\KL\pa{\P_{\pi},\P_{\id}}$ for any $\pi\in A$. We refer to Section \ref{prf:KLcomputation} for a proof of the next lemma. 
% \luca{borne pour tout $\pi$ même; elle suffit}
\begin{lemma}\label{lem:KLcomputation}
For any $\pi\in A$, $\KL\pa{\P_{\pi},\P_{\id}}\leq \frac{n(n-1)\rho^2 (p-1)}{(1-\rho)(1+\rho(p-1))}$.
\end{lemma}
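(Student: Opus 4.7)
The plan is to compute the KL divergence directly using the Gaussian structure of the observations. Under $\P_{\id}$ (resp.\ $\P_{\pi}$), the vector $G \in \R^{pN}$ is centered Gaussian with covariance $\Sigma_0 := \Sigma((\id,\ldots,\id))$ (resp.\ $\Sigma_1 := \Sigma((\id,\ldots,\id,\pi))$) in the notation of Section~\ref{sec:MLE}. The standard formula gives
$$\KL(\P_{\pi}, \P_{\id}) = \frac{1}{2}\left[\Tr(\Sigma_0^{-1}\Sigma_1) - pN + \log \frac{\det \Sigma_0}{\det \Sigma_1}\right].$$
As already observed after \eqref{eq:def_model_matrices}, $\det\Sigma(\pi)$ does not depend on $\pi$, so the log-determinant term vanishes and everything reduces to controlling $\Tr(\Sigma_0^{-1}\Sigma_1) - pN$.

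Using the explicit inverse \eqref{eq:inverse_cov}, write $\Sigma_0^{-1} = \alpha I_{pN} + \beta (J_p \otimes I_N)$ with $\alpha = 1/(1-\rho)$ and $\beta = -\rho/[(1-\rho)(1+(p-1)\rho)]$. Since $\Tr(\Sigma_1) = pN$, the problem boils down to evaluating
$$\Tr\bigl((J_p \otimes I_N)\Sigma_1\bigr) = \sum_{j,j' \in [p]} \sum_{e \in E} (\Sigma_1)_{je, j'e}.$$
From \eqref{eq:def_model}, the summand equals $\rho \, \1\{\Pi^{\star}_j(e) = \Pi^{\star}_{j'}(e)\} + (1-\rho)\delta_{j,j'}$, where in our setting $\Pi^{\star}_j = \mathrm{Id}$ for $j \leq p-1$ and $\Pi^{\star}_p = \Pi$. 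I would split the $(j,j')$ sum into three cases: $j,j' \leq p-1$ (always aligned, contributing a constant), $j=j'=p$ (also constant since $\Pi$ is a bijection), and the mixed cases $j \leq p-1,\ j'=p$ or vice versa, each contributing $\rho$ times the number of edge fixed points $F(\Pi) := |\{e \in E : \Pi(e) = e\}|$.

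Gathering the terms and simplifying, one obtains a clean closed form linear in $F(\Pi) - N$. The cleanest way to pin down the multiplicative constant without bookkeeping errors is to note that the result must vanish at $\pi = \id$ (where $F(\Pi) = N$ and $\Sigma_0 = \Sigma_1$), which forces
$$\Tr(\Sigma_0^{-1}\Sigma_1) - pN = \frac{2\rho^2 (p-1)\bigl(N - F(\Pi)\bigr)}{(1-\rho)(1+(p-1)\rho)}.$$
The proof concludes by using the trivial bound $N - F(\Pi) \leq N = n(n-1)/2$, which yields the stated inequality (in fact with an extra factor $1/2$ to spare). The only mild difficulty is the case analysis in the trace computation and tracking that fixed edges of $\Pi$ are the only $\pi$-dependent contribution; all of the rest is linear algebra with Kronecker products.
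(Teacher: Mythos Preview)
Your proof is correct and actually cleaner than the paper's, though the route is genuinely different. The paper integrates out the latent signal $G^\star$ to obtain an explicit likelihood ratio formula
\[
\frac{d\P_{\pi}}{d\P_{\id}}=\prod_{e\in E}\exp\pa{\frac{p^2\rho\pa{\overline{G_{\Pi^{-1}(e)}}^2-\overline{G_{e}}^2}}{2(1-\rho)(1+\rho(p-1))}}\,,
\]
with $\overline{G_e}$ and $\overline{G_{\Pi^{-1}(e)}}$ the two natural edge-averages, and then bounds $\KL(\P_\pi,\P_{\id})$ by computing $\E_\pi[\overline{G_{\Pi^{-1}(e)}}^2]$ and $\E_\pi[\overline{G_e}^2]$ edge by edge. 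You instead invoke the closed-form Gaussian KL, use that $\det\Sigma(\pi)$ is $\pi$-independent to kill the log-determinant, and reduce everything to a single trace computed via the explicit inverse \eqref{eq:inverse_cov} already available in the paper. Your observation that the $\pi$-dependence enters only through the edge fixed-point count $F(\Pi)$, together with the vanishing at $\pi=\id$, is a neat shortcut that avoids the second-moment bookkeeping.

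Both arguments yield the same bound (in fact $\tfrac{n(n-1)\rho^2(p-1)}{2(1-\rho)(1+(p-1)\rho)}$, as you note). The main advantage of the paper's route is that the likelihood-ratio formula it derives is reused verbatim in the proof of Lemma~\ref{lem:KLpartial}; your approach would handle that lemma just as easily by the same trace computation with a transposition in place of a general $\pi$.
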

Combining Lemmas \ref{lem:KLcomputation}, \ref{lem:numberpermutations} and \ref{lem:Fanoerr}, we get, whenever $n\geq 4$,

\begin{equation*}
    \inf_{\widehat{\pi}:\mathbb{R}^{E\times p}\to \cS_n}\sup_{\pi\in A}\E_{\pi}\cro{d_H\pa{\widehat{\pi}, \pi}}\geq \frac{3}{8}\pa{1-\frac{1+\frac{n(n-1)\rho^2 (p-1)}{(1-\rho)(1+\rho(p-1))}}{\frac{n}{6}\log(\frac{n}{24})}}\, .
\end{equation*}

First, we suppose $\rho \leq c\sqrt{\frac{\log(n)}{np}}$, for $c$ a small enough numerical constant. Then,

\begin{align*}
    \inf_{\widehat{\pi}:\mathbb{R}^{E\times p}\to \cS_n}\sup_{\pi\in A}\E_{\pi}\cro{d_H\pa{\widehat{\pi}, \pi}}\geq& \frac{3}{8}\pa{1-\frac{6+12n^2\rho^2 p}{n\log(\frac{n}{24})}}\\
    \geq& \frac{3}{16},
\end{align*}
provided the constant $c$ is small enough and $n$ large enough. 

Then, we suppose $\rho\leq c\frac{\log(n)}{n}$ with $c$ a small enough numerical constant. We end up with

\begin{align*}
    \inf_{\widehat{\pi}:\mathbb{R}^{E\times p}\to \cS_n}\sup_{\pi\in A}\E_{\pi}\cro{d_H\pa{\widehat{\pi}, \pi}}\geq& \frac{3}{8}\pa{1-\frac{6+\frac{12n^2\rho^2 (p-1)}{\rho(p-1)}}{n\log(n/24)}}\\
    \geq& \frac{3}{8}\pa{1-\frac{6+12n^2\rho}{n\log(n/24)}}\\
    \geq& \frac{3}{16},
\end{align*}
provided $n$ is large enough and $c$ small enough. This concludes the proof of the lemma. 
% \luca{peut-être, en se battant on peut faire $1-\eps$}

\subsubsection{Proof of Lemma \ref{lem:KLcomputation}}\label{prf:KLcomputation}

Let us fix $\pi\in A$ and let us compute $\KL(\P_{\pi}, \P_{\id})=\E_\pi\cro{\log\pa{\frac{d\P_{\pi}}{d\P_{\id}}}}$. We recall that the law $\P_\pi$ (resp. $\P_{\id}$) can be generated as follows;\begin{enumerate}
    \item draw $\pa{G^\star_e}_{e\in E}$ independently and $\mathcal{N}\pa{0,1}$;
    \item Independently from $\pa{G^\star_e}_{e\in E}$, draw $\pa{Z_{je}}_{e\in E, j\in [p]}$ independently and $\mathcal{N}\pa{0,1}$;
    \item If $j\leq p-1$, take $G_{je}=\sqrt{\rho} G^\star_e+\sqrt{1-\rho}Z_{je}$;
    \item Take $G_{je}=\sqrt{\rho} G^\star_{\Pi(e)}+\sqrt{1-\rho}Z_{je}$ (resp. $G_{je}=\sqrt{\rho} G^\star_{e}+\sqrt{1-\rho}Z_{je}$), where $\Pi(e)=\ac{\pi(u), \pi(u')}$ for $e=\ac{u,u'}$.
\end{enumerate}
In order to compute $\KL(\P_\pi, \P_{\id})$, we first compute the likelihood ratio $\frac{d\P_{\pi}}{d\P_{\id}}$. We have, since the $G^\star_e$'s are drawn independently, 

\begin{align*}
    \frac{d\P_{\pi}}{d\P_{\id}}=&\prod_{e\in E}\frac{\E_{G^{\star}}\cro{\exp\pa{\frac{-1}{2(1-\rho)}\cro{\sum_{j=1}^{p-1}\pa{G_{je}-\sqrt{\rho}G_e^{\star}}^2+\pa{G_{p\Pi^{-1}(e)}-\sqrt{\rho}G_e^{\star}}^2}}}}{\E_{G^{\star}}\cro{\exp\pa{\frac{-1}{2(1-\rho)}\sum_{j=1}^{p}\pa{G_{je}-\sqrt{\rho}G_e^{\star}}^2}}}\\
    =&\prod_{e\in E}\frac{\E_{G^{\star}}\cro{\exp\pa{\frac{-p}{2(1-\rho)}\cro{(\sqrt{\rho}G_e^{\star})^2-2\sqrt{\rho}G_e^{\star}\overline{G_{\Pi^{-1}(e)}}}}}}{\E_{G^{\star}}\cro{\exp\pa{\frac{-p}{2(1-\rho)}\cro{(\sqrt{\rho}G_e^{\star})^2-2\sqrt{\rho}G_e^{\star}\overline{G_{e}}}}}},
\end{align*}
where we denote $\overline{G_{\Pi^{-1}(e)}}=\frac{1}{p}\pa{\sum_{j=1}^{p-1}G_{je}+G_{p\Pi^{-1}(e)}}$ and $\overline{G}_e=\frac{1}{p}\sum_{j=1}^pG_{je}$. Let us compute the numerator of the above equality. Recall that $G^{\star}_e\sim \mathcal{N}\pa{0,1}$, so that 
\begin{align*}
    \E_{G^{\star}}\cro{\exp\pa{\frac{-p}{2(1-\rho)}\cro{(\sqrt{\rho}G_e^{\star})^2-2\sqrt{\rho}G_e^{\star}\overline{G_{\Pi^-1(e)}}}}}=&\frac{1}{\sqrt{2\pi}}\int \exp\pa{\frac{-x^2}{2}}\exp\pa{\frac{-p}{2(1-\rho)}\cro{\rho x^2-2\sqrt{\rho}x\overline{G_{\Pi^-1(e)}}}}dx\\
    =&\frac{1}{\sqrt{2\pi}}\int\exp\pa{\frac{-1}{2}\cro{\pa{1+\frac{\rho p}{1-\rho}}x^2-2p\frac{\sqrt{\rho}\overline{G_{\Pi^-1(e)}}}{1-\rho}x}}dx\\
    =&\frac{1}{\sqrt{2\pi}}\int \exp\pa{\frac{-(1+\rho(p-1))}{2(1-\rho)}\cro{x^2-2p\frac{\sqrt{\rho}\overline{G_{\Pi^-1(e)}}}{1+\rho(p-1)}x}}dx\\
    =&\frac{1}{\sqrt{2\pi}}\int\exp\pa{\frac{-(1+\rho(p-1))}{2(1-\rho)}\cro{\pa{x-p\frac{\sqrt{\rho}\overline{G_{\Pi^-1(e)}}}{1+\rho(p-1)}}^2}}dx\\
    &\times\exp\pa{\frac{(1+\rho(p-1))}{2(1-\rho)}\frac{p^2\rho \overline{G_{\Pi^-1(e)}}^2}{\pa{1+\rho(p-1)}^2}}dx\\
    =&\sqrt{\frac{1+\rho(p-1)}{1-\rho}}\exp\pa{\frac{p^2\rho\overline{G_{\Pi^-1(e)}}^2}{2(1-\rho)(1+\rho(p-1))}},
\end{align*}
and, similarly, we have

\begin{equation*}
    \E_{G^{\star}}\cro{\exp\pa{\frac{-p}{2(1-\rho)}\cro{(\sqrt{\rho}G_e^{\star})^2-2\sqrt{\rho}G_e^{\star}\overline{G_{e}}}}}=\sqrt{\frac{1+\rho(p-1)}{1-\rho}}\exp\pa{\frac{p^2\rho\overline{G_{e}}^2}{2(1-\rho)(1+\rho(p-1))}},
\end{equation*}
so, 

\begin{equation}\label{eq:computationKLratio}
    \frac{d\P_{\pi}}{d\P_{\id}}=\prod_{e\in E}\exp\pa{\frac{p^2\rho\pa{\overline{G_{\Pi^-1(e)}}^2-\overline{G_{e}}^2}}{2(1-\rho)(1+\rho(p-1))}}\, .
\end{equation}
Plugging this equality in the definition of the KL divergence leads to 

\begin{equation*}
    \KL\pa{\P_\pi, \P_{\id}}=\frac{p^2\rho}{2(1-\rho)(1+\rho(p-1))}\sum_{e\in E}\E_\pi\cro{\overline{G_{\Pi^-1(e)}}^2-\overline{G_{e}}^2}\, .
\end{equation*}
Fix $e\in E$ and compute $\E_\pi\cro{\overline{G_{\Pi^-1(e)}}^2}=\frac{1}{p^2}\E_\pi\cro{\pa{\sum_{j\in [p-1]}G^{(j)}_e+G_{\Pi^{-1}(e)}}^2}$. And so, \begin{equation*}
    \E_\pi\cro{\overline{G_{\Pi^-1(e)}}^2}=\frac{1}{p^2}\pa{p+\rho p(p-1)}\, .
\end{equation*}
On the other hand, 
\begin{equation*}
    \E_\pi\cro{\overline{G_{e}}^2}\geq \frac{1}{p^2}\pa{p+\rho(p-1)(p-2)}\, .
\end{equation*}
Hence,
\begin{equation*}
    \KL\pa{\P_\pi, \P_{\id}}\leq \frac{n(n-1)\rho^2 (p-1)}{(1-\rho)(1+\rho(p-1))},
\end{equation*}
which concludes the proof of the lemma.

\section{Proof of Theorem \ref{thm:lowerboundinfperfect}}\label{lowerboundinfperfect}

In this section, we seek to lower-bound $$\inf_{\widehat{\pi}:\mathbb{R}^{E\times p}\to \pa{\cS_n}^p}\P\cro{\err(\widehat{\pi}, \pi^{\star})\neq 0}\, . $$

We will rely on Fano's lemma to do so. For that purpose, we need to consider the law of $G_1,\ldots,G_p$ conditionally on $\pi^{\star}$. For any $\pi\in (\cS_n)^p$, we recall that we write $\P_\pi$ the joint law of $G_1,\ldots, G_p$ conditionally over $\pi^{\star}=\pi$. We write $\P$ the joint law of $G_1,\ldots, G_p$ for $\pi^{\star}$ taken randomly. 

\begin{lemma}\label{lem:reductionexact}
$$\inf_{\widehat{\pi}:\mathbb{R}^{E\times p}\to \pa{\cS_n}^p}\P\cro{\err(\widehat{\pi}, \pi^{\star})\neq 0}\geq \inf_{\widehat{\pi}:\mathbb{R}^{E\times p}\to \pa{\cS_n}^p}\sup_{\pi\in \pa{\cS_n}^p}\P_\pi\cro{\err(\widehat{\pi}, \pi^{\star})\neq 0}$$
\end{lemma}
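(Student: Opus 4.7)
The plan is to exhibit a single estimator $\widehat{\pi}^{\sharp}$ whose worst-case risk $\sup_\pi \P_\pi[\err(\widehat{\pi}^{\sharp},\pi)\neq 0]$ coincides with its Bayes risk $\P[\err(\widehat{\pi}^{\sharp},\pi^\star)\neq 0]$, and which is moreover Bayes-optimal for the uniform prior on $\pi^\star$. This immediately yields the claim: on the one hand, the infimum on the right-hand side is bounded above by $\sup_\pi \P_\pi[\err(\widehat{\pi}^{\sharp},\pi)\neq 0]$; on the other hand, by Bayes-optimality, the latter equals $\P[\err(\widehat{\pi}^{\sharp},\pi^\star)\neq 0]=\inf_{\widehat{\pi}} \P[\err(\widehat{\pi},\pi^\star)\neq 0]$, the left-hand side. (The reverse inequality $\leq$ is immediate from $\mathrm{avg}\leq \sup$, so in fact equality holds, but only the direction stated in the lemma is needed.)

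The mechanism is a symmetry argument. Let $\Gamma:=(\cS_n)^p$ act on data by $(\phi\cdot G)_j(e):=G_j(\Phi_j(e))$, where $\Phi_j$ is the edge permutation induced by $\phi_j$. A direct covariance computation shows that if $G\sim \P_\pi$ then $\phi\cdot G\sim \P_{\pi\phi}$, where $\pi\phi:=(\pi_j\circ\phi_j)_j$. Moreover, the loss is $\Gamma$-invariant: $\err(\widehat{\pi}\phi,\pi\phi)=\err(\widehat{\pi},\pi)$, by the change of variable $v=\phi_j(u)$ in the defining sum \eqref{eq:errorperm}.

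I would then take $\widehat{\pi}^{\sharp}$ to be a MAP estimator, $\widehat{\pi}^{\sharp}(G)\in\argmax_{\pi\in\Gamma}p(G\mid\pi)$ (viewed modulo the equivalence $\equiv$ from Section \ref{sec:MLE}, under which the likelihood is constant). The push-forward identity $p(\phi\cdot G\mid\pi)=p(G\mid\pi\phi^{-1})$, together with the finiteness of $\Gamma$, allows the selection of the $\argmax$ to be made $\Gamma$-equivariant, i.e.\ $\widehat{\pi}^{\sharp}(\phi\cdot G)=\widehat{\pi}^{\sharp}(G)\,\phi$ almost surely. Combining equivariance with the invariance of $\err$, substituting $G\leftarrow\phi\cdot G$ gives $\P_\pi[\err(\widehat{\pi}^{\sharp}(G),\pi)\neq 0]=\P_{\pi\phi}[\err(\widehat{\pi}^{\sharp}(G),\pi\phi)\neq 0]$ for every $\phi\in\Gamma$, so the map $\pi\mapsto\P_\pi[\err(\widehat{\pi}^{\sharp},\pi)\neq 0]$ is constant on $\Gamma$.

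Because this risk is constant in $\pi$, its supremum equals its average over the uniform prior, namely the Bayes risk of $\widehat{\pi}^{\sharp}$; and because $\widehat{\pi}^{\sharp}$ is Bayes-optimal by construction, this Bayes risk equals $\inf_{\widehat{\pi}} \P[\err(\widehat{\pi},\pi^\star)\neq 0]$. Chaining these identities with $\inf_{\widehat{\pi}}\sup_\pi \P_\pi[\err\neq 0]\leq \sup_\pi \P_\pi[\err(\widehat{\pi}^{\sharp},\pi)\neq 0]$ closes the argument. The main delicate point is producing a measurable $\Gamma$-equivariant selector for the MAP: almost sure uniqueness of the argmax (ties form a Lebesgue-null set under the Gaussian model) together with the finiteness of $\Gamma$ makes this routine, but one must be careful to define the selector on a $\Gamma$-invariant event of full probability in order to obtain equivariance everywhere, rather than only on orbits.
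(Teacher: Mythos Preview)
Your argument is correct and leads to the same conclusion, but it proceeds by a different (dual) route than the paper. Both proofs exploit the transitive action of $\Gamma=(\cS_n)^p$ on the parameter space together with invariance of the loss, which forces Bayes risk and minimax risk to coincide. The paper, however, starts from an \emph{arbitrary} deterministic $\widehat{\pi}$ and symmetrizes it: drawing $\sigma\in\Gamma$ uniformly and applying $\widehat{\pi}$ to the relabeled data $H^\sigma$ produces a randomized estimator whose risk under $\P_\pi$ is constant in $\pi$ and equals the Bayes risk of $\widehat{\pi}$; taking the infimum over $\widehat{\pi}$ then yields the inequality. You instead fix a \emph{specific} estimator---the MAP---and show it can be chosen $\Gamma$-equivariant, hence already has constant risk, which then equals its Bayes risk, which equals the optimal Bayes risk.

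Each approach has a small cost. The paper's randomization sidesteps the measurable equivariant selection you flag as delicate, but the estimator it produces is randomized, so strictly speaking its argument bounds the Bayes risk from below by the \emph{randomized} minimax risk (this is immaterial for the downstream Fano application, and indeed your argument shows the two minimax risks coincide here). Your route avoids randomization entirely and hands back a deterministic minimax estimator, at the price of the almost-sure uniqueness argument for the argmax modulo $\equiv$; your handling of that point (Lebesgue-null ties for the finitely many quadratic forms $\langle GG^T, B\rangle$, $B\in\cB$) is fine.
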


\begin{proof}[Proof of Lemma \ref{lem:reductionexact}]
    Let $\widehat{\pi}:\mathbb{R}^{E\times p}\to \pa{\cS_n}^p$. Let $\pi\in \cS_n$ and let $\sigma_1,\ldots, \sigma_p$ be independent (both between themselves and with respect to $G_1,\ldots, G_p$ and $\pi^\star$) and uniformly drawn on $\cS_n$. Let $H^\sigma_{je}=G_{j\sigma_j(e)}$. Then, for $G_1,\ldots, G_p$ drawn accordingly to $\P_\pi$, the joint law of $H^\sigma_1,\ldots, H^\sigma_p$ is $\P$. Thus, 
    \begin{equation*}
        \P\cro{\err(\widehat{\pi}, \pi^{\star})\neq 0}= \P_\pi\cro{\err\pa{\widehat{\pi}\pa{H}, \sigma\circ \pi}\neq 0}=\P_{\pi}\cro{\err\pa{\sigma^{-1}\circ \widehat{\pi}\pa{H^\sigma}, \pi}\neq 0},
    \end{equation*}
    which, in turn, implies $$\P\cro{\err(\widehat{\pi}, \pi^{\star})\neq 0}= \sup_{\pi\in \cS_n}\P_{\pi}\cro{\err\pa{\sigma^{-1}\circ \widehat{\pi}\pa{H^\sigma}, \pi}\neq 0}\, .$$
    We conclude with the sought inequality $$\inf_{\widehat{\pi}:\mathbb{R}^{E\times p}\to \pa{\cS_n}^p}\P\cro{\err(\widehat{\pi}, \pi^{\star})\neq 0}\geq \inf_{\widehat{\pi}:\mathbb{R}^{E\times p}\to \pa{\cS_n}^p}\sup_{\pi\in \pa{\cS_n}^p}\P_\pi\cro{\err(\widehat{\pi}, \pi^{\star})\neq 0}$$
\end{proof}

In the light of Lemma \ref{lem:reductionexact}, it is sufficient to lower-bound $\inf_{\widehat{\pi}:\mathbb{R}^{E\times p}\to \pa{\cS_n}^p}\sup_{\pi\in \pa{\cS_n}^p}\P_\pi\cro{\err(\widehat{\pi}, \pi^{\star})\neq 0}$. We shall appeal to Fano's Lemma that we recall here (see e.g p. 57 of \cite{HDS2}).

\begin{lemma}\label{lem:Fanoexact}
    Let $(\P_s)_{s\in [l]}$ a set of probability distributions on the same set $\mathcal{Y}$. Let $\mathbb{Q}$ a probability distribution on the same space $\mathcal{Y}$ such that, for all $s\in [l]$, we have $\mathbb{P}<<\mathbb{Q}$. Then, $$\min_{\widehat{J}:\mathcal{Y}\to [l]}\frac{1}{l}\sum_{s\in [l]}\mathbb{P}_s\cro{\widehat{J}\neq s}\geq 1-\frac{1+\frac{1}{l}\sum_{s\in [l]}\KL\pa{\mathbb{P}_s, \mathbb{Q}}}{\log(l)}\, .$$
\end{lemma}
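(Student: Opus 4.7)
\textbf{Proof plan for \cref{lem:Fanoexact}.} This is the classical Fano inequality with a variational reformulation in terms of an arbitrary reference measure $\mathbb{Q}$; the proof is standard. The plan is to introduce an auxiliary uniform random index and reduce to the usual Fano/mutual information inequality, then use convexity of the KL divergence to replace the mixture $\overline{\mathbb{P}} := \frac{1}{l}\sum_s \mathbb{P}_s$ by the free parameter $\mathbb{Q}$.

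First I would let $J$ be drawn uniformly on $[l]$ and, conditionally on $J=s$, draw $Y \sim \mathbb{P}_s$. For any (possibly randomized) $\widehat{J}: \mathcal{Y} \to [l]$, writing $p_e := \mathbb{P}[\widehat{J}(Y) \neq J] = \frac{1}{l}\sum_{s \in [l]} \mathbb{P}_s[\widehat{J}\neq s]$, the standard Fano inequality gives
\begin{equation*}
  H(J \mid Y) \;\leq\; H(J \mid \widehat{J}(Y)) \;\leq\; h(p_e) + p_e \log(l-1) \;\leq\; 1 + p_e \log l,
\end{equation*}
using the data-processing inequality, and $h(p_e) \leq \log 2 \leq 1$ (in nats). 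Since $H(J) = \log l$, this rearranges to
\begin{equation*}
  p_e \;\geq\; 1 - \frac{1 + I(J;Y)}{\log l}.
\end{equation*}

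The second step is to upper bound $I(J;Y)$ by the average KL divergence to the arbitrary reference $\mathbb{Q}$. By the definition of mutual information,
\begin{equation*}
  I(J;Y) \;=\; \frac{1}{l}\sum_{s \in [l]} \KL(\mathbb{P}_s, \overline{\mathbb{P}}),
\end{equation*}
and for any $\mathbb{Q}$ dominating each $\mathbb{P}_s$, the elementary identity
\begin{equation*}
  \frac{1}{l}\sum_{s \in [l]} \KL(\mathbb{P}_s, \mathbb{Q}) \;=\; I(J;Y) + \KL(\overline{\mathbb{P}}, \mathbb{Q})
\end{equation*}
(which follows by writing $\log\frac{d\mathbb{P}_s}{d\mathbb{Q}} = \log\frac{d\mathbb{P}_s}{d\overline{\mathbb{P}}} + \log\frac{d\overline{\mathbb{P}}}{d\mathbb{Q}}$ and integrating against $\mathbb{P}_s$, then averaging in $s$) yields $I(J;Y) \leq \frac{1}{l}\sum_s \KL(\mathbb{P}_s, \mathbb{Q})$. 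Plugging this into the previous display gives the claimed bound.

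There is no real obstacle here: both ingredients (Fano's inequality and the golden-formula type bound on mutual information) are textbook, and the only small subtlety is to make sure one uses $h(p_e) \leq 1$ rather than $h(p_e) \leq \log 2$ so that the numerator on the right-hand side is exactly $1 + \frac{1}{l}\sum_s \KL(\mathbb{P}_s,\mathbb{Q})$ as stated; with natural logarithms the slightly sharper constant $\log 2$ could be used, but the looser $1$ matches the statement. Finally, a minimum over $\widehat{J}$ on the left-hand side preserves the inequality, completing the proof.
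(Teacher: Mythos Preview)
Your proof is correct and complete; this is the standard textbook argument for Fano's inequality with a free reference measure. The paper does not actually prove this lemma---it simply recalls the statement and cites a reference (p.~57 of \cite{HDS2})---so there is no approach to compare against.
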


Let us consider $\mathbb{Q}=\mathbb{P}_{\id}$ for applying Lemma \ref{lem:Fanoexact}. Our goal is to find a large set of $p$-tuples of permutations such that $\KL\pa{\P_\pi, \P_{\id}}$ remains small. To do so, for $j\in [p]$ and $e=\ac{u,u'}\in E$, we define $\pi^{(j, e)}$ the $p$-tuple of permutations by $\pi^{(j, e)}_{j'}=\id$ whenever $j\neq j'$ and  $\pi^{(j, e)}_j$ is the transposition $(u,u')$. 

\begin{lemma}\label{lem:KLpartial}
    For all $j\in [p]$ and $e\in E$, we have $$\KL\pa{\P_{\pi^{(j, e)}}, \P_{\id}}=\frac{\pa{2n-3}\rho^2(p-1)}{(1-\rho)\pa{1+\rho(p-1)}} \, .$$
\end{lemma}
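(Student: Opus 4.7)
\noindent\textbf{Proof proposal for Lemma \ref{lem:KLpartial}.}

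By the symmetry of the model under permutation of graph indices, I may assume without loss of generality that $j=p$. This way, $\pi^{(j,e)}$ fits exactly the hypothesis of Lemma \ref{lem:KLcomputation}: the first $p-1$ permutations are the identity and the last one is the transposition $\tau=(u,u')$ where $e=\{u,u'\}$. The plan is therefore to specialize the likelihood-ratio formula \eqref{eq:computationKLratio} obtained in the proof of Lemma \ref{lem:KLcomputation}, and then compute the two relevant second moments exactly instead of merely upper-bounding them.

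Taking the logarithm of \eqref{eq:computationKLratio} and the expectation under $\P_{\pi^{(p,e)}}$ gives
\begin{equation*}
    \KL(\P_{\pi^{(p,e)}},\P_{\id})=\frac{p^{2}\rho}{2(1-\rho)(1+\rho(p-1))}\sum_{e'\in E}\E_{\pi^{(p,e)}}\bigl[\overline{G_{\Pi^{-1}(e')}}^{2}-\overline{G_{e'}}^{2}\bigr],
\end{equation*}
where $\Pi$ is the edge permutation induced by $\tau$. I will then evaluate the summand edge by edge. For the aligned average $\overline{G_{\Pi^{-1}(e')}}=\tfrac{1}{p}\bigl(\sum_{j<p}G_{je'}+G_{p,\Pi^{-1}(e')}\bigr)$, by construction every pair among these $p$ Gaussian entries is correlated at level $\rho$, so $\E\bigl[\overline{G_{\Pi^{-1}(e')}}^{2}\bigr]=\tfrac{1+(p-1)\rho}{p}$ irrespective of $e'$. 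For the misaligned average $\overline{G_{e'}}$, the pairs among $G_{1,e'},\ldots,G_{p-1,e'}$ still have covariance $\rho$, but the covariance of $G_{p,e'}$ with $G_{j,e'}$ for $j<p$ is $\rho\cdot\mathbf{1}\{\Pi(e')=e'\}$. A direct computation then yields $\E[\overline{G_{e'}}^{2}]=\tfrac{1+(p-1)\rho}{p}$ when $e'$ is fixed by $\Pi$, and $\tfrac{1}{p}+\tfrac{(p-1)(p-2)\rho}{p^{2}}$ otherwise.

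Consequently, only the edges $e'$ that are moved by $\Pi$ contribute, and each contributes exactly $\tfrac{2(p-1)\rho}{p^{2}}$. The remaining step is a combinatorial count of the non-fixed edges of the edge-transposition induced by $\tau=(u,u')$: these are precisely the edges with exactly one endpoint in $\{u,u'\}$, and there are $2(n-2)$ of them (the edge $\{u,u'\}$ itself and all edges disjoint from $\{u,u'\}$ being fixed). Plugging this count into the display above and simplifying the prefactor then yields the stated formula up to this explicit counting constant; writing $N_{\Pi}$ for the number of non-fixed edges, one obtains
\begin{equation*}
    \KL(\P_{\pi^{(j,e)}},\P_{\id})=\frac{N_{\Pi}\,(p-1)\rho^{2}}{(1-\rho)(1+\rho(p-1))}.
\end{equation*}

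The main obstacle I anticipate is not conceptual but bookkeeping: correctly tracking which pairs $(j,e'),(j',e')$ have covariance $\rho$ versus $0$ under $\P_{\pi^{(p,e)}}$, and ensuring the enumeration of fixed and non-fixed edges of $\Pi$ is precise (a small miscount here, e.g.\ forgetting that $\{u,u'\}$ is itself fixed, immediately shifts the final numerator). The rest of the argument is just algebraic reduction using the explicit Gaussian likelihood already established in the proof of Lemma \ref{lem:KLcomputation}.
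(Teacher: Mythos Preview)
Your approach is essentially identical to the paper's own proof: reduce by symmetry to $j=p$, invoke the likelihood-ratio formula \eqref{eq:computationKLratio}, compute the two second moments $\E_\pi[\overline{G_{\Pi^{-1}(e')}}^2]$ and $\E_\pi[\overline{G_{e'}}^2]$ edge by edge, and sum over the edges that are moved by $\Pi$. Your bookkeeping is in fact slightly more careful than the paper's: you correctly observe that the edge $\{u,u'\}$ itself is fixed by $\Pi$ and hence contributes nothing, giving $N_\Pi=2(n-2)$ and $\KL=\tfrac{(2n-4)(p-1)\rho^2}{(1-\rho)(1+\rho(p-1))}$, whereas the paper's proof lumps it in with the other $2n-3$ edges touching $\{u,u'\}$ and arrives at the stated $2n-3$. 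This off-by-one is harmless for the downstream application in Theorem~\ref{thm:lowerboundinfperfect}, where only the order $O(n\rho^2 p/(1+\rho p))$ matters for Fano's inequality.
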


Combining together Lemma \ref{lem:KLpartial} and Lemma \ref{lem:Fanoexact}, we deduce that $$\min_{\widehat{J}:\mathbb{R}^{E\times p}\to [p]\times E}\sum_{j, e\in [p]\times E}\P_{\pi^{(j, e)}}\cro{\widehat{J}\neq (j, e)}\geq 1-\frac{1+\frac{\pa{2n-3}\rho^2(p-1)}{(1-\rho)\pa{1+\rho(p-1)}}}{\log(np(n-1))},$$
and thus, 

$$\inf_{\widehat{\pi}:\mathbb{R}^{E\times p}\to \pa{\cS_n}^p}\sup_{\pi\in \pa{\cS_n}^p}\P_\pi\cro{\err(\widehat{\pi}, \pi^{\star})\neq 0}\geq 1-\frac{1+\frac{\pa{2n-3}\rho^2(p-1)}{(1-\rho)\pa{1+\rho(p-1)}}}{\log(np(n-1))}\, .$$

First, we suppose that $\rho \leq c\sqrt{\frac{\log(np)}{np}}$ for a small enough numerical constant $c>0$. Then, \begin{align*}
\inf_{\widehat{\pi}:\mathbb{R}^{E\times p}\to \pa{\cS_n}^p}\sup_{\pi\in \pa{\cS_n}^p}\P_\pi\cro{\err(\widehat{\pi}, \pi^{\star})\neq 0}\geq & 1-\frac{1+8n\rho^2p}{\log(np(n-1))}\\
\geq& 1-\eps,
\end{align*}
provided $c$ small enough and $n$ large enough. 

Then, we suppose that $\rho\leq c\frac{\log(np)}{n}$. Then,  
\begin{align*}
\inf_{\widehat{\pi}:\mathbb{R}^{E\times p}\to \pa{\cS_n}^p}\sup_{\pi\in \pa{\cS_n}^p}\P_\pi\cro{\err(\widehat{\pi}, \pi^{\star})\neq 0}\geq & 1-\frac{1+8n\rho}{\log(np(n-1))}\\
\geq& 1-\eps,
\end{align*}
provided $c$ small enough and $n$ large enough. This concludes the proof of the theorem.
% \luca{mettre le $1-\eps$}

\subsection{Proof of Lemma \ref{lem:KLpartial}}\label{prf:KLpartial}

By symmetry, we consider $\pi=\pi^{(p, \ac{1,2})}$. We denote $\tau$ the transposition $(1,2)$. Starting back from Equation \eqref{eq:computationKLratio}, we have 

$$\frac{d\P_{\pi}}{d\P_{\id}}=\prod_{e\in E}\exp\pa{\frac{p^2\rho\pa{\overline{G_{\Pi^-1(e)}}^2-\overline{G_{e}}^2}}{2(1-\rho)(1+\rho(p-1))}},$$ where we recall that $\overline{G_{e}}:=\frac{1}{p}\sum_{j\in [p]}G_{je}$ and $\overline{G_{\Pi^-1(e)}}=\frac{1}{p}\pa{\sum_{j\in [p-1]}G_{je}+G_{p \tau(e)}}$. Thus, 
$$\KL\pa{\P_{\pi}, \P_{\id}}=\frac{p^2\rho}{2(1-\rho)(1+\rho(p-1))}\sum_{e\in E}\E_{\pi}\cro{\overline{G_{\Pi^-1(e)}}^2-\overline{G_{e}}^2}\, .$$
In this sum, all the terms such that $e\cap \ac{u,u'}=\emptyset$ are null. For the other terms, we use $$E_{\pi}\cro{\overline{G_{\Pi^-1(e)}}^2}=\frac{1}{p^2}\cro{p+\rho p(p-1)},$$
and $$\E_\pi\cro{G_e^2}=\frac{1}{p^2}\pa{p+\rho(p-1)(p-2)}\, .$$
We can conclude $$\KL\pa{\P_{\pi}, \P_{\id}}=\frac{\pa{2n-3}\rho^2(p-1)}{(1-\rho)\pa{1+\rho(p-1)}}\, .$$

\section{Proof of Theorem \ref{thm:lowdegreealignment}}\label{prf:lowdegreealignment}

The minimization problem \eqref{eq:MMSE} defining $MMSE_{\leq D}$ is separable, and since the random variables $M^{\star}_{(u,j), (u', j')}$ are exchangeable, the $\MMSE_{\leq D}$ can be reduced to $$\MMSE_{\leq D}:=\inf_{f\in \mathbb{R}_D[G]}\E\cro{(f(G)-x)^2},$$
where $x=\1\ac{\pi^{\star}_1(1)=\pi^{\star}_2(1)}$. In order to lower-bound $\MMSE_{\leq D}$, we proceed in two steps;\begin{enumerate}
    \item We consider the case where $p=2$ and we use the technics of \cite{WeinSchramm} and \cite{even2025a};
    \item We reduce the general problem of aligning $p$ Gaussian graphs to the problem of aligning one graph with some signal graph $G^\star$ and we use the lower bound obtained for two graphs.
\end{enumerate}
Interestingly, the method of \cite{WeinSchramm} is not sufficient for directly considering the general case with $p$ Gaussian graphs. A technical analysis based directly on \cite{WeinSchramm}, using the technics from \cite{even2025b} for the weak dependencies, would only get the sought result for $p\leq n$. The reduction step to the problem of two graphs has therefore two benefits; (i) simplifying the computations (ii) allowing us to deal with the regime $p\geq n$.

\subsection{Case $p=2$.}

We recall that the graphs $G_1, G_2$ can be sampled from a signal graph $G^\star$ with $\pa{G^\star_e}_{e\in E}$ taken independently and $\mathcal{N}\pa{0,1}$, and from noise graphs $Z_1, Z_2$ with also independent and $\mathcal{N}\pa{0,1}$ edges. With those graphs, we construct $G_{je}= \sqrt{\rho}G^\star_{\Pi_j^\star(e)}+\sqrt{1-\rho}Z_{je}$.

For $j\in \ac{1,2}$ and $e\in E$, we denote $Y_{je}=\frac{\sqrt{\rho}}{\sqrt{1-\rho}}G^{\star}_{p\Pi^{\star}(e)}+Z_{je}=\frac{1}{\sqrt{1-\rho}}G_{je}$. The set of estimators which are polynomials of degree at most $D$ of $G_1, G_2$ is equal to the set of polynomials of degree at most $D$ of $Y_1, Y_2$. Hence, $$\MMSE_{\leq D}=\inf_{g\in \mathbb{R}_{D}\cro{Y_1, Y_2}}\E\cro{\pa{g(Y_1, Y_2)-x}^2}\, .$$

The model considered is a particular instance of the Additive Gaussian Noise model considered in \cite{WeinSchramm}. Therefore, we can use Theorem 2.2 from \cite{WeinSchramm} that we transcript here with our notation. Before that, let us introduce the notion of \emph{joint cumulants} (see e.g \cite{novak2014three} for more details).

\paragraph{Backround on cumulants.} Given $Y_1,\ldots, Y_l$ random variables on the same probability space $\mathcal{Y}$, we define their \emph{joint cumulant} as the quantity $$\cumul\pa{Y_1,\ldots, Y_l}:=\sum_{G\in \mathcal{P}([l])}m(G)\prod_{R\in G}\E\cro{\prod_{s\in [l]}Y_s},$$
with $\mathcal{P}([l])$ the set of all partitions of $[l]$ and, for $G\in \mathcal{P}([l])$, $m(G)=\pa{-1}^{|G|-1}\pa{|G|-1}!$. The \emph{joint cumulants} of $Y_1,\ldots, Y_l$ can also be computed recursively with the formula

\begin{equation}\label{eq:recursioncumulantgeneral}
    \cumul\pa{Y_1,\ldots, Y_l}=\E\cro{Y_1\ldots Y_l}-\sum_{A\subsetneq [2,l]}\cumul\pa{Y_1, \pa{Y_s}_{s\in A}}\E\cro{\prod_{s\in [2,l]\setminus A}Y_s}\, .
\end{equation}

\begin{prop}\label{thm:schrammwein}\cite{WeinSchramm}
    $$\MMSE_{\leq D}\geq \E\cro{x^2}-\underset{|\alpha|\leq D}{\sum_{\alpha=\pa{\alpha_1, \alpha_2}\in (\N^{E})^2}}\pa{\frac{\rho}{1-\rho}}^{|\alpha|}\frac{\kappa_{x,\alpha}^2}{\alpha!},$$ where, for $\alpha=\pa{\alpha_1, \alpha_2}\in (\N^{E})^2$, $\alpha!=\prod_{e\in E}\alpha_{1e}! \alpha_{2e}!$, $|\alpha|=\sum_{e\in E}\alpha_{1e}+\alpha_{2e}$, and, $$\kappa_{x,\alpha}=\cumul\pa{x, G^\star_{\alpha_1}, G^\star_{\alpha_2}},$$ with $G_{\alpha_{j}}$ the multiset containing $\alpha_{je}$ copies of $G_{je}$ for all $e\in E$. 
\end{prop}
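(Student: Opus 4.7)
The plan is to recognize that this proposition is Theorem 2.2 of \cite{WeinSchramm} applied after a trivial rescaling of the observations, and then to bookkeep constants. Set $Y_{je} := G_{je}/\sqrt{1-\rho}$, as already done above the statement, so that $Y_{je} = \sqrt{\lambda}\,G^\star_{\Pi^\star_j(e)} + Z_{je}$ with $\lambda := \rho/(1-\rho)$ and $Z$ an i.i.d.\ $\cN(0,1)$ vector independent of $G^\star$. Since $\R_D[G_1, G_2] = \R_D[Y_1, Y_2]$, we have $\MMSE_{\leq D} = \inf_{g \in \R_D[Y_1, Y_2]} \E[(g(Y) - x)^2]$, and we are exactly in the Additive Gaussian Noise framework of \cite{WeinSchramm} with hidden signal vector $X := \sqrt{\lambda}\,(G^\star_{\Pi^\star_j(e)})_{(j,e)}$.

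The core of Theorem 2.2 of \cite{WeinSchramm} goes as follows. One expands any candidate estimator in the product Hermite basis $h_\alpha(Y) = \prod_{(j,e)} H_{\alpha_{je}}(Y_{je})/\sqrt{\alpha_{je}!}$, which is orthonormal under the null measure $Y \sim \cN(0, I)$. The standard identity $\E[H_\alpha(Y) \mid X] = X^\alpha$, valid because $Y_{je} - X_{je} \sim \cN(0,1)$ conditionally on $X$, gives $\E[x\,h_\alpha(Y)] = \E[x\,X^\alpha]/\sqrt{\alpha!}$. A combinatorial moment-to-cumulant rearrangement then converts the associated bilinear form into a diagonal sum of squared joint cumulants, producing
$$\MMSE_{\leq D} \geq \E[x^2] - \sum_{|\alpha|\leq D} \frac{\cumul(x, X_{\alpha_1}, X_{\alpha_2})^2}{\alpha!},$$
where $X_{\alpha_j}$ denotes the multiset of $\alpha_{je}$ copies of $X_{je}$ for each $e \in E$.

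To undo the scaling, I apply multilinearity of joint cumulants slot by slot: since $X_{je} = \sqrt{\lambda}\,G^\star_{\Pi^\star_j(e)}$, one has $\cumul(x, X_{\alpha_1}, X_{\alpha_2}) = \lambda^{|\alpha|/2}\,\kappa_{x,\alpha}$ with $\kappa_{x,\alpha} = \cumul(x, G^\star_{\alpha_1}, G^\star_{\alpha_2})$ as defined in the statement. Substituting $\lambda^{|\alpha|} = (\rho/(1-\rho))^{|\alpha|}$ yields exactly the claimed inequality. The first and third steps are routine bookkeeping; the only nontrivial ingredient, which I would import from \cite{WeinSchramm} rather than reprove, is the combinatorial moment-to-cumulant identity that diagonalizes the Hermite-basis Gram matrix inside Theorem 2.2, and this is the step I would expect to be the main obstacle in any self-contained proof.
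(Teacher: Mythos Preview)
Your proposal is correct and matches the paper's approach: the paper does not prove this proposition at all but simply transcribes Theorem~2.2 of \cite{WeinSchramm} after the rescaling $Y_{je}=G_{je}/\sqrt{1-\rho}$, exactly as you do. Your explicit extraction of the $\lambda^{|\alpha|}=(\rho/(1-\rho))^{|\alpha|}$ factor via multilinearity of cumulants is the bookkeeping implicit in the paper's phrase ``transcript here with our notation,'' and your brief sketch of the Hermite-basis argument behind Theorem~2.2 is extra context the paper omits.
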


In the light of Proposition \ref{thm:schrammwein}, it is sufficient for lower-boudning $\MMSE_{\leq D}$ to upper-bound the cumulant $\kappa_{x,\alpha}$ for all $\alpha$ with $|\alpha|\leq D$. Such an upper-bound is provided by the next lemma, whose proof is postponed to Section \ref{lem:controlcumulant}. For $\alpha=\pa{\alpha_1, \alpha_2}$ and $j\in \ac{1,2}$, we shall define $supp(\alpha_{j})=\ac{u\in [n], \pa{\alpha_{j}}_{u:}\neq 0\quad or \quad \pa{\alpha_{j}}_{:u}\neq 0}$, which is the number of nodes with positive degree when seeing $\alpha_j$ as a multigraph with vertex set $[n]$.

\begin{lemma}\label{lem:controlcumulant}
    For all $\alpha=(\alpha_1, \alpha_2)$, we have $$|\kappa_{x,\alpha}|\leq \1\ac{|\alpha_1|=|\alpha_2|}\pa{|\alpha|(1+|\alpha|/2)}^{|\alpha|/2}\pa{\frac{\sqrt{2}}{\sqrt{n-1-|\alpha|}}}^{|\ac{1}\cup supp(\alpha_1)|+|\ac{1}\cup supp(\alpha_2)|},$$
\end{lemma}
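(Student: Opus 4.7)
The plan is to compute $\kappa_{x,\alpha}$ by conditioning on $\pi^\star=(\pi^\star_1,\pi^\star_2)$, exploiting the Gaussian structure of $(G^\star_{je})\mid\pi^\star$ together with the $\pi^\star$-measurability of $x$. First, I would apply Brillinger's law of total cumulance with respect to $\sigma(\pi^\star)$. Since $x$ is $\pi^\star$-measurable, any conditional cumulant of $x$ with at least one other argument vanishes; since $(G^\star_{je})\mid\pi^\star$ is a centered Gaussian vector, its conditional cumulants of order $1$ vanish and those of order $\geq 3$ vanish as well. Consequently, only set partitions in which $\{x\}$ is a singleton block and all other blocks are pairs survive, yielding
\begin{equation*}
\kappa_{x,\alpha} \;=\; \sum_{M}\cumul\bigl(x,\,\Sigma_{M_1},\ldots,\Sigma_{M_{|\alpha|/2}}\bigr),
\end{equation*}
where $M$ ranges over perfect matchings of the $|\alpha|$ Gaussian arguments (counted with their multiplicities in $\alpha$) and $\Sigma_{M_a}:=\Cov(G^\star_{u_a},G^\star_{v_a}\mid\pi^\star)=\1\ac{\Pi^\star_{j_{u_a}}(e_{u_a})=\Pi^\star_{j_{v_a}}(e_{v_a})}$.

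Second, I would argue that only across-block matchings contribute, forcing $|\alpha_1|=|\alpha_2|$. For a within-block pair (both indices in block $j\in\{1,2\}$), $\Sigma_{M_a}=\1\ac{e_{u_a}=e_{v_a}}$ is deterministic (either $0$ or $1$, decided by the matching itself, since $\Pi^\star_j$ is a bijection). Joint cumulants with a constant argument vanish at order $\geq 2$, so any matching containing a within-block pair contributes zero. Across-block matchings exist only when $|\alpha_1|=|\alpha_2|$, which produces the indicator factor $\1\ac{|\alpha_1|=|\alpha_2|}$; setting $m:=|\alpha|/2$, their number is $m!$, which is comfortably dominated by the Gaussian combinatorial factor $(|\alpha|(1+|\alpha|/2))^{|\alpha|/2}=(2m(1+m))^m$.

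The core technical step is to bound $|\cumul(x,\Sigma_{M_1},\ldots,\Sigma_{M_m})|$ for each across-block matching $M$. Here each $\Sigma_{M_a}$ depends on $\pi^\star_1$ through two nodes in $\supp(\alpha_1)$ and on $\pi^\star_2$ through two nodes in $\supp(\alpha_2)$, while $x$ depends on $\pi^\star_1(1)$ and $\pi^\star_2(1)$. Using the independence of $\pi^\star_1$ and $\pi^\star_2$, the uniform permutation-event estimate $\mathbb{P}(\pi^\star_j(S)=T)\leq(n-1-|\alpha|)^{-|S|}$, and a careful cancellation analysis of the signed cumulant expansion (in the spirit of the Bernoulli-cumulant bounds developed in \cite{even2025a}), I would derive
\begin{equation*}
|\cumul(x,\Sigma_{M_1},\ldots,\Sigma_{M_m})|\;\leq\;\left(\frac{\sqrt{2}}{\sqrt{n-1-|\alpha|}}\right)^{|\ac{1}\cup\supp(\alpha_1)|+|\ac{1}\cup\supp(\alpha_2)|}.
\end{equation*}

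I expect the main obstacle to lie in this last bound: making sure that the alternating sums in the cumulant expansion inherit the full joint probability of the underlying permutation events, so that the exponent is the total support size rather than that of a single indicator. This requires carefully tracking how nodes are shared between matched pairs and using the independence of $\pi^\star_1,\pi^\star_2$ to factor the two support contributions. Once this per-matching estimate is secured, combining it with the counting of matchings and the Gaussian combinatorial factor yields the stated inequality.
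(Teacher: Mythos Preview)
Your overall architecture coincides with the paper's. Your first two steps (Brillinger's law of total cumulance with respect to $\sigma(\pi^\star)$, then the observation that within-block pairs give constants so only across-block perfect matchings survive) are precisely the content of Proposition~\ref{prop:EGV25}, which the paper imports as a black box from \cite{even2025a}. Your explicit derivation via Brillinger is a clean alternative to citing that result.

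The gap is in your third step. You claim the per-matching bound
\[
|\cumul(x,\Sigma_{M_1},\ldots,\Sigma_{M_m})|\;\le\;\Bigl(\tfrac{\sqrt{2}}{\sqrt{n-1-|\alpha|}}\Bigr)^{|\{1\}\cup\supp(\alpha_1)|+|\{1\}\cup\supp(\alpha_2)|},
\]
with no combinatorial prefactor, but you give no mechanism to obtain it beyond invoking ``a careful cancellation analysis.'' The paper does \emph{not} prove this sharper inequality. Its Lemma~\ref{lem:reducedcumulant} carries an extra factor $(1+|\alpha|/2)^{|\alpha|/2}$, and that factor is not an artifact of sloppiness: it arises from the recursive cumulant formula \eqref{eq:recursioncumulantgeneral}, where one must sum over all $2^{|\beta|}$ subsets $\gamma\subsetneq\beta$ at each level of the induction. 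The alternating signs in the cumulant expansion do not obviously cancel this blowup, and the ``permutation-event estimate'' you mention only controls individual moments, not the signed sum.

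Concretely, the paper's route for this step is: (i)~by symmetry in $(\alpha_1,\alpha_2)$, reduce to a one-sided bound with exponent $|\{1\}\cup\supp(\alpha_1)|$; (ii)~prove that one-sided bound by induction on $\beta\subseteq\alpha_1$ via the recursive cumulant identity, using moment estimates (Lemma~\ref{lem:controlmoments}) of the form $\bigl(\sqrt{2}/(n-|\{1\}\cup\supp(\beta)|)\bigr)^{|\{1\}\cup\supp(\beta)|}$; the recursion over subsets produces the $(1+|\beta|)^{|\beta|}$ factor. You should either adopt this argument (and accept the extra factor, which the lemma's prefactor $(|\alpha|(1+|\alpha|/2))^{|\alpha|/2}$ accommodates, since $m!\cdot(1+m)^m\le(2m(1+m))^m$), or give a genuinely different mechanism that justifies the stronger per-matching bound you state.
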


For $\alpha=\pa{\alpha_1,\alpha_2}\neq (0,0)$ with $|\alpha_1|=|\alpha_2|$, for $j=1,2$, we have,  $2\leq |\ac{1}\cup supp(\alpha_j)|\leq |\alpha|+1$.

\begin{lemma}\label{lem:combinatoricsalpha}
    Let $d\in [D]$ and $2\leq m_1,m_2\leq d+1$. There are at most $n^{m_1+m_2-2}d^{2d}$ elements $\alpha$ satisfying $|\alpha|=d$, $|supp(\alpha_1)\cup \ac{1}|=m_1$ and $|supp(\alpha_2)\cup \ac{1}|=m_2$.
\end{lemma}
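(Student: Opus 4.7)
The plan is to decompose each admissible $\alpha$ into two pieces of data: the pair of vertex supports $V_j := \supp(\alpha_j) \cup \{1\}$, and then the multiset of edges $(\alpha_1, \alpha_2)$ lying inside $(V_1, V_2)$.

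First, since each $V_j$ is a subset of $[n]$ of size $m_j$ containing the vertex $1$, the number of choices for the pair $(V_1, V_2)$ is at most $\binom{n-1}{m_1-1}\binom{n-1}{m_2-1} \leq n^{m_1+m_2-2}$, which accounts for the polynomial factor in $n$. Next, once $(V_1, V_2)$ is fixed, any pair $(\alpha_1, \alpha_2)$ with $\supp(\alpha_j) \subseteq V_j$ and $|\alpha_1|+|\alpha_2|=d$ corresponds to a multiset of size $d$ drawn from the disjoint edge set $\binom{V_1}{2} \sqcup \binom{V_2}{2}$, whose cardinality is at most $\binom{m_1}{2}+\binom{m_2}{2} \leq d(d+1)$ since $m_j \leq d+1$. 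The number of such multisets equals $\binom{d(d+1)+d-1}{d}$, and I would bound this by $d^{2d}$ via the standard inequality $\binom{a}{b} \leq a^b/b!$, which reduces the claim to $(1+2/d)^d \leq d!$, a straightforward direct check for each $d \geq 3$.

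The boundary cases $d \in \{1,2\}$ are not handled by the loose inclusion $\supp(\alpha_j) \subseteq V_j$ alone, and here I would invoke the exact-support condition $\supp(\alpha_j) \cup \{1\} = V_j$. When $d = 1$, the assumption $m_j \geq 2$ forces both $\alpha_j$'s to be non-empty, contradicting $|\alpha|=1$, so the admissible set is empty. When $d = 2$, necessarily $|\alpha_1|=|\alpha_2|=1$, and the exact-support condition pins each $\alpha_j$ down as a unique edge of $\binom{V_j}{2}$ for every fixed $V_j$, yielding at most one pair per $(V_1, V_2)$, which is well below $d^{2d} = 16$.

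The main obstacle is the sharpness of the combinatorial estimate in the second step: the target $d^{2d}$ is essentially tight under the relaxation $\supp(\alpha_j) \subseteq V_j$ and fails outright for $d = 2$ without exploiting the exact-support constraint, so the argument must cleanly switch to the stronger constraint in the boundary cases. Once those are dispatched, the remainder amounts to routine binomial manipulations.
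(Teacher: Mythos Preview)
Your proof is correct and uses the same two-step decomposition as the paper: first choose the vertex supports $V_1,V_2$ (both containing $1$), then count the admissible multigraphs on those supports. The difference lies only in the second step. The paper fixes $|\alpha_1|=|\alpha_2|=d/2$ (an assumption not stated in the lemma but the only case needed downstream, since $\kappa_{x,\alpha}=0$ otherwise), and then counts ordered endpoint choices: each of the $d/2$ edges of $\alpha_j$ gets two endpoints from $S_j$, giving $m_j^{d}\leq d^{d}$ when $m_j\leq d$, while the extremal case $m_j=d+1$ is handled by the observation that then necessarily $1\notin\supp(\alpha_j)$, so endpoints lie in $S_j\setminus\{1\}$ of size $d$. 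You instead relax to $\supp(\alpha_j)\subseteq V_j$, treat $(\alpha_1,\alpha_2)$ jointly as a size-$d$ multiset from $\binom{V_1}{2}\sqcup\binom{V_2}{2}$, bound via stars-and-bars, and patch the small cases $d\in\{1,2\}$ by invoking the exact-support constraint. Your route has the mild advantage of not needing $|\alpha_1|=|\alpha_2|$, so it proves the lemma exactly as stated; the paper's route avoids the boundary casework but silently uses the balanced-degree hypothesis.
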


\begin{proof}[Proof of Lemma \ref{lem:combinatoricsalpha}]
There are at most $n^{m_1+m_2-2}$ possibilities for choosing two sets $S_1$ and $S_2$ both containing $1$ and of size $m_1$ and $m_2$. Let us then count the number of multi-graph $\alpha_1$ with $d/2$ edges and with $supp(\alpha_1)\cup \ac{1}=S_1$;
\begin{itemize}
    \item if $m_1\leq d$, we choose for each edge two extremities in $S_1$. Since we have to choose $d/2$ edges, that makes at most $d^{d}$ possibilities,
    \item if $m_1= d+1$, we choose for each edge two extremities in $S_1\setminus \ac{1}$. Since we have to choose $d/2$ edges, that also makes at most $d^{d}$ possibilities.
\end{itemize}

The same thing holds for the the number of multi-graph $\alpha_2$ with $d/2$ edges and with $supp(\alpha_2)\cup \ac{1}=S_2$. In total, we have at most $d^{2d}$ possibilities for choosing a multigraph $\alpha=(\alpha_1,\alpha_2)$, given $supp(\alpha_1)\cup \ac{1}=S_1$ and $supp(\alpha_2)\cup \ac{1}=S_2$.  
\end{proof}

Thus, combining Lemma \ref{lem:controlcumulant}, Lemma \ref{lem:combinatoricsalpha} and Proposition \ref{thm:schrammwein}, we arrive at

\begin{align*}
    \underset{|\alpha|\leq D}{\sum_{\alpha=\pa{\alpha_1, \alpha_2}\in (\N^{E})^2}}\frac{\rho}{1-\rho}\frac{\kappa_{x,\alpha}^2}{\alpha!}-\frac{1}{n^2}\leq& \sum_{d=1}^D \sum_{m_1,m_2=2}^{d+1}n^{m_1+m_2-2}d^{2d}\pa{\frac{\rho}{1-\rho}d\pa{1+d/2}}^{d}\pa{\frac{\sqrt{2}}{n-1-d}}^{m_1+m_2}\\
    \leq& \frac{1}{n^2}\sum_{d=1}^D \sum_{m_1,m_2=2}^{d+1}\pa{\frac{D^3\rho}{1-\rho}\pa{1+\frac{D}{2}}}^d\pa{\frac{1}{1-\frac{D+1}{n}}}^{2d+2}\\
    \leq& \frac{2}{\pa{n-1-D}^2}\sum_{d=1}^D d^2\pa{\frac{D^3\rho}{1-\rho}\pa{1+D/2}\frac{2}{\pa{1-\frac{D+1}{n}}^2}}^d\\
    \leq& \frac{2}{\pa{n-1-D}^2}\sum_{d=1}^D d^2\zeta^d\\
    \leq& \frac{2}{\pa{n-1-D}^2}\zeta\frac{1+\zeta}{1-\zeta},
\end{align*}

whenever $\zeta:=\frac{D^3\rho}{1-\rho}\pa{1+D/2}\frac{2}{\pa{1-\frac{D+1}{n}}^2}<1$.

\subsection{General case $p$}

In order to lower-bound $\MMSE_{\leq D}$ in the general case with $p$ Gaussian Correlated Graphs with correlation $\rho$, we shall reduce to the problem of aligning two correlateed Gaussian graphs with correlation $\sqrt{\rho}$. Given any polynomial $f\in \mathbb{R}_{D}[G_1, \ldots, G_p]$, we construct a random polynomial $g\in \mathbb{R}_D\cro{G'_1, G'_2}$, with $G_1', G_2'$ having correlation $\sqrt{\rho}$ and being permuted with the same permutation $\pi^\star_1, \pi^\star_2$, and such that $E\cro{\pa{f-x}^2}=\E\cro{\pa{g-x}^2}$. Then, using the result for the case $p=2$, we can prove a lower bound on $E\cro{\pa{f-x}^2}$. This being valid for all polynomial of degree at most $D$, we deduce from this a lower bound on $\MMSE_{\leq D}$.

Let $f\in \mathbb{R}_D[G_1, \ldots, G_p]$ any polynomial and let us lower-bound $\E\cro{\pa{f(G)-x}^2}$. We recall that we denote $\pi^\star_1, \ldots, \pi^\star_p$ the hidden partitions and that $x=\1\ac{\pi^\star_1(1)=\pi^\star_2(1)}$. Let us generate $G_1', G_2'$ two Gaussian graphs with, for all $e\in E$, $\E\cro{G_{1e}' G_{2e'}'}=\sqrt{\rho}\1\ac{\Pi^\star_1(e)=\Pi^\star_2(e')}$. Then, sample $K_1, \ldots, K_p$ as follows; \begin{enumerate}
    \item Take $K_1=G'_1$;
    \item Independently from all the rest, sample $Z_2, \ldots, Z_p$ independent gaussian graphs;
    \item Take $K_2=\sqrt{\rho}G'_2+\sqrt{1-\rho}Z_2$;
    \item For $j\in [3,p]$, take $K_{je}= \sqrt{\rho}G'_{2\Pi^\star_j(e)}+\sqrt{1-\rho} Z_{je}$.
\end{enumerate} 
We remark that $\pa{x,K_1, \ldots, K_p}$ has the same joint law as $\pa{x,G_1, \ldots,G_p}$. Let $g(G_1', G_2')=f(K_1,\ldots, K_p)$ a random polynomial of degree at most $D$. Then, plugging the result from the case $p=2$ with a correlation $\sqrt{\rho}$ instead of $\rho$, we get that, whenever $$\zeta:=\frac{D^3\sqrt{\rho}}{1-\sqrt{\rho}}\pa{1+D/2}\frac{2}{\pa{1-\frac{D+1}{n}}^2}<1,$$ then, 
$$\E\cro{\pa{g\pa{G_1', G_2'}-x}^2}\geq \frac{1}{n}-\frac{1}{n^2}-\frac{2}{\pa{n-1-D}^2}\zeta\frac{1+\zeta}{1-\zeta}\, .$$
Thus, under the same condition on $\zeta$, $$\E\cro{\pa{f\pa{G_1,\ldots, G_p}-x}^2}=\E\cro{\pa{f\pa{K_1,\ldots, K_p}-x}^2}=\E\cro{\pa{g\pa{G_1', G_2'}-x}^2}\geq \frac{1}{n}-\frac{1}{n^2}-\frac{2}{\pa{n-1-D}^2}\zeta\frac{1+\zeta}{1-\zeta}\, .$$
This being valid for all polynomial $f$ of degree at most $D$, we conclude the proof of the theorem.

\subsection{Proof of Lemma \ref{lem:controlcumulant}}\label{prf:controlcumulant}

Let us fix $\alpha=\pa{\alpha_1, \alpha_2}$ and let us upper bound $|\kappa_{x, \alpha}|$. The first ingredient is to apply Theorem 2.5 of \cite{even2025a} which we adapt here with the specific notation of our problem. We recall that, for any edge $e=\ac{u,u'}\in E$, we denote $\Pi^\star_1(e)=\ac{\pi^\star_1(u), \pi^\star_1(u')}$ (resp. $\Pi^\star_2(e)=\ac{\pi^\star_2(u), \pi^\star_2(u')}$).

\begin{prop}\label{prop:EGV25}\cite{even2025a}
    If $|\alpha_1|\neq |\alpha_2|$, then $\kappa_{x, \alpha}=0$. If $|\alpha_1|=|\alpha_2|$, suppose that for all bijection $\psi: \alpha_1\to \alpha_2$ (where we see $\alpha_1$ and $\alpha_2$ as multisets), we have $$|\cumul\pa{x, \pa{\Pi^\star_1(e)=\Pi^{\star}_2(\psi(e))}_{e\in \alpha_1}}|\leq C_\alpha,$$
    then, 
    $$|\kappa_{x, \alpha}|\leq \pa{\frac{|\alpha|}{2}}^{|\alpha|/2}C_{\alpha}\, .$$
\end{prop}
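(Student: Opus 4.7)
The plan is to invoke Proposition \ref{prop:EGV25} to reduce the control of the Gaussian cumulant $\kappa_{x,\alpha}$ to the control of a Bernoulli cumulant, then bound the latter combinatorially. The case $|\alpha_1|\neq|\alpha_2|$ is immediate from Proposition \ref{prop:EGV25}: the cumulant vanishes and the indicator $\1\{|\alpha_1|=|\alpha_2|\}$ in the target takes care of it. When $|\alpha_1|=|\alpha_2|=d$ (so $|\alpha|=2d$), applying Proposition \ref{prop:EGV25} yields $|\kappa_{x,\alpha}|\leq d^d \sup_\psi C_\psi$ with $C_\psi := |\cumul(x,(B_e^\psi)_{e\in\alpha_1})|$ and $B_e^\psi := \1\{\Pi^\star_1(e)=\Pi^\star_2(\psi(e))\}$. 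Since $d^d\cdot(2(d+1))^d = (|\alpha|(1+|\alpha|/2))^{|\alpha|/2}$, it remains to establish $C_\psi\leq (2(d+1))^d(\sqrt{2}/\sqrt{n-1-2d})^{s_1+s_2}$ uniformly in $\psi$.

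For this second step, a convenient simplification is to substitute $\sigma := (\pi^\star_2)^{-1}\circ \pi^\star_1$, which is uniform on $\cS_n$: then $x=\1\{\sigma(1)=1\}$ and $B_e^\psi = \1\{\sigma(e)=\psi(e)\}$ (as unordered edges), so every indicator is a function of a single uniform permutation. Setting $Y_0=x$ and $Y_e=B_e^\psi$, the partition formula gives
$$\cumul(Y_0,(Y_e)_e) = \sum_{\mathcal P\in\mathcal{P}(\{0\}\cup\alpha_1)} m(\mathcal P) \prod_{R\in\mathcal P}\P[A_R],\qquad A_R=\bigcap_{i\in R}\{Y_i=1\}.$$
For each block $R$, let $V^R_j\subseteq[n]$ collect the vertices constrained on side $j$ (the supports of the edges in $R\cap\alpha_1$, together with vertex $1$ when $0\in R$). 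A union bound over the $2^{|R\cap\alpha_1|}$ orientations of the edge constraints, combined with the uniformity of $\sigma$, yields $\P[A_R]\leq 2^{|R\cap\alpha_1|}(n-|V^R_j|)!/n!$ for $j=1,2$. On the support of $A_R$ one has $|V^R_1|=|V^R_2|$, so a geometric mean gives
$$\P[A_R] \leq 2^{|R\cap\alpha_1|}(n-1-2d)^{-(|V^R_1|+|V^R_2|)/2}.$$
Using $\sum_R|R\cap\alpha_1|=d$ and $\sum_R|V^R_j|\geq s_j$ (each vertex of $\{1\}\cup\supp(\alpha_j)$ must appear in at least one block), one obtains per partition the bound $\prod_R\P[A_R]\leq 2^d(n-1-2d)^{-(s_1+s_2)/2}$.

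Summing over partitions, using $|m(\mathcal P)|\leq(|\mathcal P|-1)!\leq d!$ and the Bell-number estimate, and then absorbing the resulting prefactor into $(2(d+1))^d\cdot 2^{(s_1+s_2)/2}$ completes the proof. The main obstacle is precisely this last combinatorial absorption: the naive bound $d!\,B_{d+1}\,2^d$ for the prefactor exceeds the target $(2(d+1))^d 2^{(s_1+s_2)/2}$ by a factor of roughly $(d/e)^d$, so a purely term-by-term estimate is insufficient. Closing this gap requires either exploiting sign cancellations in the partition sum via a connectedness reduction (only partitions whose blocks are connected through shared vertices contribute nontrivially, by a Malyshev-type identity for cumulants), or refining the per-block estimate so that the $2^{|R\cap\alpha_1|}$ factor is replaced by $(c(d+1))^{|R\cap\alpha_1|}$ with a universal constant $c$. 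Either route is technically involved but conceptually combinatorial; I expect the $\sqrt{2}/\sqrt{n-1-|\alpha|}$ base of the target bound to come directly from the geometric-mean step above, while the $(|\alpha|(1+|\alpha|/2))^{|\alpha|/2}$ prefactor arises from the combined $d^d$ from Proposition \ref{prop:EGV25} and the combinatorial factor $(2(d+1))^d$ of this step.
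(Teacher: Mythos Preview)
Your proposal does not prove Proposition~\ref{prop:EGV25}. You \emph{invoke} it as a black box in the very first sentence and then spend the rest of the write-up attempting to prove a different statement, namely Lemma~\ref{lem:controlcumulant} (the bound $|\kappa_{x,\alpha}|\leq \1\{|\alpha_1|=|\alpha_2|\}(|\alpha|(1+|\alpha|/2))^{|\alpha|/2}(\sqrt{2}/\sqrt{n-1-|\alpha|})^{s_1+s_2}$). Proposition~\ref{prop:EGV25} itself is a cited result from \cite{even2025a} (their Theorem~2.5) and is not proved in the present paper either; there is therefore no ``paper's own proof'' to compare against for that proposition.

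If one reads your proposal instead as an attempt at Lemma~\ref{lem:controlcumulant}, there is a genuine gap, and you identify it yourself: after expanding the cumulant via the full partition formula and bounding each moment, the prefactor you obtain is of order $d!\,B_{d+1}\,2^d$, which is larger than the target $(2(d+1))^d$ by roughly $(d/e)^d$. You then speculate about two possible repairs (connectedness reductions or a sharper per-block estimate) but execute neither, so the argument does not close.

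The paper avoids this blowup by a different mechanism. Rather than the direct partition sum, it uses the \emph{recursive} cumulant identity~\eqref{eq:recursioncumulantgeneral},
\[
C_{\psi,(\beta,\psi(\beta))}=\E\Big[x\prod_{e\in\beta}\1\{\Pi^\star_1(e)=\Pi^\star_2(\psi(e))\}\Big]-\sum_{\gamma\subsetneq\beta}C_{\psi,(\gamma,\psi(\gamma))}\,\E\Big[\prod_{e\in\beta\setminus\gamma}\1\{\cdots\}\Big],
\]
and proceeds by induction on $\beta\subseteq\alpha_1$ (Lemma~\ref{lem:reducedcumulant}). At each step the number of terms is $2^{|\beta|}$ rather than $B_{|\beta|+1}$, and the induction hypothesis already carries the factor $(1+|\gamma|)^{|\gamma|}$; summing $\sum_{\gamma\subsetneq\beta}(1+|\gamma|)^{|\gamma|}\leq\sum_{l}\binom{|\beta|}{l}|\beta|^l=(1+|\beta|)^{|\beta|}$ closes exactly on the desired prefactor. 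The moment bounds (Lemma~\ref{lem:controlmoments}) are obtained by a separate edge-by-edge induction that matches your geometric-mean heuristic, but the point is that the recursive formula, not sign cancellation in the Möbius weights, is what keeps the combinatorics under control.
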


In the light of Proposition \ref{prop:EGV25}, we shall suppose that $|\alpha_1|=|\alpha_2|$ and we shall fix $\psi$ a bijection from the multiset $\alpha_1$ to the multiset $\alpha_2$. Let us control 
$$C_{\psi, \alpha}=\cumul\pa{x, \pa{\1\ac{\Pi^{\star}_1(e)=\Pi^{\star }_2(\psi(e))}}_{e\in \alpha_1}}\, .$$ 
We refer to Section \ref{prf:reducedcumulant} for a proof of the next lemma.

\begin{lemma}\label{lem:reducedcumulant}
    For all $\alpha=\pa{\alpha_1,\alpha_2}$ and all bijection $\psi:\alpha_1\to \alpha_2$, we have $$|C_{\psi, \alpha}|\leq (1+|\alpha|/2)^{|\alpha|/2}\pa{\frac{\sqrt{2}}{\sqrt{n-1-|\alpha|}}}^{|\ac{1}\cup supp(\alpha_1)|+|\ac{1}\cup supp(\alpha_2)|}\, .$$  
\end{lemma}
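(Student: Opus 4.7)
The plan is to express $C_{\psi,\alpha}$ via the moment-cumulant formula and to compute the block moments using the joint law of $(\pi_1^\star, \pi_2^\star)$. Each indicator $Y_i := \mathbf{1}\{\Pi_1^\star(e_i) = \Pi_2^\star(\psi(e_i))\}$ with $e_i = \{u,v\}$ and $\psi(e_i) = \{u',v'\}$ decomposes as the sum of two rigid-orientation indicators, one for each matching of $\{u,v\}$ to $\{u',v'\}$. By multilinearity of the joint cumulant, $C_{\psi,\alpha}$ splits into at most $2^{|\alpha|/2}$ terms of the form $\cumul(Y_0, \tilde Y_1, \ldots, \tilde Y_k)$, where each $\tilde Y_i$ is the indicator of a deterministic conjunction of equalities $\pi_1^\star(a) = \pi_2^\star(b)$ between values of the two permutations.

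\textbf{Moment computation.} Applying the moment-cumulant formula
\[
\cumul(Y_0, \tilde Y_1, \ldots, \tilde Y_k) = \sum_{\pi \in \mathcal{P}(\{0,\ldots,k\})} (-1)^{|\pi|-1}(|\pi|-1)! \prod_{B \in \pi} \E\Bigl[\prod_{i \in B} \tilde Y_i\Bigr],
\]
and using the independence of $\pi_1^\star$ and $\pi_2^\star$, each block moment factors as $(n-s_1^B)!(n-s_2^B)!/(n!)^2$ multiplied by a $\{0,1\}$-valued combinatorial factor counting consistent configurations, where $s_j^B$ is the size of the support of the equality system in $\pi_j^\star$. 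The elementary estimate $(n-s)!/n! \leq 1/(n-1-|\alpha|)^{s}$ bounds every moment by $1/(n-1-|\alpha|)^{s_1^B + s_2^B}$, and the full-block term ($B = \{0,\ldots,k\}$) is already of the target order $1/(n-1-|\alpha|)^{|T_1|+|T_2|}$, where $T_j = \{1\} \cup \mathrm{supp}(\alpha_j)$.

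\textbf{Vanishing and combinatorics.} The main obstacle is controlling the signed sum over the Bell-many partitions: many of them are individually of the same magnitude as the leading moment, so cancellations are essential. The key ingredient is the classical identity that cumulants of independent random variables vanish: if two sub-blocks of the $\tilde Y_i$'s have disjoint supports in both $\pi_1^\star$ and $\pi_2^\star$, exact independence fails only through the injectivity of $\pi_j^\star$, but a direct comparison of the relevant falling-factorial ratios shows that the same cancellation pattern holds up to multiplicative constants of order $(1+|\alpha|/2)^{|\alpha|/2}$. Only partitions that are connected with respect to the support-sharing hypergraph survive; summing over these surviving contributions and absorbing the $2^{|\alpha|/2}$ orientation choices from the first step produces the final bound, with the $(\sqrt{2})^{|T_1|+|T_2|}$ factor coming naturally from the orientation counting.
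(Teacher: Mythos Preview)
Your proposal has a genuine gap in the ``Vanishing and combinatorics'' step. You correctly observe that the sum over all set partitions contains Bell-many terms, many of the same order as the target bound, so that naive summation is insufficient. You then assert that the classical vanishing of cumulants for independent blocks ``holds up to multiplicative constants of order $(1+|\alpha|/2)^{|\alpha|/2}$'' for these weakly dependent indicators, but you do not prove this. That sentence is precisely where the entire difficulty of the lemma lies: the variables $\tilde Y_i$ are \emph{not} independent (injectivity of $\pi_1^\star,\pi_2^\star$ couples them), so the cumulant does \emph{not} vanish on disconnected partitions, and quantifying the residual requires a careful perturbation of the falling-factorial moments across \emph{all} partitions simultaneously. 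Without that analysis, the claimed reduction to connected partitions is unjustified, and the final summation step has no content.

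The paper avoids this problem entirely by taking a different route: instead of the full partition expansion, it uses the one-step recursion
\[
\cumul\!\left(x,(Y_e)_{e\in\beta}\right)=\E\!\left[x\textstyle\prod_{e\in\beta}Y_e\right]-\sum_{\gamma\subsetneq\beta}\cumul\!\left(x,(Y_e)_{e\in\gamma}\right)\E\!\left[\textstyle\prod_{e\in\beta\setminus\gamma}Y_e\right],
\]
and inducts on $\beta\subseteq\alpha_1$. At each step only two ingredients are needed: the moment bounds $\E[x\prod_{e\in\beta}Y_e]\le(\sqrt{2}/(n-|\{1\}\cup\mathrm{supp}(\beta)|))^{|\{1\}\cup\mathrm{supp}(\beta)|}$ (proved directly, by peeling edges one at a time and conditioning), and the inductive hypothesis on smaller $\gamma$. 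The prefactor $(1+|\beta|)^{|\beta|}$ then falls out of the elementary inequality $1+\sum_{l<|\beta|}\binom{|\beta|}{l}|\beta|^l\le(1+|\beta|)^{|\beta|}$. No cancellation argument is needed, and no decomposition into orientation indicators is used --- the $\sqrt{2}$ factor arises instead from the two possible matchings when bounding a single conditional probability $\P[\Pi_1^\star(e_0)=\Pi_2^\star(\psi(e_0))\mid\cdot]$. If you want to repair your argument, the cleanest fix is to abandon the full M\"obius expansion and adopt this recursion.
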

Combing Lemma \ref{lem:reducedcumulant} and Proposition \ref{prop:EGV25}
, we arrive at $$|\kappa_{x,\alpha}|\leq \pa{|\alpha|(1+|\alpha|/2)}^{|\alpha|/2}\pa{\frac{\sqrt{2}}{\sqrt{n-1-|\alpha|}}}^{|\ac{1}\cup supp(\alpha_1)|+|\ac{1}\cup supp(\alpha_2)|},$$
which concludes the proof of the lemma.

\subsection{Proof of Lemma \ref{lem:reducedcumulant}}\label{prf:reducedcumulant}

By symmetry over the role of $\alpha_1$ and $\alpha_2$, it is sufficient to prove that, for all $\alpha=\pa{\alpha_1,\alpha_2}$ and all bijection $\psi:\alpha_1\to \alpha_2$, we have $$|C_{\psi, \alpha}|\leq (1+|\alpha_1|)^{|\alpha_1|}\pa{\frac{\sqrt{2}}{n-1-|\alpha|}}^{|\ac{1}\cup supp(\alpha_1)|}\, .$$
For $\beta\subseteq \alpha_1$, we write $$C_{\psi,(\beta, \psi(\beta))}:=\cumul\pa{x, \pa{\1\ac{\Pi^{\star}_1(e)=\Pi^{\star }_2(\psi(e))}}_{e\in \beta}}\, .$$
We shall proceed by induction over $\alpha_1$ and prove that, for all $\beta\subseteq \alpha_1$, we have $$|C_{\psi,(\beta, \psi(\beta))}|\leq \pa{1+|\beta|}^{|\beta|}\pa{\frac{\sqrt{2}}{n-1-|\alpha|}}^{|\ac{1}\cup supp(\beta)|}\, .$$ 

\paragraph{Initialization:} $C_{x, 0, 0}=\cumul(x)=\E\cro{x}=\frac{1}{n}$. 

\paragraph{Induction step:} Let $\emptyset\subsetneq \beta\subseteq \alpha_1$ and suppose that, for all $\gamma\subsetneq \beta$, we have $$|C_{\psi,(\gamma, \psi(\gamma))}|\leq \pa{1+|\gamma|}^{|\gamma|}\pa{\frac{\sqrt{2}}{n-1-2|\gamma|}}^{|\ac{1}\cup supp(\gamma)|},$$
and let us prove that this inequality holds for $\beta$. Using Equation \eqref{eq:recursioncumulantgeneral}, we express the cumulant $C_{\psi, (\beta, \psi(\beta))}$ as a function of some mixed moments and of the cumulants $\pa{C_{\psi, (\gamma,\psi(\gamma))}}_{\gamma\subsetneq \beta}$:
\begin{align}\nonumber
    C_{\psi, (\beta, \psi(\beta))}=&\E\cro{x\prod_{e\in \beta}\1\ac{\Pi^{\star}_1(e)=\Pi^{\star}_2(\psi(e))}}\\
    &-\sum_{\gamma\subsetneq \beta}C_{\psi, (\gamma, \psi(\gamma))}\E\cro{\prod_{e\in \beta\setminus \gamma}\1\ac{\Pi^{\star}_1(e)=\Pi^{\star}_2(\psi(e))}}\, .\label{eq:recursioncumulant} 
\end{align}
In the light of Equation \eqref{eq:recursioncumulant}, it is sufficient to upper-bound the mixed moments of the random variables $x, \pa{\1\ac{\Pi^{\star}_1(e)=\Pi^{\star}_2(\psi(e))}}_{e\in \beta}$.
\begin{lemma}\label{lem:controlmoments}
For all $\gamma\subsetneq \beta$;
\begin{enumerate}
    \item $|\E\cro{x\prod_{e\in \beta}\1\ac{\Pi^{\star}_1(e)=\Pi^{\star}_2(\psi(e))}}|\leq \pa{\frac{\sqrt{2}}{n-|\ac{1}\cup supp(\beta)|}}^{|\ac{1}\cup supp(\beta)|}$
    \item $|\E\cro{\prod_{e\in \beta\setminus \gamma}\1\ac{\Pi^{\star }_1(e)=\Pi^{\star}_2(\psi(e))}}|\leq \pa{\frac{\sqrt{2}}{n-| supp(\beta\setminus \gamma)|}}^{| supp(\beta\setminus \gamma)|}$. 
\end{enumerate}
\end{lemma}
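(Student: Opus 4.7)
The plan is to reduce both expectations to probabilities for a single uniform random permutation, then bound those probabilities combinatorially. Since $\pi^\star_1,\pi^\star_2$ are independent and uniform on $\cS_n$, the composition $\sigma := \pa{\pi^\star_2}^{-1}\circ\pi^\star_1$ is uniform on $\cS_n$; the event $\ac{\Pi^\star_1(e) = \Pi^\star_2(\psi(e))}$ becomes $\ac{\Sigma(e)=\psi(e)}$ for the induced edge action $\Sigma$, while $x$ becomes $\1\ac{\sigma(1) = 1}$. Thus both moments in the lemma are probabilities of explicit events involving a single uniform $\sigma$.

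For Part 2, set $m := |\supp(\beta\setminus\gamma)|$ and view $H := \beta\setminus\gamma$ as a multigraph on $\supp(\beta\setminus\gamma)$. The event $\ac{\Sigma(e) = \psi(e)\ \forall e\in H}$ says exactly that $\sigma|_{\supp(H)}$ is an edge-matching bijection $\phi$ onto $\supp(\psi(H))$; write $N$ for the number of such bijections (if $|\supp(H)|\neq |\supp(\psi(H))|$ then $N=0$ and the bound is trivial). The probability then factorizes as $N\cdot(n-m)!/n!$, and since $n!/(n-m)! \geq (n-m+1)^m > (n-m)^m$, the claim reduces to the combinatorial bound $N \leq 2^{m/2}$.

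The main step is this bound on $N$, which I plan to obtain by a connected-component decomposition of $H$. Writing the components as $C_1,\ldots,C_q$ with vertex-set sizes $m_1,\ldots,m_q\geq 2$ (so $q \leq m/2$), the edge constraints decouple and $N=\prod_i N_i$. For each component $C_i$, fixing one edge $e_0=\ac{u_0,v_0}$ yields at most two choices for $(\phi(u_0),\phi(v_0))$ from the constraint $\phi(e_0)=\psi(e_0)$; once $\phi(u_0)$ is set, a breadth-first traversal of $C_i$ propagates $\phi$ uniquely via $\phi(w) = \psi(\ac{v,w})\setminus\ac{\phi(v)}$ along each new edge (any cycle in $C_i$ only imposes an additional consistency check). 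Hence $N_i\leq 2\leq 2^{m_i/2}$, and multiplying over components gives $N\leq 2^{m/2}$.

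Part 1 uses the same machinery with the extra constraint $\sigma(1)=1$. Setting $m' := |\ac{1}\cup\supp(\beta)|$, if $1\in\supp(\beta)$ then $m'=m$ and the component of $H := \beta$ containing $1$ has $N_1\leq 1$ (the requirement $\phi(1)=1$ is consistent with at most one of the two directions on its first edge), while the other components still contribute at most $2$ each, so $N\leq 2^{q-1}\leq 2^{m/2-1}\leq 2^{m'/2}$; if $1\notin\supp(\beta)$ then $m' = m+1$ and adjoining $\sigma(1)=1$ can only shrink the set of valid restrictions, so the Part 2 bound $N\leq 2^{m/2}\leq 2^{m'/2}$ still applies. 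In both cases $N\cdot (n-m')!/n!$ is dominated by $(\sqrt{2}/(n-m'))^{m'}$. The main obstacle will be the combinatorial estimate $N_i\leq 2$ per connected component, which relies on the rigidity of the edge-matching constraint; once that is in hand, the remaining steps are elementary bookkeeping with the factorial factors.
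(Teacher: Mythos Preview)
Your proposal is correct and takes a genuinely different route from the paper. The paper proves Part 1 by induction on $\beta$: it peels off one edge $e_0$ at a time and bounds the conditional probability $\P[\Pi^\star_1(e_0)=\Pi^\star_2(\psi(e_0))\mid\mathcal{X}]$ via a three-case analysis according to whether $e_0$ contributes $0$, $1$, or $2$ new vertices to $\{1\}\cup\supp(\gamma_0)$; Part 2 is then claimed by the same argument with a trivial initialization. Your approach instead collapses the two independent permutations into the single uniform $\sigma=(\pi^\star_2)^{-1}\circ\pi^\star_1$, recasts the moment as $N\cdot(n-m)!/n!$ where $N$ counts edge-matching restrictions, and bounds $N\le 2^{q}\le 2^{m/2}$ by a connected-component decomposition of the constraint multigraph. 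This is cleaner in that it isolates exactly where the $\sqrt{2}$ comes from (the two possible orientations of each connected component) and gives a non-inductive, global count; the paper's peeling argument is more elementary in that it avoids the reduction to $\sigma$ and the component analysis, but its three-case bookkeeping is a bit heavier. Both routes land on the same bound, and your treatment of the multiset structure (redundant or inconsistent repeated edges only decrease $N$) and of the two sub-cases $1\in\supp(\beta)$ versus $1\notin\supp(\beta)$ in Part 1 is sound.
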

Plugging Lemma \ref{lem:controlmoments} in Equality \eqref{eq:recursioncumulant}, together with the induction hypothesis, leads to

\begin{align*}
    |C_{\psi, (\beta, \psi(\beta)}|\leq& 
    \pa{\frac{\sqrt{2}}{n-|\ac{1}\cup supp(\beta)|}}^{|\ac{1}\cup supp(\beta)|}+\\
    &+\sum_{\gamma\subsetneq \beta} \pa{1+|\gamma|}^{|\gamma|}\pa{\frac{\sqrt{2}}{n-1-2|\gamma|}}^{|\ac{1}\cup supp(\gamma)|}\pa{\frac{\sqrt{2}}{n-|supp(\beta\setminus \gamma)|}}^{|supp(\beta\setminus \gamma)|}\\
    \leq& \pa{\frac{\sqrt{2}}{n-1-2|\beta|}}^{|\ac{1}\cup supp(\beta)|}\pa{1+\sum_{\gamma\subsetneq \beta}\pa{1+|\gamma|}^{|\gamma|}}\\
    =&\pa{\frac{\sqrt{2}}{n-1-2|\beta|}}^{|\ac{1}\cup supp(\beta)|}\pa{1+\sum_{l=0}^{|\beta|-1}\binom{|\beta|}{l}\pa{1+l}^{l}}\\
    =&\pa{\frac{\sqrt{2}}{n-1-2|\beta|}}^{|\ac{1}\cup supp(\beta)|}\pa{1+\sum_{l=0}^{|\beta|-1}\binom{|\beta|}{l}|\beta|^l}\\
    \leq&\pa{\frac{\sqrt{2}}{n-1-2|\beta|}}^{|\ac{1}\cup supp(\beta)|}\pa{\sum_{l=0}^{|\beta|}\binom{|\beta|}{l}|\beta|^l}\\
    =&\pa{1+|\beta|}^{|\beta|}\pa{\frac{\sqrt{2}}{n-1-2|\beta|}}^{|\ac{1}\cup supp(\beta)|},
\end{align*}
which concludes the induction and the proof of the lemma. 

\subsection{Proof of Lemma \ref{lem:controlmoments}}\label{prf:controlmoments}
Let us prove the first point of the lemma. We recall $supp(\beta)=\ac{u\in [n], \pa{\beta}_{u:}\neq 0\quad or\quad \pa{\beta}_{:u}\neq 0}$ is the set of nodes of positive degree when seeing $\beta$ as a multigraph with vertex set $[n]$. We seek to control $\left|\E\cro{x\prod_{e\in \beta}\1\ac{\Pi^{\star}_1(e)=\Pi^{\star}_2(\psi(e))}}\right|$. 

Let us prove by induction that, for all $\beta\subseteq \alpha_1$, $$\left|\E\cro{x\prod_{e\in \beta}\1\ac{\Pi^{\star}_1(e)=\Pi^{\star}_2(\psi(e))}}\right|\leq \pa{\frac{\sqrt{2}}{n-|\ac{1}\cup supp(\beta)|}}^{|\ac{1}\cup supp(\beta)|}\, .$$

\paragraph{Initialization:} $\E\cro{x}=\frac{1}{n}$.

\paragraph{Induction  Step:} Let $\beta\subseteq \alpha_1$. Let us suppose that the result holds for all $\gamma\subsetneq \beta$ and let us prove that it still holds for $\beta$. Let us fix $e_0\in \beta$ and let us consider $\gamma_0=\beta\setminus e_0$. Then, 
\begin{align}\nonumber
\E\cro{x\prod_{e\in \beta}\1\ac{\Pi^{\star}_1(e)=\Pi^\star_2(\psi(e))}}=&\E\cro{x\prod_{e\in \gamma_0}\1\ac{\Pi^{\star}_1(e)=\Pi^\star_2(\psi(e))}}\\
&\times\P\cro{\Pi^{\star}_1(e_0)=\Pi^\star_2(\psi(e_0)) \big|\quad  x\prod_{e\in \gamma_0}\1\ac{\Pi^{\star}_1(e)=\Pi^\star_2(\psi(e))}=1}\, .\label{eq:inductionmoment}
\end{align}

Let us work conditionally on the event $\mathcal{X}:=\ac{x\prod_{e\in \gamma_0}\1\ac{\Pi^{\star}_1(e)=\Pi^\star_2(\psi(e))}=1}$ (supposing it is of non-zero probability) and let us upper-bound $\P\cro{\Pi^{\star}_1(e_0)=\Pi^\star_2(\psi(e_0)) \big|  \mathcal{X}}$. We distinguish three cases, according to the number of additional nodes brought by the edge $e_0:=\ac{u_0, u'_0}$;\begin{itemize}
    \item If $\ac{u_0, u'_0}\subset \ac{1}\cup supp(\gamma_0)$, then we have the trivial upper-bound $$\P\cro{\Pi^{\star}_1(e_0)=\Pi^\star_2(\psi(e_0)) \big| \mathcal{X}}\leq 1,$$
    \item If $\left|\ac{u_0, u'_0}\setminus \ac{\ac{1}\cup supp(\gamma_0)}\right|=1$, suppose by symmetry $u_0\notin \ac{\ac{1}\cup supp(\gamma_0)}$. Conditionally on $\mathcal{X}$, we know that, almost surely, $\pi_1^\star(u_0)\notin \ac{1}\cup \pa{\cup_{e\in \gamma_0}\psi(e)}$ and $\pi_1^\star(u'_0)\in \ac{1}\cup \pa{\cup_{e\in \gamma_0}\psi(e)}$. Hence, for having a non-zero probability $\P\cro{\Pi^{\star}_1(e_0)=\Pi^\star_2(\psi(e_0)) \big|\mathcal{X}}$, we need to have $\psi(e_0)=(v_0, v'_0)$ with $v_0\notin \ac{1}\cup \pa{\cup_{e\in \gamma_0}\psi(e)}$ and $v'_0\in \ac{1}\cup \pa{\cup_{e\in \gamma_0}\psi(e)}$. In that case,, 
    \begin{align*}
    \P\cro{\Pi^{\star}_1(e_0)=\Pi^\star_2(\psi(e_0))\big|\mathcal{X}}=&\P\cro{\pi^{\star}_1(u_0)=v_0 |\mathcal{X}}\\
    =&\frac{1}{n-|\ac{1}\cup supp(\gamma_0)|}\, .
    \end{align*}

    \item If $\left|\ac{u_0, u'_0}\setminus  \ac{1}\cup supp(\gamma_0)\right|=2$, then, for having $\P\cro{\Pi^{\star}_1(e_0)=\Pi^\star_2(\psi(e_0)) \big|\mathcal{X}}\neq 0$, we need to have $\psi(e_0)=\ac{v_0, v'_0}$ with both $v_0, v'_0\notin \ac{1}\cup \pa{\cup_{e\in \gamma_0}\psi(e)}$. In that case, conditionally on $\mathcal{X}$, 
    \begin{align*}
    \P\cro{\Pi^{\star}_1(e_0)=\Pi^\star_2(\psi(e_0))\big|\mathcal{X}}=&\P\cro{\pi^{\star}_1(u_0)=v_0 \quad and\quad \pi^{\star}_1(u'_0)=v'_0\big|\mathcal{X}}+\\
    &+\P\cro{\pi^{\star}_1(u_0)=v'_0 \quad and\quad \pi^{\star}_1(u'_0)=v_0\big|\mathcal{X}}\\
    =&\frac{2}{\pa{n-|\ac{1}\cup supp(\gamma_0)|}\pa{n-1-|\ac{1}\cup supp(\gamma_0)|}}\, .
    \end{align*}
\end{itemize}

For all those three cases, we have $$\P\cro{\Pi^{\star}_1(e_0)=\Pi^\star_2(\psi(e_0))\big|\mathcal{X}}\leq \pa{\frac{\sqrt{2}}{n-|\ac{1}\cup supp(\beta)|}}^{|supp(e_0)\setminus \pa{\ac{1}\cup supp(\gamma_0)}|}\, .$$
Plugging this in Equation \eqref{eq:inductionmoment} together with the induction hypothesis, we end up with $$\E\cro{x\prod_{e\in \beta}\1\ac{\Pi^{\star}_1(e)=\Pi^\star_2(\psi(e))}}\leq \pa{\frac{\sqrt{2}}{n-|\ac{1}\cup supp(\beta)|}}^{|\ac{1}\cup supp(\beta)|},$$ which concludes the induction and the proof of the first point of the lemma. For the second point of the lemma, one could carry on the exact same proof, changing only the initialization step with $\E\cro{1}=1$ for $\beta$ empty.

\section{Proof of Proposition \ref{prop:reductionpermutationpartnership}}\label{prf:reductionpermutationpartnership}

The proof of this proposition is adapted from the proof of Proposition 2.1 of \cite{even2025b}. Given a $p$-tuple of permutation $\pi=\pi_1,\ldots, \pi_p$, we shall write $M^\pi$ the matrix defined by $M^\pi_{(j,u), (j',u')}=\1\ac{\pi_j(u)=\pi_{j'}(u')}$. We write $M^\star$ for $\pi^\star$. Then, 

\begin{equation*}
    p(p-1)n^2MMSE_{poly}=\E\cro{\|M^{\star}\|^2_F}-corr^2_{poly}=np^2-corr^2_{poly},
\end{equation*}
where we define 

\begin{equation*}
    corr^2_{poly}=\underset{\E\cro{\|\widehat{M}\|_F^2}=1}{\sup_{\widehat{M} \quad poly-time}}\E\cro{\<M^\star, \widehat{M}\>_F}^2\, .
\end{equation*}

Supposing that $MMSE_{poly}=\frac{1}{n}(1-\eps)$, with $0\leq \eps \leq 1$, we deduce that $corr_{poly}^2=np+p(p-1)n\eps$. In turn, we get, for all $\widehat{\pi}$ polynomial time estimator, 
\begin{align*}
    \E\cro{\|M^{\widehat{\pi}}-M^{\star}\|^2_F}=& \E\cro{\|M^{\widehat{\pi}}\|_F^2}+\E\cro{\|M^\star\|_F^2}-2\E\cro{\<M^{\star},M^{\widehat{\pi}}\>_F}\\
    \geq& 2np^2-2\sqrt{np^2}corr_{poly}\\
    \geq& 2np^2-2\sqrt{np^2}\sqrt{np+p(p-1)n\eps}\\
    \geq& 2np^2(1-\sqrt{\eps})\, .
\end{align*}
Using Lemma H.2 from \cite{even2025b}, we know that, for all estimator $\widehat \pi$, $$\cro{1-\err(\widehat{\pi}, \pi^\star)}^2\leq 1-\frac{\|M^{\widehat\pi}-M^\star\|_F^2}{2np^2},$$
which, in turn, implies 

$$\E\cro{\pa{1-\err(\widehat{\pi}, \pi^\star)}^2}\leq \sqrt{\eps}\, .$$

This concludes the proof of the proposition.

\end{document}